\documentclass{article}
\usepackage{microtype}
\usepackage{graphicx}
\usepackage{subcaption}
\usepackage{booktabs} %

\usepackage{hyperref}

\usepackage[accepted]{icml2026}

\usepackage{amsmath}
\usepackage{amssymb}
\usepackage{mathtools}
\usepackage{amsthm}

\usepackage[capitalize,noabbrev]{cleveref}

\usepackage[utf8]{inputenc} %
\usepackage[T1]{fontenc}    %
\usepackage{url}            %
\usepackage{amsfonts}       %
\usepackage{nicefrac}       %
\usepackage{xcolor}         %
\usepackage{wrapfig}

\usepackage{layouts}
\usepackage{nccmath}
\usepackage{bm}
\usepackage{bbm}
\usepackage{tikz}
\usetikzlibrary{bayesnet,arrows,arrows.meta,fit,positioning,shapes,calc,decorations.pathmorphing,matrix,shapes.geometric,automata,decorations.text,shadows,backgrounds} %
\usepackage{stackengine}
\usepackage{paralist}
\usepackage{tabularx}
\usepackage{scalerel,xparse,textcomp}
\usepackage{etoolbox}
\usepackage{enumitem}
\usepackage{thmtools, thm-restate}
\usepackage{multirow}
\usepackage{changepage} %
\usepackage{adjustbox} %
\usepackage{makecell}
\usepackage{lipsum}
\usepackage{cancel}
\usepackage{multicol}
\usepackage{pifont}
\usepackage{stmaryrd}
\usepackage{trimclip}
\usepackage{tikzducks}
\usepackage[normalem]{ulem}
\newcommand{\stkoutmath}[1]{\ifmmode\text{\sout{\ensuremath{#1}}}\else\sout{#1}\fi}

\usepackage{soul}
\usepackage{titlesec}
\usepackage{setspace}

\usepackage{float}
\usepackage{verbatim}
\usepackage{array}
\usepackage{xspace}

\usepackage[textsize=tiny]{todonotes}

\usepackage{multibib}
\newcites{appendix}{Additional References in Appendix}

\newcommand\indep{\protect\mathpalette{\protect\independenT}{\perp}}
\def\independenT#1#2{\mathrel{\rlap{$#1#2$}\mkern2mu{#1#2}}}
\newcommand{\dsep}{\indep_{\mkern-9.5mu d}}

\makeatletter
\newcommand*\bigcdot{\mathpalette\bigcdot@{.5}}
\newcommand*\bigcdot@[2]{\mathbin{\vcenter{\hbox{\scalebox{#2}{$\m@th#1\bullet$}}}}}
\makeatother



\makeatletter

\setbox0\hbox{$\xdef\scriptratio{\strip@pt\dimexpr
    \numexpr(\sf@size*65536)/\f@size sp}$}

\newcommand{\scriptveryshortarrow}[1][3pt]{{%
    \hbox{\rule[\scriptratio\dimexpr\fontdimen22\textfont2-.2pt\relax]
               {\scriptratio\dimexpr#1\relax}{\scriptratio\dimexpr.4pt\relax}}%
   \mkern-4mu\hbox{\let\f@size\sf@size\usefont{U}{lasy}{m}{n}\symbol{41}}}}

\newcommand{\dashdash}{ \textrm{---} }
\def\circarrow{\hbox{$\circ$}\kern-1.5pt\hbox{$\rightarrow$}}
\def\arrowcirc{\hbox{$\leftarrow$}\kern -1pt\hbox{$\circ$}}
\def\circcirc{\hbox{$\circ$}\kern-1.5pt\hbox{\rule[0.5ex]{0.8em}{0.5pt}}\kern -1pt\hbox{$\circ$}}
\def\circdash{\hbox{$\circ$}\kern-1.5pt\hbox{$\textrm{---}$}}
\makeatother

\theoremstyle{plain}

\newtheorem{corollary}{Corollary}

\theoremstyle{remark}

\theoremstyle{definition}
\newtheorem{plainalgorithm}{Algorithm}
\newtheorem{definition}{Definition}

\newtheorem{example}{Example}
\AtBeginEnvironment{example}{%
  \pushQED{\qed}%
}
\AtEndEnvironment{example}{\popQED\endexample}

\crefformat{section}{\S#2#1#3} %
\crefformat{subsection}{\S#2#1#3}
\crefformat{subsubsection}{\S#2#1#3}

\DeclarePairedDelimiterX\braket[2]{\langle}{\rangle}{#1 \delimsize\vert #2}
\renewcommand{\bibname}{References}

\newcommand{\Ecal}{\mathcal{E}}
\newcommand{\Gcal}{\mathcal{G}}

\newcommand{\pa}{\operatorname{pa}}
\newcommand{\ch}{\operatorname{ch}}
\newcommand{\anc}{\operatorname{an}}
\newcommand{\des}{\operatorname{de}}

\definecolor{mplblue}{rgb}{0.12156863, 0.46666667, 0.70588235}

\icmltitlerunning{Causal Modeling of Selection in Evolution}

\begin{document}
\twocolumn[
\icmltitle{Causal Modeling of Selection in Evolution}
\icmlsetsymbol{equal}{*}

  \begin{icmlauthorlist}
    \icmlauthor{Haoyue Dai}{cmu}
    \icmlauthor{Zeyu Tang}{stanford}
    \icmlauthor{Peter Spirtes}{cmu}
    \icmlauthor{Kun Zhang}{cmu,mbzuai}
  \end{icmlauthorlist}
    \icmlaffiliation{cmu}{Carnegie Mellon University}
    \icmlaffiliation{stanford}{Stanford University}
    \icmlaffiliation{mbzuai}{Mohamed bin Zayed University of Artificial Intelligence}

  \icmlkeywords{Machine Learning, ICML}

  \vskip 0.3in
]

\icmlcorrespondingauthor{Kun Zhang}{kunz1@cmu.edu}
\printAffiliationsAndNotice{}

\begin{abstract}

Understanding potential selection in data is crucial for causal discovery; we argue that ``selection'' in common narratives takes two forms, which we term \textit{static} and \textit{evolutionary} selection, respectively. Static selection refers to a one-shot filtering process where observed data consist of a \emph{subset} of the population of interest, as in survey volunteer bias. Evolutionary selection, in contrast, operates through repeated rounds of differential fitness in reproduction, where observed data constitute the latest \textit{generation} shaped by a historical trajectory, as in immune adaptation, antibiotic resistance, and social norm emergence. Existing methods largely conflate these two forms and rely on an identical graphical model of selection. We show that this model is valid for static settings but fails to characterize data under evolution, yielding false discovery results. To address this, we introduce a new model that specifically characterizes evolutionary selection, and develop a sound and complete procedure for identifying such models from data across one or multiple environments or generations. Experimental results validate the method's ability to uncover the relevant mechanisms underlying evolution from data.

\end{abstract}

\section{Introduction}
\label{sec:introduction}

At the core of scientific inquiry lies causal discovery, the task of identifying causal relations from data~\citep{spirtes2000causation,pearl2009models}. This task is challenging, in part due to the fact that observed dependencies may not arise from causation alone. One key source of such dependencies, which is the focus of this paper, is selection, where data are preferentially observed through some systematic, but often unobserved, mechanisms.

\looseness=-1
Selection plays essential roles in causal analysis. In some scenarios, it introduces a bias that must be accounted for. For instance, in political surveys, recruitment methods, e.g., by mail or by phone, may systematically favor individuals with certain education or income levels, leading to incorrect conclusions if selection were ignored~\citep{groves2006nonresponse}. In other scenarios, instead of a bias to be corrected, the selection is itself an integral component of the data-generating process that needs to be discovered. For instance, in microbial experiments, observed dependencies between bacterial traits may arise from preferential proliferation of certain trait combinations rather than direct causal relations. Understanding such selection mechanisms is therefore crucial for downstream tasks, such as designing treatments~\citep{dempster2021chronos}.

These considerations have motivated a substantial literature on causal modeling of selection. For causal discovery, one seminal contribution is the Fast Causal Inference (FCI) algorithm~\citep{spirtes1995causal}, which exploits conditional independence (CI) constraints in the presence of latent confounding and selection bias. Subsequent work has extended this framework to local~\citep{versteeg2022local}, sequential~\citep{zheng2024detecting}, and interventional settings~\citep{dai2025when}, as well as to additional parametric assumptions~\citep{zhang2016identifiability,kaltenpoth2023identifying}. For causal inference, the recovery of causal effects from selection bias has also been studied extensively~\citep{bareinboim2012controlling,bareinboim2014recovering,correa2019identification}, with a close connection to the data missingness problem~\citep{mohan2013graphical}. A detailed review is provided in~\cref{app:related_work}.

\looseness=-1
Despite their diversity in objectives and techniques, these works share a common graph modeling paradigm, illustrated in \cref{fig:example_of_evolutionary_selection_model}a. In this paradigm, additional variables representing selection are directly added to the original causal graph over the observed variables. These selection variables are modeled as effects of the factors governing selection mechanisms, and are typically binary indicators of whether a data point satisfies the corresponding selection criteria. Observed data are then interpreted as the remaining variables conditioned on specific values of the selection variables (i.e., being selected). In this way, constraints induced by the graph under conditioning can be used~\citep{spirtes1995causal}.

However, does this simple and widely used paradigm always correctly capture data under selection? To answer this question, let us reconsider the nature of selection itself. As we argue next, selection in common narratives actually takes two distinct forms. Distinguishing between them is crucial for examining when this paradigm applies, and when it does not.\looseness=-1

The first form, which we term \textit{static selection}, corresponds to selection occurring in a single shot, where a \textit{subpopulation} is selected from the global population of interest. Static selection typically arises during data collection, as in the political survey example discussed above, and is often known as non-compliance and volunteer bias~\citep{hernan2004structural}. In such cases, the standard graphical models (e.g., \cref{fig:example_of_evolutionary_selection_model}a) indeed provide appropriate representations. For clarity, we refer to these models hereafter as \textit{static selection models}.\looseness=-1

\looseness=-1
The second form, however, is more subtle. Turning to the other microbial example discussed above, the observed dependencies of certain traits do not arise from any one-shot selection at data collection. Instead, bacteria carrying advantageous traits exhibit higher proliferative fitness, producing more offspring that inherit these traits and are repeatedly favored over time. After multiple generations to provide sufficient material for sequencing, these fittest bacteria come to dominate the observed population. In this process, selection manifests through multiple consecutive rounds and is coupled with reproduction, rather than through a single event.

We refer to this second form as \textit{evolutionary selection}. Unlike static selection, the observed data do not correspond to a subpopulation of the global population at any fixed timestamp. Instead, they represent a \textit{generation} of individuals whose composition has been shaped by a historical trajectory of selections and reproductions. Each generation both reflects the confluence of past selection and serves as the substrate for future selection. Canonical examples include natural selection in biological evolution, such as immune adaptation, antibiotic resistance, and the well-known increase of dark-colored moths in Manchester due to air pollution during the Industrial Revolution~\citep{kettlewell1955selection}. More broadly, evolutionary selection also arises beyond biology whenever generational processes are present, for example in how social norms gradually and spontaneously emerge, and in how urban neighborhoods undergo gentrification with demographic shifts~\citep{boyd1988culture}.  %

Then, can the existing paradigm of static selection models still correctly capture data under evolutionary selection? At first glance, the answer may seem affirmative: Consider again the microbial example. One might view bacterial traits by $X_1, X_2, X_3$ and fitness by $S$ in~\cref{fig:example_of_evolutionary_selection_model}a, and find that this static model is consistent with the dependence $X_2 \not\indep X_3$ observed in data. It may thus appear that the statistical information (or at least the CI constraints) in data, even after evolutionary selection, can still be equivalently represented by a static selection model. Indeed, such static models have often been taken for granted in scenarios where selection is in fact evolutionary~\citep{versteeg2022local,luo2025gene}.\looseness=-1

Interestingly, however, this intuition is incorrect. As we will show in~\cref{sec:motivation}, static selection models fail to capture data under evolutionary selection and can thus lead to false discovery results. This is because reproduction, when coupled with selection, induces additional dependencies that cannot be characterized by a one-shot selection event, calling for another causal model that explicitly represents such processes.

Yet, to the best of our knowledge, no such model currently exists. While evolutionary biology has developed a rich collection of mathematical models such as Fisher's principle~\citep{fisher1930genetical}, fitness landscape~\citep{wright1932roles}, and modern synthesis~\citep{mayr1998evolutionary}, they are not formulated within causal frameworks. A more related literature is called evolutionary or reciprocal causation~\citep{oyama2003cycles,uller2019evolutionary}, but it focuses primarily on philosophical aspects rather than formal graphical treatments. As a result, despite the ubiquity of evolutionary selection across disciplines, there remains no principled model or method for discovering its relevant mechanisms from data.\looseness=-1

\looseness=-1
Bridging this gap is the goal of this work. Towards this goal, the remainder of this paper is organized as follows. In~\cref{sec:motivation}, we formally define the \textit{evolutionary selection model}, a causal graphical model to represent data-generating processes driven by evolutionary selection. We use it to illustrate why existing static selection models do not suffice in evolutionary settings. In \cref{sec:model}, we characterize the model's Markov properties, and develop a sound and complete identification procedure using CI constraints in data under evolutionary selection, with particular attention to result interpretation. In \cref{sec:approach}, we generalize to heterogeneous data such as observations from multiple environments or generations. We show conceptually how different selection mechanisms shape different communities via evolution, and algorithmically how such heterogeneity improves model identifiability. In \cref{sec:experiments}, we validate the proposed method through simulations and real-world data across biological and social studies, showing its ability to uncover relevant mechanisms underlying evolution. We conclude in~\cref{sec:conclusions} with a discussion of limitations.

\looseness=-1

\section{Evolutionary Selection Model}
\label{sec:motivation}

In this section, we introduce the new evolutionary selection model, a causal graphical model to represent data-generating processes underlying evolutionary selection, and then use it to illustrate why existing static models fail in such settings.

\begin{figure*}[t]
  \centering
  \begin{tikzpicture}[->,>=stealth,shorten >=1pt,auto,semithick,inner sep=0.2mm,scale=0.8]
    \tikzstyle{Lstyle} = [draw, rectangle, minimum size=0.7cm, fill=gray!20]
    \tikzstyle{Xstyle} = [draw, circle, minimum size=0.8cm]
    \tikzstyle{Sstyle} = [draw, circle, minimum size=0.8cm, double, double distance=0.04cm, preaction={pattern=dots, pattern color=gray!50}]
    \tikzset{genbox/.style={draw, dashed, rounded corners=6pt, line width=2pt, gray!70, inner sep=5pt}}

    \def\xgapv{1.6}
    \def\tgapshort{5}
    \def\tgaplong{6}
    \def\xegaph{1.4}
    \def\xegapv{0.9}
    \def\xsgaph{-1.4}
    \def\xsgapv{0}
    \node[Lstyle] (X10) at (0, 0) {\small $X_1^{(0)}$};
    \node[Lstyle] (X20) at (0, -\xgapv) {\small $X_2^{(0)}$};
    \node[Lstyle] (X30) at (0, -2*\xgapv) {\small $X_3^{(0)}$};
    \node[Lstyle] (E10) at (\xegaph, \xegapv) {\small $\epsilon_1^{(0)}$};
    \node[Lstyle] (E20) at (\xegaph, \xegapv-\xgapv) {\small $\epsilon_2^{(0)}$};
    \node[Lstyle] (E30) at (\xegaph, \xegapv-2*\xgapv) {\small $\epsilon_3^{(0)}$};
    \node[Sstyle] (S0) at (\xsgaph, \xsgapv-2*\xgapv) {\small $S^{(0)}$};

    \node[Lstyle] (X11) at (\tgapshort, 0) {\small $X_1^{(1)}$};
    \node[Lstyle] (X21) at (\tgapshort, -\xgapv) {\small $X_2^{(1)}$};
    \node[Lstyle] (X31) at (\tgapshort, -2*\xgapv) {\small $X_3^{(1)}$};
    \node[Lstyle] (E11) at (\tgapshort+\xegaph, \xegapv) {\small $\epsilon_1^{(1)}$};
    \node[Lstyle] (E21) at (\tgapshort+\xegaph, \xegapv-\xgapv) {\small $\epsilon_2^{(1)}$};
    \node[Lstyle] (E31) at (\tgapshort+\xegaph, \xegapv-2*\xgapv) {\small $\epsilon_3^{(1)}$};
    \node[Sstyle] (S1) at (\tgapshort+\xsgaph, \xsgapv-2*\xgapv) {\small $S^{(1)}$};

    \node[Xstyle] (X1T) at (\tgaplong+\tgapshort, 0) {\small $X_1^{(T)}$};
    \node[Xstyle] (X2T) at (\tgaplong+\tgapshort, -\xgapv) {\small $X_2^{(T)}$};
    \node[Xstyle] (X3T) at (\tgaplong+\tgapshort, -2*\xgapv) {\small $X_3^{(T)}$};
    \node[Lstyle] (E1T) at (\tgaplong+\tgapshort+\xegaph, \xegapv) {\small $\epsilon_1^{(T)}$};
    \node[Lstyle] (E2T) at (\tgaplong+\tgapshort+\xegaph, \xegapv-\xgapv) {\small $\epsilon_2^{(T)}$};
    \node[Lstyle] (E3T) at (\tgaplong+\tgapshort+\xegaph, \xegapv-2*\xgapv) {\small $\epsilon_3^{(T)}$};

    \node[inner sep=5pt] (many1) at (0.5*\tgaplong +\tgapshort+\xegaph, \xegapv) {$\cdots$};
    \node[inner sep=5pt] (many2) at (0.5*\tgaplong +\tgapshort+\xegaph, \xegapv-\xgapv) {$\cdots$};
    \node[inner sep=5pt] (many3) at (0.5*\tgaplong +\tgapshort+\xegaph, \xegapv-2*\xgapv) {$\cdots$};

    \draw[->] (X10) -- (X20);
    \draw[bend right=0] (X20) to (S0);
    \draw[->] (X30) -- (S0);
    \draw[->] (E10) -- (X10);
    \draw[->] (E20) -- (X20);
    \draw[->] (E30) -- (X30);
    \draw[->] (X11) -- (X21);
    \draw[bend right=0] (X21) to (S1);
    \draw[->] (X31) -- (S1);
    \draw[->] (E11) -- (X11);
    \draw[->] (E21) -- (X21);
    \draw[->] (E31) -- (X31);
    \draw[->] (X1T) -- (X2T);
    \draw[->] (E1T) -- (X1T);
    \draw[->] (E2T) -- (X2T);
    \draw[->] (E3T) -- (X3T);
    \draw[->] (E10) -- (E11);
    \draw[->] (E20) -- (E21);
    \draw[->] (E30) -- (E31);
    \draw[->] (E11) -- (many1);
    \draw[->] (E21) -- (many2);
    \draw[->] (E31) -- (many3);
    \draw[->] (many1) -- (E1T);
    \draw[->] (many2) -- (E2T);
    \draw[->] (many3) -- (E3T);

    \tikzstyle{Lstylesmall} = [draw, rectangle, minimum size=0.5cm, fill=gray!20]
    \tikzstyle{Xstylesmall} = [draw, circle, minimum size=0.6cm]
    \tikzstyle{Sstylesmall} = [draw, circle, minimum size=0.56cm, double, double distance=0.04cm, preaction={pattern=dots, pattern color=gray!50}]

    \def\staticH{-5}
    \def\staticV{-2}
    \def\staticwidth{1.4}
    \def\staticheight{1.1}
    \node[Sstylesmall] (S) at (\staticH, \staticV) {$S$};
    \node[Xstylesmall] (X1) at (\staticH-\staticwidth*1.5,\staticV+\staticheight) {$X_1$};
    \node[Xstylesmall] (X2) at (\staticH-\staticwidth*0.5, \staticV+\staticheight) {$X_2$};
    \node[Xstylesmall] (X3) at (\staticH+\staticwidth*0.5, \staticV+\staticheight) {$X_3$};
    \draw[->] (X1) -- (X2);
    \draw[->] (X2) -- (S);
    \draw[->] (X3) -- (S);

    \draw[-, line width=0.5 pt] (-3.55,-3.95) -- (-3.55,1.6);

    \node at (\staticH-\staticwidth*1.5-0.05, 1.1) {(a)};
    \node at (-2.9, 1.1) {(b)};

    \begin{pgfonlayer}{background}
    \node[genbox, fit=(X10)(X20)(X30)(E10)(E20)(E30)(S0)] {};
    \node[genbox, fit=(X11)(X21)(X31)(E11)(E21)(E31)(S1)] {};
    \node[genbox, fit=(X1T)(X2T)(X3T)(E1T)(E2T)(E3T)] {};
    \end{pgfonlayer}

  \end{tikzpicture}
  \vspace{-0.2em}
  \caption{An illustrative comparison of the two selection models. Throughout, white circles indicate observed variables, dotted double circles indicate selection variables, and gray squares indicate unobserved variables. \textbf{(a)} An example causal graph, denoted $\Gcal$, of the commonly used \textit{static selection models}. \textbf{(b)} The corresponding \textit{evolutionary selection model}, denoted $\Gcal^{(T)}$, as defined in~\cref{def:evolutionary_selection_model}, which represents the data-generating process of the observed variables $X^{(T)}$ under evolutionary selection. Dashed boxes indicate \textit{generations}.\looseness=-1}
  \vspace{-0.8em}
  \label{fig:example_of_evolutionary_selection_model}
\end{figure*}

\textbf{Notations on graphs.} \ \
In a directed acyclic graph (DAG) $\Gcal$, let $V(\Gcal)$ be its vertex set. For any vertices $a, b$, we say $a$ is a \textit{parent} of $b$ and $b$ is a \textit{child} of $a$, denoted by $a\in \pa_\Gcal(b)$ and $b\in \ch_\Gcal(a)$, when $a\rightarrow b$ is an edge in $\Gcal$, written $a\rightarrow b \in \Gcal$; $a$ is an \textit{ancestor} of $b$ and $b$ is a \textit{descendant} of $a$ if $a=b$ or there is a directed path $a \rightarrow \cdots \rightarrow b$ in $\Gcal$, denoted by $a\in \anc_\Gcal(b)$ and $b\in \des_\Gcal(a)$. These notations extend to sets: e.g., for any vertex set $A$, $\anc_\Gcal(A) \coloneqq \bigcup_{a \in A} \anc_\Gcal(a)$. We say an ordering $\pi$ of $V(\Gcal)$ is \textit{topological} to $\Gcal$, when $\pi(a)<\pi(b)$ implies $b\not\in\anc_\Gcal(a)$. When we denote a set $A \subseteq V(\Gcal)$ as \textit{selection variables}, for any $b\not\in A$, we say $b$ is \textit{involved in selection} when $b \in \anc_\Gcal(A)$; specifically, $b$ is \textit{directly selected} when $b \in \pa_\Gcal(A)$, and otherwise \textit{indirectly selected}.\looseness=-1

\textbf{Notations on evolution.} \ \
Let $X=(X_1,\dots,X_d)$ be the observable variables of interest, such as phenotypic traits of a species. Observed data typically correspond to a \textit{generation} of this species at a certain evolutionary time point; we explicitly denote by $X^{(t)} = (X_1^{(t)},\dots,X_d^{(t)})$ the traits of the individuals in the $t$-th generation (i.e., after $t$ rounds of reproduction). To avoid confusion with ``parent'' and ``child'' in graphs, we use the terms \textit{progenitor} and \textit{offspring} to describe the lineage relations between generations.

To model evolution, it is essential to understand how generations are causally related. When we say ``taller individuals tend to have taller offspring,'' this does not imply a direct causal effect from a progenitor's height $X_i^{(t)}$ to the offspring's height $X_i^{(t+1)}$. Instead, the progenitor's height is affected by the related genes, which are inherited by the offspring and, in turn, affect their height. Such heritable factors, like genes in biological settings and family resources in social settings, are typically unobserved but play a key role in evolution. We thus denote these factors governing the corresponding variables $X^{(t)}$ by $\epsilon^{(t)} = (\epsilon_1^{(t)},\dots,\epsilon_d^{(t)})$, which, from a structural causal model view, serve as the exogenous noise terms for the $t$-th generation. Also, note that when $\epsilon_i^{(t)}$ directly affects $\epsilon_i^{(t+1)}$ during inheritance, their exact values may not be the same due to processes such as mutation.

With the above notations and motivation in place, we now formally define the evolutionary selection model, as follows:

\begin{definition}[\textbf{Evolutionary selection model}]
\label{def:evolutionary_selection_model}
Let $\Gcal$ be a DAG with vertices $X \cup \{S\}$. The \textit{evolutionary selection model} for a generation $T\geq 1$ is a DAG $\Gcal^{(T)}$ with vertices:
\begin{itemize}[noitemsep,topsep=0pt,left=5pt]
    \item $X^{(0)} \cup \cdots \cup X^{(T)}$, the trait variables;
    \item $\epsilon^{(0)} \cup \cdots \cup \epsilon^{(T)}$, the exogenous heritable factors;
    \item $\{S^{(0)},\dots,S^{(T-1)}\}$, the binary reproduction indicators, where $S^{(t)}=1$ indicates that an individual in the $t$-th generation has offspring, and $0$ otherwise.
\end{itemize}

$\Gcal^{(T)}$ consists of the following four types of edges:
\begin{itemize}[noitemsep,topsep=0pt,left=5pt]
    \item $\{X_i^{(t)} \rightarrow X_j^{(t)}: \forall X_i \rightarrow X_j \in \Gcal, \text{ and }t = 0, \dots, T\}$, the direct causal effects among traits within generations;
    \item $\{X_i^{(t)} \rightarrow S^{(t)}: \forall X_i \rightarrow S \in \Gcal, \text{ and }t = 0, \dots, T-1\}$, effects of traits on previous generations' reproduction;\looseness=-1
    \item $\{\epsilon_i^{(t)} \rightarrow X_i^{(t)}: \forall i = 1, \dots, d, \text{ and }t = 0, \dots, T\}$, the governing mechanisms of exogenous factors on traits;
    \item $\{\epsilon_i^{(t)} \rightarrow \epsilon_i^{(t+1)}: \forall i = 1, \dots, d, \text{ and }t = 0, \dots, T-1\}$, the inheritance and mutation of exogenous factors.
\end{itemize}

Accordingly, the observed data of the $T$-th generation is described by the distribution $p(X^{(T)} | S^{(0)}= \cdots = S^{(T-1)}=1)$, which we abbreviate as $p(X^{(T)} | S^{(<T)}=1)$.\looseness=-1

\end{definition}

An illustrative example of such evolutionary selection models (\cref{def:evolutionary_selection_model}) is shown in~\cref{fig:example_of_evolutionary_selection_model}b. In what follows, we explain various aspects considered in this model design.

\textbf{What is $\Gcal$ used in definition?} \ \
The DAG $\Gcal$ corresponds to a static selection model that one might consider as a first thought, where an additional variable $S$ intended for fitness is directly added to the original causal graph among $X$. Although, as we will show shortly, this static graph $\Gcal$ does not correctly capture the true data-generating process, it remains useful as a simple, intuitive summary for presentation. We thus still refer to it, but solely for presentation ease.\looseness=-1

\textbf{What do the selection variables represent?} \ \
Each variable $S^{(t)}$ is a binary indicator of whether an individual in the $t$-th generation reproduces successfully, i.e., produces offspring and thereby contributes their heritable factors $\epsilon^{(t)}$ to the next generation. Note that individual survival is also subsumed by reproduction success. Direct edges from $X^{(t)}$ to $S^{(t)}$ capture how traits affect this outcome, such as how physical strengths affect mating in animals, or how content and form of ideas affect their circulation in cultural exchange. Finally, an individual exists in generation $T$ only if all preceding generations reproduced successfully, i.e., $S^{(<T)}=1$.\looseness=-1

\textbf{But the offspring count does not seem to be represented?} \ \
This is because we only focus on data distributions rather than population sizes. While real-world reproduction may yield zero, one, or multiple offspring, our model abstracts away from this, and defines reproduction success as producing exactly \textit{one} offspring: since an individual's multiple offspring can be viewed as i.i.d. draws from $X^{(t+1)}$ given this individual's value of $X^{(t)}$, the offspring count can be distributionally equivalently absorbed as the probability of having one offspring, $p(S^{(t)} = 1 \mid X^{(t)})$, via reweighting.\looseness=-1

\textbf{Why is there only a singleton $S$ in $\Gcal$?} \ \
This is merely for expository clarity. While in static selection settings, it is natural to have multiple selection variables to represent multiple independent selection events, we cannot think of a justification for doing so in the evolutionary setting, where reproduction is the single event to model. This however does not limit generality: the characterization results we establish next still extend when $\Gcal$ has multiple selection variables; please see \cref{subsec:proof_of_clique_augmented_dag_representation} for details.

\textbf{Can the causal and selection mechanisms change across generations?} \ \
Yes. More specifically, we allow, but do not require these mechanisms to change. Such change is common in real-world scenarios, as fitness often depends on the relative prevalence of traits in the population rather than their exact values. A more striking example is the Manchester moths, where dark coloration was favored during industrial pollution but became disadvantageous once the air was cleaned. Importantly, even if these mechanisms were fixed across generations, the Markov properties we establish next would still hold, as will be formalized in~\cref{coro:same_ci_under_invariance_constraints}.

\textbf{Why do heritable factors operate independently?} \ \
This simplification is for the purpose of identifiability, and we do recognize it as a limitation of our model. Think of biological reality: genes can regulate each other, and a trait may also be governed by multiple genes. However, explicitly modeling these complexities would eliminate most CI constraints usable in data, and requires stronger parametric latent-variable models that go beyond the focus of this work. At the same time, this simplification is also standard in literature~\citep{otsuka2019ontology}, and can be partly justified by noting that $\epsilon^{(T)}$ are not statistically independent anymore once conditioned on $S^{(<T)}=1$. This aligns with the view that gene dependencies may arise not from regulations, but from coordinated adaptation on the fitness landscape~\citep{bank2022epistasis}.\looseness=-1

\textbf{Finally, what are the data at hand?} \ \
Though we will generalize to heterogeneous settings in~\cref{sec:approach}, for now, data are just considered as i.i.d. samples of one generation, distributed by $p(X^{(T)} |S^{(<T)} = 1)$. No time-sequential or lineage-tracked data are required. Throughout, we assume causal sufficiency w.r.t. $X^{(T)}$, i.e., all variables in $X^{(T)}$ are observed.

Having defined the evolutionary selection model, we now compare it with the static selection model to illustrate why the latter fails to capture crucial information such as CIs in the observed data. Let $\Gcal$ be the static graph and $\Gcal^{(T)}$ the corresponding evolutionary graph, shown in~\cref{fig:example_of_evolutionary_selection_model}. In $\Gcal^{(T)}$, $X_1^{(T)}$ and $X_3^{(T)}$ are d-connected given $X_2^{(T)}$ and $S^{(<T)}$ via an open path through $\epsilon_1^{(T)}$ and $\epsilon_3^{(T)}$, and hence no corresponding CI holds in data. However, this CI is falsely implied by the d-separation $X_1\dsep X_3 | X_2, S$ in $\Gcal$. If one were instead to fit a static model to data, false conclusions such as a direct causal relation between $X_1$ and $X_3$ or both being directly selected or directly via $X_2$ could be made.

This discrepancy arises because evolutionary selection does not occur as a one-shot filter, but repeatedly across all preceding generations. Although these earlier generations are unobserved, they leave statistical footprints in the data at the present generation via inheritance. More precisely, evolution induces additional conditional dependencies that are absent under static models, as formalized below:\looseness=-1

\begin{restatable}[\textbf{Additional dependencies induced by evolution}]{lemma}{LEMADDITIONALDEPENDENCIES}\label{lem:additional_dependencies_by_evolution}
Let $\Gcal$ be a DAG over vertices $X \cup \{S\}$, and $\Gcal^{(T)}$ be its corresponding evolutionary graph for some $T \geq 1$. For any disjoint $A,B,C\subseteq X$, if the d-separation $A^{(T)} \dsep B^{(T)} |$ $ C^{(T)}, S^{(<T)}$ holds in $\Gcal^{(T)}$, then $A \dsep B | C, S$ must also hold in $\Gcal$. However, the converse does not generally hold.
\end{restatable}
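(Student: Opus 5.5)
Our plan is to prove the forward direction by contraposition: if $A \dsep B \mid C, S$ fails in $\Gcal$, we lift a d-connecting path of $\Gcal$ to a d-connecting path in $\Gcal^{(T)}$ given $C^{(T)} \cup S^{(<T)}$, living entirely inside generation $T-1$ and generation $T$. The natural embedding is this. Edges among $X$ in $\Gcal$ are copied into generation $T-1$ ($X_i^{(T-1)} \to X_j^{(T-1)}$), and edges $X_i \to S$ become $X_i^{(T-1)} \to S^{(T-1)}$. The endpoints and the conditioning set, however, must be read in generation $T$. So we connect each $X_k^{(T)}$ to $X_k^{(T-1)}$ through the heritable chain $X_k^{(T)} \leftarrow \epsilon_k^{(T)} \leftarrow \epsilon_k^{(T-1)} \to X_k^{(T-1)}$. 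This chain is open as long as $\epsilon_k^{(T)}$ and $\epsilon_k^{(T-1)}$ are not conditioned on, which holds because only $C^{(T)}$ and $S^{(<T)}$ are conditioned on.

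\textbf{Step 1 (path choice).} Take a d-connecting path $\pi$ between some $a\in A$ and $b\in B$ given $C\cup\{S\}$ in $\Gcal$. By a standard argument we may assume each collider on $\pi$ is either $S$ itself or has a descendant in $C\cup\{S\}$. Every non-collider lies outside $C\cup\{S\}$.

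\textbf{Step 2 (translation).} We build a walk in $\Gcal^{(T)}$ node by node. A non-collider $X_k$ on $\pi$ is mapped to $X_k^{(T-1)}$; since $X_k\notin C$, it is not in $C^{(T)}$, and it is unconditioned in generation $T-1$ anyway. The collider $S$ maps to $S^{(T-1)}$, which is conditioned on. The endpoints are $a^{(T)}$ and $b^{(T)}$, attached by the heritable chains above.

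The delicate case is a collider $X_k$ on $\pi$ that is open only because it has a descendant in $C\cup\{S\}$. In generation $T-1$ its copy $X_k^{(T-1)}$ has a descendant in $S^{(T-1)}$ when $X_k\in\anc_\Gcal(S)$, and that descendant is conditioned on. If instead the opening descendant lies in $C$, then $X_k^{(T-1)}$ has no conditioned descendant, since $C^{(T-1)}$ is not conditioned on. Here we reroute the walk: from $X_k^{(T-1)}$ we detour through $\epsilon$-chains to generation $T$ and use $X_k^{(T)}$ as the collider, whose descendant in $C^{(T)}$ is conditioned on. That is, we replace the collider segment $u\to X_k\leftarrow w$ with a generation-$T$ segment $u^{(T)}\to X_k^{(T)}\leftarrow w^{(T)}$, and return to generation $T-1$ via the chains of $u$ and $w$. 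This forces $u$ and $w$ to become colliders or non-colliders in the right way.

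To avoid this bookkeeping, we expect it is cleaner to work with the moralization (ancestral-graph) criterion. Write $M$ for the moral graph of the ancestral set of $A^{(T)}\cup B^{(T)}\cup C^{(T)}\cup S^{(<T)}$. We then need to show that an undirected path in the moral graph of $\anc_\Gcal(A\cup B\cup C\cup\{S\})$ avoiding $C$ lifts to a path in $M$ avoiding $C^{(T)}\cup S^{(<T)}$. The ancestral set in $\Gcal^{(T)}$ contains all $X^{(T-1)}$ ancestors of $S^{(T-1)}$, all $\epsilon$'s of earlier generations, and $\anc(A\cup B\cup C)$ in generation $T$. A moral edge $u$--$w$ in $\Gcal$ comes from a common child $c$ or from an ordinary edge. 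If $c=S$ or $c\in\anc(S)$, we realize the edge in generation $T-1$. Otherwise $c\in\anc(A\cup B\cup C)$, and we realize it in generation $T$. Vertices are identified across generations via the unconditioned moral chain $X_k^{(T)}$--$\epsilon_k^{(T)}$--$\epsilon_k^{(T-1)}$--$X_k^{(T-1)}$. These chains exist in the ancestral set whenever $X_k^{(T-1)}$ or $X_k^{(T)}$ is present, since $\epsilon_k^{(T-1)}$ is an ancestor of both. This makes the lifting transparent, and we plan this as the main route. The main obstacle is checking that each used vertex is in the relevant ancestral set and that no chain passes through a conditioned node, which holds because $\epsilon$'s are never conditioned and $X^{(T-1)}$ is never conditioned.

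\textbf{Converse fails.} The example in \cref{fig:example_of_evolutionary_selection_model} serves as a counterexample: $X_1\dsep X_3\mid X_2,S$ in $\Gcal$. In $\Gcal^{(T)}$, however, $X_1^{(T)}$ and $X_3^{(T)}$ are connected by the path
\[
X_1^{(T)}\leftarrow\epsilon_1^{(T)}\leftarrow\epsilon_1^{(T-1)}\to X_1^{(T-1)}\to X_2^{(T-1)}\to S^{(T-1)}\leftarrow X_3^{(T-1)}\leftarrow\epsilon_3^{(T-1)}\to\epsilon_3^{(T)}\to X_3^{(T)}.
\]
The only collider on this path is $S^{(T-1)}$, which is conditioned on. Every non-collider lies outside $\{X_2^{(T)}\}\cup S^{(<T)}$. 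The path is therefore open, so the d-separation fails.
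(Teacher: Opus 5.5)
Your main route, the moralization criterion, is correct and differs from the paper's argument. The paper works directly with an open path $p$ in $\Gcal$. Colliders not in $\anc_\Gcal(S)$ stay in generation $T$, where $C^{(T)}$ opens them. For each collider $W\in\anc_\Gcal(S)$, the two directed segments $X_i\to\cdots\to W\leftarrow\cdots\leftarrow X_j$ from its nearest trek tops are moved to generation $T-1$. They are attached via $X_i^{(T)}\leftarrow\epsilon_i^{(T)}\leftarrow\epsilon_i^{(T-1)}\to X_i^{(T-1)}$ and its mirror image at $X_j$, so that $S^{(T-1)}$ opens $W^{(T-1)}$.

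You instead lift an undirected path in the moral graph of $\anc_\Gcal(A\cup B\cup C\cup\{S\})$. Each moral edge goes to generation $T-1$ or $T$ according to whether its witness (the common child, or the head of an ordinary edge) lies in $\anc_\Gcal(S)$ or in $\anc_\Gcal(A\cup B\cup C)$. Generations are glued by $X_k^{(T)}\dashdash\epsilon_k^{(T)}\dashdash\epsilon_k^{(T-1)}\dashdash X_k^{(T-1)}$. Your route needs no orientation bookkeeping: glue vertices need only be ancestral and unconditioned, and repeated vertices in the lifted walk are harmless for undirected separation. The paper's route gives an explicit open path, which shows how dependence travels through the previous generation and parallels the inducing-path argument later used for the MAG representation. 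Your explicit verification of the counterexample is correct.

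Two small fixes to the moral route:
\begin{itemize}
\item The moral path in $\Gcal$ must avoid $C\cup\{S\}$, not just $C$. This is what keeps every $S^{(t)}$ off the lift.
\item The glue chain lies in the ancestral set whenever $X_k^{(T)}$ does, since $\epsilon_k^{(T)}$ and $\epsilon_k^{(T-1)}$ are its ancestors. So ``$X_k^{(T-1)}$ or $X_k^{(T)}$ is present'' should read ``both are present''. That is exactly the case in which you use the chain, so the argument is unaffected.
\end{itemize}
Likewise, not all earlier $\epsilon$'s are ancestral; only those of variables in $\anc_\Gcal(A\cup B\cup C\cup\{S\})$ are.

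Your Step 2 sketch, by contrast, would fail as written, so it is good that you set it aside. The glue chain $X_k^{(T)}\leftarrow\epsilon_k^{(T)}\leftarrow\epsilon_k^{(T-1)}\to X_k^{(T-1)}$ is open in isolation, but it puts an arrowhead into $X_k^{(T-1)}$. The junction therefore becomes a collider whenever the adjoining edge of $\pi$ also points into $X_k$, and such a collider has descendants only inside generation $T-1$.

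Already for $\pi=a\leftarrow m\to b$ with $a,b\notin\anc_\Gcal(S)$, your default embedding creates colliders $a^{(T-1)}$ and $b^{(T-1)}$ with no conditioned descendant. The same happens at $u^{(T-1)}$ in your detour when $u$ is a chain node $v\to u\to X_k$ on $\pi$. The paper avoids this by switching generations only at trek tops, or at an endpoint whose first edge points away from it, where the junction is a non-collider.
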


\cref{lem:additional_dependencies_by_evolution} shows that evolutionary selection can introduce dependencies in the observed data that, when interpreted improperly via static models, can lead to false discoveries. A careful characterization of these dependencies is therefore essential, and we address this in the next section. The proof of~\cref{lem:additional_dependencies_by_evolution}, along with others, are provided in~\cref{app:proofs}.

\vspace{-0.4em}
\section{Model Characterization and Identification\looseness=-1}
\label{sec:model}

In this section, we formally characterize the CI constraints in observed data entailed by the model (\cref{subsec:markov_properties}), and present a sound and complete model identification procedure that uses these constraints, with particular attention to result interpretation (\cref{subsec:algorithm}). Throughout \cref{sec:model} and \cref{sec:approach}, we use the graph shown in~\cref{fig:running_example_identifiability} as a running example for our results.

\vspace{-0.4em}

\subsection{The Markov Properties}\label{subsec:markov_properties}

Our ultimate goal is to recover the causal and evolutionary selection mechanisms underlying observed data. Without parametric assumptions on the data-generating process, CI relations become the key source of information. Hence, we now characterize the Markov properties of the evolutionary selection model, i.e., the CI relations it entails in data.

\begin{restatable}[\textbf{CI implications}]{lemma}{THMCIIMPLICATIONS}\label{thm:ci_implications}
Let $\Gcal$ be a DAG over vertices $X \cup \{S\}$, $\Gcal^{(T)}$ be its corresponding evolutionary graph for some $T \geq 1$, and $A,B,C\subseteq X$ be disjoint sets. Then, the CI relation $A^{(T)} \indep B^{(T)} | C^{(T)}, S^{(<T)} = 1$ holds in all distributions that can be generated by $\Gcal^{(T)}$, if and only if the d-separation $A^{(T)} \dsep B^{(T)} | C^{(T)}, S^{(<T)}$ holds in $\Gcal^{(T)}$.
\end{restatable}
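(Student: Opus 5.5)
The plan is to prove the two directions separately. Soundness is standard; completeness needs a construction respecting the binary nature of the $S^{(t)}$ and the conditioning event $S^{(<T)}=1$.

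\textbf{Soundness (``if'').} Every distribution generated by $\Gcal^{(T)}$ factorizes according to $\Gcal^{(T)}$, so it is globally Markov with respect to it. Hence the d-separation $A^{(T)} \dsep B^{(T)} \mid C^{(T)}, S^{(<T)}$ gives $A^{(T)} \indep B^{(T)} \mid C^{(T)}, S^{(<T)}$. Specializing to the value $S^{(<T)}=1$, which we take to have positive probability as required for the observed distribution to be defined, yields the claimed CI. This direction uses nothing beyond the ordinary Markov property.

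\textbf{Completeness (``only if'').} Assume $A^{(T)}$ and $B^{(T)}$ are d-connected given $C^{(T)}\cup S^{(<T)}$. We must exhibit one distribution generated by $\Gcal^{(T)}$ in which $A^{(T)} \not\indep B^{(T)} \mid C^{(T)}, S^{(<T)}=1$. The usual faithfulness argument (Meek; Spirtes et al.) says that for a DAG, the parameter settings violating a given d-connection form a measure-zero set, for example among linear-Gaussian or multinomial models. This does not directly suffice here, for two reasons:
\begin{itemize}
\item the $S^{(t)}$ are binary, so the model is mixed;
\item we condition on the specific value $S=1$ rather than on $S$ as a random variable.
\end{itemize}

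My first choice is to make all variables discrete, with $S^{(t)}$ binary, and parametrize every conditional probability table with strictly positive entries. Then $p(x^{(T)}, S^{(<T)}=1)$ is a polynomial in the parameters. Dependence given $S^{(<T)}=1$ is equivalent to the nonvanishing of some polynomial of the form
\[
p(a,b,c,\mathbf 1)\,p(a',b',c,\mathbf 1)-p(a,b',c,\mathbf 1)\,p(a',b,c,\mathbf 1).
\]
By the identity theorem for polynomials, it then suffices to find a single positive parameter point at which this polynomial is nonzero; the polynomial is then nonzero for generic parameters.

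\textbf{Constructing the witness.} This is the main obstacle. I would pick a d-connecting path $\pi$ that is minimal in the standard sense, with each collider an ancestor of $C^{(T)}\cup S^{(<T)}$ through a shortest directed path. I would then use the usual witness construction: copy variables along $\pi$, together with the directed paths from its colliders to the conditioning set, and make all other edges inactive (children ignore those parents).

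The delicate point is a collider whose conditioned descendant is some $S^{(t)}$. A typical case is the collider $X_i^{(t)}$ or $S^{(t)}$ on the path through $\epsilon_1^{(T)}\leftarrow\cdots\leftarrow\epsilon_1^{(t)}\to X_1^{(t)}\to\cdots\to S^{(t)}$. In that case we need conditioning on the event $S^{(t)}=1$ to be informative, not just conditioning on $S^{(t)}$ as a variable. I would handle this by letting $S^{(t)}=1$ hold with probability $\tfrac12(1+\delta\, f(\text{parents}))$, where $f$ is, for example, the product or XOR of the relevant binary parents. One then checks that conditioning on $S^{(t)}=1$ alone already induces dependence between those parents. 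For $S$ variables not on the witness structure, set $p(S=1\mid\cdot)$ constant so that they become irrelevant; the event $S=1$ then still has positive probability.

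\textbf{Finishing.} With all variables binary, propagate dependence along the path: copy/XOR mechanisms along chains, and the collider trick at colliders. This gives explicit nonzero dependence, and genericity extends the conclusion as described above. If the discrete witness becomes messy, a fallback is to take $X,\epsilon$ linear-Gaussian with logistic $S$, and verify dependence via a perturbation expansion in $\delta$.
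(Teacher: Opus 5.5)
Your proposal is correct. The paper's own proof is one sentence: the lemma ``follows directly from global Markov properties.'' That is exactly your soundness direction.

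The completeness direction is left implicit in the paper. This is the claim that a d-connection yields some distribution generated by $\Gcal^{(T)}$ with $A^{(T)} \not\indep B^{(T)} \mid C^{(T)}, S^{(<T)}=1$. So your second half supplies something the paper does not. You also correctly isolate the one non-standard point: conditioning is on the event $S^{(<T)}=1$, not on $S^{(<T)}$ as a variable.

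You can, however, avoid the hand-built witness altogether:
\begin{enumerate}
\item Take every variable discrete. Meek's 1995 multinomial faithfulness theorem holds for any fixed finite cardinalities, so binary $S^{(t)}$ poses no difficulty and nothing is mixed. A faithful positive parameterization then makes $A^{(T)}$ and $B^{(T)}$ dependent given $C^{(T)}=c$, $S^{(<T)}=s$ for some configuration $(c,s)$.
\item Each $S^{(t)}$ is a sink in $\Gcal^{(T)}$. So swapping the labels $0$ and $1$ of every $S^{(t)}$ with $s_t=0$ changes only that variable's own conditional table. The result is another distribution generated by $\Gcal^{(T)}$, with the dependence moved to $S^{(<T)}=1$.
\item Your polynomial argument then makes the dependence generic.
\end{enumerate}

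This route is shorter. It also sidesteps the bookkeeping about descendant paths of colliders intersecting $\pi$ or one another, which your sketch defers to ``the usual witness construction.'' Your explicit copy/XOR construction with selection probabilities $\tfrac12(1+\delta f)$ remains valid, just longer than necessary.
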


\cref{thm:ci_implications} characterizes the CI relations common to all observed distributions that can be generated. A natural question, however, is whether we can always safely assume that these are exactly the CI relations in the specific data at hand, namely, assume \textit{faithfulness}. In particular, recall that in the explanations after~\cref{def:evolutionary_selection_model}, we allow causal and selection mechanisms to vary across generations, which is realistic in many settings. But is such variation essential? What if these mechanisms do remain invariant in certain cases?\looseness=-1

In more general models, such additional invariance constraints may indeed induce more CIs beyond those implied by d-separations, as seen in symmetric or context-specific models~\citep{hojsgaard2008graphical,boutilier2013context}. Fortunately, this concern does not arise here. Owing to the specific structure of the evolutionary selection model, no extra CI relations will be induced. Formally:   %

\begin{restatable}[\textbf{No extra CIs even under additional invariance constraints}]{corollary}{CORONOADDITIONALCI}\label{coro:same_ci_under_invariance_constraints}
Let $X$, $S$, $\epsilon$, $\Gcal$, $\Gcal^{(T)}$ be as in~\cref{def:evolutionary_selection_model}. Suppose that in generating data, the selection mechanisms $p(S^{(t)} | X^{(t)})$, causal mechanisms $p(X_i^{(t)}|\pa_{\Gcal^{(T)}}(X_i^{(t)}))$, and inheritance mechanisms $p(\epsilon_i^{(t+1)} | \epsilon_i^{(t)})$ for $X_i \in X$ are all invariant across $t\geq 0$. Then, the CI relations shared by all observed distributions $p(X^{(T)} | S^{(<T)} = 1)$ that can be generated still corresponds to those d-separations in $\Gcal^{(T)}$.
\end{restatable}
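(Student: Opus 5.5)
One direction is immediate. Invariant-mechanism distributions are a subset of all distributions generated by $\Gcal^{(T)}$. So by \cref{thm:ci_implications}, every d-separation $A^{(T)} \dsep B^{(T)} \mid C^{(T)}, S^{(<T)}$ still yields a CI in every invariant distribution.

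The content is the converse. Whenever $A^{(T)}$ and $B^{(T)}$ are d-connected given $C^{(T)}, S^{(<T)}$, I will exhibit a single distribution with time-invariant selection, causal and inheritance mechanisms in which $A^{(T)} \not\indep B^{(T)} \mid C^{(T)}, S^{(<T)}=1$. The plan is to reuse the construction behind the "if" direction of \cref{thm:ci_implications}, but restrict it to a parametric family where invariance costs nothing. Concretely, I take a linear-Gaussian-type model:
\begin{itemize}[noitemsep,topsep=0pt,left=5pt]
\item $X_i^{(t)} = \sum_j b_{ij} X_j^{(t)} + \epsilon_i^{(t)}$ with the same coefficients $b_{ij}$ for every $t$;
\item $\epsilon_i^{(t+1)} = \epsilon_i^{(t)} + \eta_i^{(t)}$ with identically distributed $\eta$;
\item a selection mechanism $p(S^{(t)}=1 \mid X^{(t)}) \propto \exp\bigl(-\tfrac12 (X^{(t)})^\top Q X^{(t)}\bigr)$ with the same $Q$ at each $t$, where $Q$ is supported on $\pa_\Gcal(S)$.
\end{itemize}
Conditioning on $S^{(<T)}=1$ then multiplies a Gaussian density by Gaussian factors. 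The observed law $p(X^{(T)} \mid S^{(<T)}=1)$ is therefore Gaussian, and its precision or covariance entries are rational (hence real-analytic) functions of the parameters $(b, Q, \mathrm{var}(\eta), \mathrm{var}(\epsilon^{(0)}))$.

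With that family, CI $A^{(T)} \indep B^{(T)} \mid C^{(T)}$ amounts to vanishing of the cross block of the partial covariance. Each entry is a real-analytic function on a connected parameter set, so either it vanishes identically or it is nonzero on a dense set. It therefore suffices to find, for each d-connected triple, one admissible (invariant) parameter value where the relevant partial covariance is nonzero.

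The main obstacle is this last step: producing such a witness under the tying constraint that all generations share parameters. Here the generic linear faithfulness arguments do not apply directly, because the parameters are not free per edge. My plan is to trace the structure of d-connecting paths in $\Gcal^{(T)}$ given $C^{(T)}, S^{(<T)}$. Such a path runs within generation $T$, or climbs up the inheritance chains $\epsilon_i^{(T)} \leftarrow \epsilon_i^{(T-1)} \leftarrow \cdots \leftarrow \epsilon_i^{(t)}$, passes through $X^{(t)}$ into a collider at $S^{(t)}$ (which is conditioned on), and comes back down. I will choose a specific path, zero out all coefficients off it while keeping invariance, and reduce to checking nonvanishing of an explicit covariance in a small chain-plus-collider Gaussian.

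Because every generation's contribution adds through the same tied parameters, I will check that the contributions from different generations $t$ have a consistent sign, or else simply pick $Q$ small. To first order in $Q$, the induced covariance is a sum over $t<T$ of terms proportional to $Q$ times products of path coefficients. Taking small positive parameters then makes these terms share a sign, so no cancellation occurs. This perturbative choice is the step I would verify first.

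Finally, if the nonzero witness is obtained inside the Gaussian family, the CI fails for that invariant distribution, which completes the correspondence.
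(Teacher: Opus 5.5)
\textbf{The reduction is sound, but the witness step fails as sketched.} The Markov direction is immediate. By analyticity it then suffices to exhibit, for each d-connected triple, one tied-parameter point with nonzero partial covariance. This is a more explicit route than the paper's, which argues only informally that tying creates no new CIs when inheritance is noisy. The paper itself warns that linear-Gaussian symmetric models are where tying \emph{can} cancel.

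The gap is exactly the witness step you flag, and the mechanism you propose for it is wrong, for two reasons.
\begin{itemize}
\item Zeroing out coefficients off the chosen path does not leave a small chain-plus-collider model. By tying, every static edge the path uses in generation $t<T$ is also active in generation $T$, where it interacts with $C^{(T)}$.
\item Positivity does not rule out cancellation. Take $X_1\to X_2\to S\leftarrow X_3$ from \cref{fig:example_of_evolutionary_selection_model} and the d-connected triple $X_1^{(T)},X_3^{(T)}\mid X_2^{(T)}$. Write $v_i^0=\mathrm{Var}(\epsilon_i^{(0)})$, $\sigma_i^2=\mathrm{Var}(\eta_i)$, $v_i^t=v_i^0+t\sigma_i^2$, and let $q_{23}$ be the $(2,3)$ entry of $Q$. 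The first-order change of the partial covariance is $-b_{21}q_{23}(v_1^0\sigma_2^2-v_2^0\sigma_1^2)(b_{21}^2v_1^T+v_2^T)^{-1}\sum_{t<T}(T-t)v_3^t$.
\end{itemize}
In that expression each generation's term vanishes individually whenever $v_1^0/\sigma_1^2=v_2^0/\sigma_2^2$ (e.g., all variances equal), however positive $b_{21}$ and $Q$ are. The cancellation is between the path through $X_2^{(t)}$ and the residualization on the tied copy $X_2^{(T)}$, not across generations.

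\textbf{The degeneracy is exact, not just first-order.} Let $B=(b_{ij})$ and suppose $\mathrm{Var}(\epsilon^{(0)})=\kappa\,\mathrm{Var}(\eta)$. Then $\epsilon^{(t)}=\frac{\kappa+t}{\kappa+T}\epsilon^{(T)}+\xi^{(t)}$ with $(\xi^{(t)})_{t<T}\indep\epsilon^{(T)}$. Hence $X^{(t)}=\frac{\kappa+t}{\kappa+T}X^{(T)}+(I-B)^{-1}\xi^{(t)}$, and $p(X^{(T)}\mid S^{(<T)}=1)\propto p(X^{(T)})\,g(X^{(T)}_{\pa_\Gcal(S)})$ is a \emph{static} selection law on $\Gcal$. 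It therefore satisfies every static d-separation, here $X_1\indep X_3\mid X_2$. This holds for every $Q$, indeed for any selection function, so neither a smaller $Q$ nor a higher-order expansion rescues it. Natural symmetric choices in your family thus produce exactly the extra CIs the corollary is meant to exclude.

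\textbf{Missing ingredient.} You need the ratios $\mathrm{Var}(\epsilon_i^{(0)})/\mathrm{Var}(\eta_i)$ to differ across $i$. That makes $E[X^{(t)}_{\pa_\Gcal(S)}\mid X^{(T)}]$ depend on ancestors of $\pa_\Gcal(S)$, not only on $X^{(T)}_{\pa_\Gcal(S)}$. You would then still need an actual argument that the first-order term is nonzero for every d-connected triple. One option:
\begin{itemize}
\item via \cref{thm:clique_augmented_dag_representation}, reduce to a connecting path in $\Gcal^+$ that uses at most one clique edge;
\item treat triples already d-connected at $Q=0$ by ordinary linear faithfulness, since the generation-$T$ slice then has free parameters;
\item compute the remaining case explicitly.
\end{itemize}
In the example, the sign consistency across $t$ comes from the factor $(T-t)$ multiplying a variance-ratio difference, not from positivity of the parameters.
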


\begin{figure}[t]
  \centering
  \begin{tikzpicture}[->,>=stealth,shorten >=1pt,auto,semithick,inner sep=0.2mm,scale=0.8]
    \tikzstyle{Lstyle} = [draw, rectangle, minimum size=0.5cm, fill=gray!20]
    \tikzstyle{Xstyle} = [draw, circle, minimum size=0.6cm]
    \tikzstyle{Sstyle} = [draw, circle, minimum size=0.56cm, double, double distance=0.04cm, preaction={pattern=dots, pattern color=gray!50}]

    \node[Xstyle] (M3) at (1.4,0) {$X_2$};
    \node[Xstyle] (M1) at (2.8,0) {$X_3$};
    \node[Xstyle] (M0) at (0,0) {$X_1$};
    \node[Xstyle] (M4) at (0,-1.4) {$X_4$};
    \node[Xstyle] (M2) at (1.4,-1.4) {$X_5$};
    \node[Sstyle] (S) at (2.8,-1.4) {$S$};
    
    \draw[->] (M0) -- (M4);
    \draw[->] (M0) -- (M2);
    \draw[->] (M0) -- (S.120);
    \draw[->] (M1) -- (M2);
    \draw[->] (M1) -- (S);
    \draw[->] (M3) -- (M1);
    \draw[->] (M4) -- (M2);
    
  \end{tikzpicture}
  \caption{A DAG $\Gcal$ used as a running example in~\cref{sec:model} and \cref{sec:approach}. Again, $\Gcal$ corresponds to a static selection model; we use it only for presentation ease, not to represent the true data-generating process. The true evolutionary selection model $\Gcal^{(T)}$ is omitted here.\vspace{-1em}}
  \label{fig:running_example_identifiability}
\end{figure}

\cref{coro:same_ci_under_invariance_constraints} justifies assuming \emph{faithfulness}. Any additional CIs would arise only from non-generic, measure-zero parameter cancellations, which we can reasonably disregard. %

We may thus proceed purely at the graphical level: Given an evolutionary DAG $\Gcal^{(T)}$, what are the d-separations among observed variables $X^{(T)}$ conditioning on selection variables $S^{(<T)}$? How do they relate to the static graph $\mathcal{G}$ and the generation $T$, and how can they inform graph recovery? At first glance, addressing these questions appears routine: such d-separations induced by conditioning and marginalization in a DAG $\Gcal^{(T)}$ can be characterized by the well-established maximal ancestral graph (MAG) framework~\citep{richardson2002ancestral}, and then be used for graph recovery by the FCI algorithm~\citep{spirtes1995causal,zhang2008completeness}.

Surprisingly, however, we find that applying this standard MAG-FCI pipeline is inappropriate here: it not only overcomplicates the graphical analysis but can also lead to incomplete discovery results. This is again owing to the specific structure of the evolutionary graphs, and we defer the detailed explanation to~\cref{app:elaborations}. For now, we present a simpler and more informative characterization: instead of using a MAG, the relevant d-separations can be represented directly by a DAG, which we term the \textit{clique-augmented DAG}:\looseness=-1

\begin{definition}[\textbf{Clique-augmented DAG}]\label{def:clique_augmented_DAG}
    Let $\Gcal$ be a DAG over vertices $X\cup \{S\}$, and $\pi$ an ordering topological to $\Gcal$. The \textit{clique-augmented DAG} of $\Gcal$, denoted by $\Gcal^+$, is a DAG over $X$, where $X_i \rightarrow X_j \in \Gcal^+$ if and only if either $X_i \rightarrow X_j \in \Gcal$, or $\{X_i, X_j\} \subseteq \anc_\Gcal(S)$ and $\pi(X_i) < \pi(X_j)$.
\end{definition}

An example of such clique-augmented DAGs is shown in~\cref{fig:augmented_DAG_example}. The term ``clique'' refers to the fully connected structure among all variables involved in selection, and ``augmented'' indicates these additional adjacencies, which, as formalized below, exactly account for the additional dependencies that a static DAG $\Gcal$ fails to capture (\cref{lem:additional_dependencies_by_evolution}), and thus allow this new DAG $\Gcal^+$ to fully represent the CI constraints in data observed under evolutionary selection:

\begin{restatable}[\textbf{Clique-augmented DAG representation for evolutionary selection models}]{theorem}{THMCLIQUEDAG}\label{thm:clique_augmented_dag_representation}
Let $X$, $S$, $\Gcal$, $\Gcal^{(T)}$, $\Gcal^+$ be as in~\cref{def:evolutionary_selection_model,def:clique_augmented_DAG}. For any $T\geq 1$ and any disjoint sets $A,B,C \subseteq X$, the d-separation $A^{(T)} \dsep B^{(T)} |$ $ C^{(T)}, S^{(<T)}$ holds in the DAG $\Gcal^{(T)}$, if and only if the d-separation $A \dsep B | C$ holds in the DAG $\Gcal^{+}$.
\end{restatable}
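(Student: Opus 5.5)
I would prove \cref{thm:clique_augmented_dag_representation} with the ancestral moral graph criterion of Lauritzen et al. rather than by chasing paths. Recall that $A\dsep B\mid C$ holds in a DAG $\mathcal{H}$ if and only if $A$ and $B$ are separated by $C$ in the moral graph of the induced subgraph on $\anc_{\mathcal{H}}(A\cup B\cup C)$. I would apply this twice. In $\Gcal^{(T)}$ the conditioning set is $C^{(T)}\cup S^{(<T)}$, and in $\Gcal^+$ it is $C$. The proof then reduces to showing that, after deleting the conditioning sets, the two moral graphs induce the same separation relation on the trait nodes. Throughout, write $D\coloneqq A\cup B\cup C$ and $K\coloneqq \anc_\Gcal(S)\cap X$ for the traits involved in selection.

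\textbf{Step 1 (ancestral set in $\Gcal^{(T)}$).} I would describe $\mathcal{A}^{(T)}\coloneqq\anc_{\Gcal^{(T)}}(D^{(T)}\cup S^{(<T)})$ explicitly. For each $t<T$ it contains $K^{(t)}$ and $\epsilon_K^{(t)}$. It also contains $\epsilon_k^{(t')}$ for all $t'<t$, which are ancestors of $\epsilon_k^{(t)}$. Within generation $T$, it contains only $\anc_\Gcal(D)^{(T)}$ together with the corresponding $\epsilon^{(T)}$'s and all of their inherited predecessors. The structural fact I would establish is that, after moralization and deletion of $S^{(<T)}\cup C^{(T)}$, all nodes of generations $<T$ form one connected "blob". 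The argument runs as follows:
\begin{itemize}
\item In generation $T-1$, all parents of $S^{(T-1)}$ are married to each other.
\item Every node of $K^{(T-1)}$ reaches a parent of $S^{(T-1)}$ by a directed path lying inside $K^{(T-1)}$.
\item Each $\epsilon_k^{(T-1)}$ is attached to $X_k^{(T-1)}$ by an edge.
\item None of these nodes is conditioned on.
\end{itemize}
Consequently, for $k\in K$ with $X_k^{(T)}\in\mathcal{A}^{(T)}$, the path $X_k^{(T)}-\epsilon_k^{(T)}-\epsilon_k^{(T-1)}-\text{blob}$ exists. For $k\notin K$, the chain $\epsilon_k^{(\le T)}$ is a dangling path that touches no blob node and no other chain. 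Moralization at generation $T$ adds only edges among $X^{(T)}$ and between $\epsilon_j^{(T)}$ and co-parents. The resulting marriages of the form $\epsilon_j^{(T)}-X_i^{(T)}$ add no new connectivity among traits beyond what $X_i^{(T)}-X_j^{(T)}$ already provides.

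\textbf{Step 2 (reduction).} Contracting the blob and the $\epsilon$-chains, I would show the following. Two generation-$T$ traits outside $C$ are connected in the reduced graph if and only if they are connected in the graph $M$ on $\anc_\Gcal(D)$ obtained from the moral graph of $\Gcal[\anc_\Gcal(D)]$ by adding one extra node, the blob, that is adjacent to all of $K\cap\anc_\Gcal(D)$. A neighbourhood containing $C$ nodes is harmless here: the blob is attached to $X_k^{(T)}$ through $\epsilon_k^{(T)}$, which is never conditioned on. Hence the blob behaves exactly like a clique over $K\cap\anc_\Gcal(D)$, including the members of $C$. This is the key point: the blob is one extra unconditioned hub, so it is equivalent to fully connecting $K\cap\anc_\Gcal(D)$.

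\textbf{Step 3 (matching $\Gcal^+$), the expected main obstacle.} The ancestral set $\anc_{\Gcal^+}(D)$ can be strictly larger than $\anc_\Gcal(D)$. It additionally contains the nodes of $K$ that precede some member of $K\cap\anc_\Gcal(D)$ in $\pi$, together with their $\Gcal$-ancestors. Moralization in $\Gcal^+$ also adds marriages involving these extra nodes. I would show that the additional nodes cannot create new connections between $A$ and $B$ avoiding $C$, using the following argument.
\begin{itemize}
\item Every extra node either lies in $K$, and so sits inside the clique that already links all of $K\cap\anc_{\Gcal^+}(D)$, or is a $\Gcal$-ancestor of such a node.
\item Any path through the extra region therefore enters and leaves it through clique members.
\item The clique members it passes through are adjacent to each other directly, so the path can be shortcut through that direct adjacency.
\end{itemize}
Conversely, every edge of $M$ appears in the moral graph of $\Gcal^+$. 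The one point requiring care is the marriages in $\Gcal^+$ among parents of a clique child. Such parents are themselves in $K$, since $K$ is ancestrally closed, so they are already adjacent.

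Together these give equality of the separation relations, and hence the biconditional. If the shortcut argument in Step 3 becomes messy, my fallback is a direct path-based proof in each direction. For one direction, I would replace each traversal of the blob in $\Gcal^{(T)}$ by a $\Gcal^+$ clique edge. For the other, I would lift each clique edge $X_i\to X_j$ of $\Gcal^+$ to the collider path through $S^{(T-1)}$ exhibited for $X_1,X_3$ after \cref{def:evolutionary_selection_model}.
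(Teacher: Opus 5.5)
Your proposal is correct, and it takes a genuinely different route from the paper. The paper never uses moral graphs. It first derives the MAG $\Ecal(\Gcal)$ induced by $\Gcal^{(T)}$ over $X^{(T)}$ (\cref{thm:mag_representation}). There, adjacencies come from inducing paths: $\Gcal$-adjacent, or both in $\anc_\Gcal(S)$. Edge marks come from ancestral relations: $\rightarrow$ when one node is a $\Gcal$-ancestor of the other, $\leftrightarrow$ otherwise, and no undirected edges because nothing in $X^{(T)}$ is an ancestor of $S^{(<T)}$. It then asserts that, since $\anc_\Gcal(S)$ is a clique with no unshielded colliders, $\Ecal(\Gcal)$ is Markov equivalent to the DAG $\Gcal^+$.

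That route yields extra structure: the evolutionary MAG itself, which the authors reuse to explain why FCI's PAG is uninformative and incomplete in this setting. Its last step, however, is only sketched. Making it rigorous needs either the Markov-equivalence criterion for MAGs, including the condition on discriminating paths, or a lemma allowing $\leftrightarrow$ edges to be replaced by $\rightarrow$. The non-adjacency half of the MAG theorem is also argued informally.

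Your route via the ancestral moral graph criterion avoids inducing paths and MAG equivalence altogether. It handles arbitrary $A,B,C$ at once and makes independence from $T$ transparent, since only generation $T-1$ is used. It also isolates the actual mechanism: the earlier generations act as one unconditioned hub on $K\cap\anc_\Gcal(D)$, which for separation is the same as a clique. What it does not give you is the edge-mark information of $\Ecal(\Gcal)$. Your fallback path-lifting argument is essentially the rerouting the paper uses for \cref{lem:additional_dependencies_by_evolution} and for the adjacency half of its MAG theorem.

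Two local justifications need repair when you write this out.

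\emph{Step 1, the marriages $\epsilon_j^{(T)}-X_i^{(T)}$.} These are not covered by the edge $X_i^{(T)}-X_j^{(T)}$ when $X_j\in C$, because that edge is deleted. The correct reason splits into two cases. If $j\notin K$, the trait neighbours of $\epsilon_j^{(T)}$ are $X_j^{(T)}$ and $\pa_\Gcal(X_j)^{(T)}$, which are already pairwise married, and $\epsilon_j^{(T-1)}$ dangles. If $j\in K$, all these neighbours lie in $K\cap\anc_\Gcal(D)$, so $\epsilon_j^{(T)}$ is simply part of the hub.

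\emph{Step 3, the extra ancestors in $\Gcal^+$.} The alternative ``or is a $\Gcal$-ancestor of such a node'' is vacuous. Every node of $\anc_{\Gcal^+}(D)\setminus\anc_\Gcal(D)$ lies in $K$, because clique edges only point into $K$ and $K$ is ancestrally closed in both graphs. The shortcut also needs every such node to have all its moral neighbours in $K\cap\anc_{\Gcal^+}(D)$. This holds for two reasons. A directed $\Gcal^+$-path starting outside $K$ never enters $K$, so it uses only $\Gcal$-edges; hence a child outside $K$ lying in $\anc_{\Gcal^+}(D)$ is already in $\anc_\Gcal(D)$ and would pull its parent in as well. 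And non-$K$ nodes have the same parents in $\Gcal^+$ as in $\Gcal$, so their marriages coincide.
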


\cref{thm:clique_augmented_dag_representation} has the following three important implications:

First, although the distributions $p(X^{(T)} | S^{(<T)}=1)$ may change with $T$, and may even never converge, their entailed CI relations remain invariant. Hence in practice, one needs not to know the exact generation $T$ of the observed data, or to assume that the evolution has reached any equilibrium.\looseness=-1

Second, when reproduction happens completely at random, i.e., $\pa_\Gcal(S) = \varnothing$, the DAG $\Gcal^+$ reduces to the DAG $\Gcal$ (with $S$ removed). In this degenerated case, the static graphs suffice to capture CI relations in data even under evolution.

Third, evolutionary selection can only be falsified but not confirmed from data. In particular, one can never conclude that a variable $X_i$ \textit{is} involved in selection (i.e., $X_i \in $ $\anc_\Gcal(S)$), but only that it \textit{is not}, because the observable CI constraints can always be equivalently represented by the DAG $\Gcal^+$ where no selection occurs. This is fundamentally different from the static selection setting, where it is sometimes possible to confirm a variable's involvement in selection, or even to determine whether it is directly selected.\looseness=-1

These implications, especially the third one, are essential for the model identification procedure, as presented next.

\subsection{Model Identification}\label{subsec:algorithm}

In~\cref{subsec:markov_properties}, we characterized the Markov properties of the evolutionary selection model, establishing how CI relations in the observed data correspond to d-separations in the graph and how they, in turn, are represented by the clique-augmented DAG. Now, we present how these CI relations can be used to identify the model, up to its identifiability limits. %

Thanks to the representability of all CIs in data via a clique-augmented DAG without latent or selection variables, the identification algorithm reduces to a standard form: one may treat data \textit{as if} no selection or evolution were ever present:\looseness=-1

\begin{plainalgorithm}[\textbf{Model identification from single-domain data}]\label{algo:one_domain}
\textbf{Input:} An observed data matrix $X \in \mathbb{R}^{n \times d}$. \textbf{Procedure:} Apply PC or GES~\citep{spirtes2000causation,chickering2002optimal}, or any other nonparametric method that is sound and complete under causal sufficiency and faithfulness assumptions, to data $X$. \textbf{Output:} A completed partially directed acyclic graph (CPDAG) over the $d$ vertices indexed by $X$.\looseness=-1
\end{plainalgorithm}

At this point, one might find it surprising or even paradoxical: after detailing all the complexities introduced by selection and evolution, are we now suggesting that they can be totally ignored, and a standard PC algorithm suffices? The answer is no: these complexities do not arise in the algorithmic construction of the CPDAG in \cref{algo:one_domain}, but in its \textit{interpretation}, which must be carefully distinguished from the standard interpretations in settings where these algorithms were originally intended for, or where selection is static.\looseness=-1

\begin{figure}[t]
  \centering
  \begin{tikzpicture}[->,>=stealth,shorten >=1pt,auto,semithick,inner sep=0.2mm,scale=0.8]
    \tikzstyle{Lstyle} = [draw, rectangle, minimum size=0.5cm, fill=gray!20]
    \tikzstyle{Xstyle} = [draw, circle, minimum size=0.6cm]
    \tikzstyle{Sstyle} = [draw, circle, minimum size=0.56cm, double, double distance=0.04cm, preaction={pattern=dots, pattern color=gray!50}]
    
    \node[Xstyle] (M3) at (1.4,0) {$X_2$};
    \node[Xstyle] (M1) at (2.8,0) {$X_3$};
    \node[Xstyle] (M0) at (0,0) {$X_1$};
    \node[Xstyle] (M4) at (0.7,-1.2) {$X_4$};
    \node[Xstyle] (M2) at (2.1,-1.2) {$X_5$};
    
    \draw[->] (M0) -- (M4);
    \draw[->] (M0) -- (M2);
    \draw[->] (M4) -- (M2);
    \draw[->] (M3) -- (M1);
    \draw[->] (M1) -- (M2);
    
    \draw[->, line width=1.7pt] (M0) -- (M3);
    \draw[bend left=40, line width=1.7pt] (M0) to (M1);

    \def\xgap{5}

    \node[Xstyle] (M3p) at (\xgap+1.4,0) {$X_2$};
    \node[Xstyle] (M1p) at (\xgap+2.8,0) {$X_3$};
    \node[Xstyle] (M0p) at (\xgap+0,0) {$X_1$};
    \node[Xstyle] (M4p) at (\xgap+0.7,-1.2) {$X_4$};
    \node[Xstyle] (M2p) at (\xgap+2.1,-1.2) {$X_5$};

    \draw[->, line width=1.7pt] (M0p) -- (M2p);
    \draw[->, line width=1.7pt] (M1p) -- (M2p);  %
    \draw[->, line width=1.7pt] (M4p) -- (M2p);

    \draw[-] (M0p) -- (M4p);
    \draw[-] (M4p) -- (M0p);
    
    \draw[-] (M3p) -- (M1p);
    \draw[-] (M1p) -- (M3p);

    \draw[-] (M0p) -- (M3p);
    \draw[-] (M3p) -- (M0p);
    
    \draw[-] (M0p) to[bend left=40] (M1p);
    \draw[-] (M1p) to[bend right=40] (M0p);

  \end{tikzpicture}
  \caption{An illustration of model identification with single-domain data (\cref{sec:model}). Let $\Gcal$ be the DAG in~\cref{fig:running_example_identifiability}. \textbf{Left:} The \textit{clique-augmented DAG} $\Gcal^+$ from \cref{def:clique_augmented_DAG}; highlighted are the augmented edges to form a clique among $\anc_\Gcal(S)=\{X_1,X_2,X_3\}$. \textbf{Right:} CPDAG $\mathcal{C}$ output by \cref{algo:one_domain}, which is also the CPDAG of $\Gcal^+$. One may verify \cref{thm:sound_complete_cpdag_one_domain} on the highlighted oriented edges: indeed, $\{X_1,X_3,X_4\}\subseteq \pa_\Gcal(X_5)$, and $X_5 \not\in \anc_\Gcal(S)$.\vspace{-1em}}
  \label{fig:augmented_DAG_example}
\end{figure}

We formalize this interpretation as follows:

\begin{restatable}[\textbf{Soundness and completeness of~\cref{algo:one_domain}}]{theorem}{THMSOUNDCOMPLETEONEDOMAIN}\label{thm:sound_complete_cpdag_one_domain}

Suppose the observed data are large enough i.i.d. samples drawn from $p(X^{(T)} | S^{(<T)} = 1)$, with $X$, $S$, $\Gcal$, $\Gcal^{(T)}$ be as in~\cref{def:evolutionary_selection_model}, and assume faithfulness w.r.t.~\cref{thm:ci_implications}. Let $\mathcal{C}$ be the CPDAG output by~\cref{algo:one_domain} on data. Then:

\textbf{Soundness and completeness of adjacencies:} Two vertices $X_i$ and $X_j$ are adjacent in $\mathcal{C}$, if and only if they are directly causally related, or both involve in selection; i.e., $X_i \in \pa_\Gcal(X_j)$, or $X_i \in \ch_\Gcal(X_j)$, or $\{X_i, X_j\} \subseteq \anc_\Gcal(S)$.\looseness=-1

\textbf{Soundness of orientations:} For any oriented edge $X_i \rightarrow X_j \in \mathcal{C}$, this direct causal relation is true, and $X_j$ is not involved in selection, i.e., $X_i \in \pa_\Gcal(X_j)$ and $X_j \not\in \anc_\Gcal(S)$.

\textbf{Completeness of orientations:} For any unoriented edge $X_i \dashdash X_j \in \mathcal{C}$, no further identification is possible; i.e., there always exists an alternative DAG $\Gcal'$ that differs from $\Gcal$ in the relation between $X_i, X_j$ (among the three possibilities for adjacency), yet induces the same CPDAG output $\mathcal{C}$.

\end{restatable}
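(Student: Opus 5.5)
The overall strategy is to reduce everything to standard facts about CPDAGs of the DAG $\Gcal^+$. By \cref{thm:ci_implications} and faithfulness, the CI relations in $p(X^{(T)}\mid S^{(<T)}=1)$ are exactly the d-separations $A^{(T)}\dsep B^{(T)}\mid C^{(T)},S^{(<T)}$ in $\Gcal^{(T)}$. By \cref{thm:clique_augmented_dag_representation}, these coincide with the d-separations $A\dsep B\mid C$ in $\Gcal^+$. Hence the data distribution is Markov and faithful to $\Gcal^+$, which has no latent or selection variables. The soundness and completeness of PC/GES then give $\mathcal{C}=\mathrm{CPDAG}(\Gcal^+)$. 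The rest of the proof translates properties of $\mathrm{CPDAG}(\Gcal^+)$ back into statements about $\Gcal$.

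\textbf{Adjacencies.} A CPDAG has the same skeleton as its DAG, and by \cref{def:clique_augmented_DAG} $X_i,X_j$ are adjacent in $\Gcal^+$ iff they are adjacent in $\Gcal$ or both lie in $\anc_\Gcal(S)$. This part is immediate.

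\textbf{Soundness of orientations.} Let $X_i\rightarrow X_j\in\mathcal{C}$, so $X_i\rightarrow X_j\in\Gcal^+$. I first show $X_j\notin\anc_\Gcal(S)$ and argue by contradiction.
\begin{itemize}[noitemsep,topsep=0pt,left=5pt]
\item Suppose first that $X_i\in\anc_\Gcal(S)$ as well. Then the edge lies inside the clique $K=\anc_\Gcal(S)$.
\item Suppose instead $X_i\notin\anc_\Gcal(S)$. Then the edge is an original edge of $\Gcal$, and since $X_j\in\anc_\Gcal(S)$, also $X_i\in\anc_\Gcal(S)$, a contradiction.
\end{itemize}
So both endpoints lie in $K$, which is ancestrally closed in $\Gcal$ and hence in $\Gcal^+$: augmented edges stay inside $K$, and original edges into $K$ come from $K$. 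Moreover, $K$ is complete in $\Gcal^+$. Any two members of $K$ that share a common child would be adjacent, so $K$ contains no v-structure. An edge inside $K$ is a parent-complete (covered) edge in the sense of Chickering: $\pa(X_j)=\pa(X_i)\cup\{X_i\}$ within $\Gcal^+$, because every parent of $X_j$ lies in $K$ and is adjacent to $X_i$. I expect this step to need care, namely checking that every parent of $X_j$ that precedes $X_i$ is a parent of $X_i$. Once the edge is covered, reversing it yields a Markov-equivalent DAG, so the edge is unoriented in $\mathcal{C}$, a contradiction. Therefore $X_j\notin K$, so the edge is not augmented, and $X_i\in\pa_\Gcal(X_j)$.

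\textbf{Completeness of orientations.} This is the main obstacle. Let $X_i\dashdash X_j\in\mathcal{C}$. I will build an alternative $\Gcal'$ with $\Gcal'^+$ Markov equivalent to $\Gcal^+$, so that the two have the same CPDAG, but with a different relation between $X_i$ and $X_j$. Completeness of CPDAGs (Meek, Andersson et al.) supplies a DAG $\mathcal{D}\in[\Gcal^+]$ containing $X_j\rightarrow X_i$ whenever $\Gcal^+$ has $X_i\rightarrow X_j$. The simplest choice is $\Gcal'=\mathcal{D}$ with $S$ having no parents. By the second implication after \cref{thm:clique_augmented_dag_representation}, $\Gcal'^+=\mathcal{D}$, which has the same CPDAG $\mathcal{C}$. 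I then compare the relation between $X_i$ and $X_j$ in $\Gcal$ and in $\Gcal'$, taking the three possibilities to be $X_i\rightarrow X_j$, $X_j\rightarrow X_i$, and both in $\anc(S)$.
\begin{itemize}[noitemsep,topsep=0pt,left=5pt]
\item If in $\Gcal$ they are both involved in selection, then $\Gcal'$, which has no selection and keeps a direct edge, differs.
\item If in $\Gcal$ they are directly related, say $X_i\rightarrow X_j$, choose $\mathcal{D}$ with the reversed edge.
\end{itemize}
If one needs an alternative that keeps the same selection set, the alternative tool is again the clique structure. The delicate check is that the reversal does not create or destroy v-structures in the augmented graph. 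Because $\Gcal'^+=\mathcal{D}$ exactly when $\Gcal'$ has no selection, using the selection-free alternative avoids this check entirely. That is why I would take the selection-free $\Gcal'$ first.
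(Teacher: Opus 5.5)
Your overall route is the paper's. You reduce to $\mathcal{C}=\mathrm{CPDAG}(\Gcal^+)$ via \cref{thm:clique_augmented_dag_representation}, read adjacencies off \cref{def:clique_augmented_DAG}, and use a selection-free graph as the completeness witness. The adjacency part is fine. Your completeness argument is also correct: take any $\mathcal{D}$ in the Markov equivalence class of $\Gcal^+$ with the edge reversed and add an isolated $S$, so $(\Gcal')^{+}=\mathcal{D}$. It is more careful than the paper's, since it also covers unoriented edges outside the clique, such as $X_1 \dashdash X_4$ in \cref{fig:augmented_DAG_example}.

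The step that fails is the claim that every edge inside $K=\anc_\Gcal(S)\setminus\{S\}$ is covered. For $v\in K$, the definition gives $\pa_{\Gcal^+}(v)=\{u\in K:\pi(u)<\pi(v)\}$. Hence for $X_i\rightarrow X_j$ inside $K$ you get $\pa_{\Gcal^+}(X_j)\setminus\{X_i\}=\pa_{\Gcal^+}(X_i)\cup\{u\in K:\pi(X_i)<\pi(u)<\pi(X_j)\}$. So the edge is covered only if no vertex of $K$ lies strictly between its endpoints in $\pi$. Adjacency to $X_i$ is not enough, because a parent of $X_j$ can be a child of $X_i$. In the running example, $X_1\rightarrow X_3$ has $\pa_{\Gcal^+}(X_3)=\{X_1,X_2\}$ but $\pa_{\Gcal^+}(X_1)=\varnothing$. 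Reversing that edge alone creates the cycle $X_3\rightarrow X_1\rightarrow X_2\rightarrow X_3$. The check you flagged (parents of $X_j$ that precede $X_i$) is the harmless half; the trouble comes from parents of $X_j$ that come after $X_i$.

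The conclusion is still true, and the fact you already isolated, that $K$ is complete and parent-closed in $\Gcal^+$, repairs it. Reorder the whole clique instead of reversing one edge:
\begin{itemize}
\item Replace the complete sub-DAG of $\Gcal^+$ on $K$ by the complete DAG of any ordering of $K$ that puts $X_j$ before $X_i$, keeping all other edges.
\item Every edge between $K$ and $X\setminus K$ points out of $K$, so the new graph is acyclic and has the same skeleton.
\item It has the same v-structures. A collider in $K$ has all its parents in the clique $K$, so it is shielded in both graphs. A collider outside $K$ keeps its parent set.
\end{itemize}
Hence the new graph is Markov equivalent to $\Gcal^+$ and contains $X_j\rightarrow X_i$. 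So the edge cannot be oriented in $\mathcal{C}$, which gives your contradiction. Equivalently, you can reach that ordering by successively reversing edges between $K$-consecutive vertices; each such edge is covered by the formula above.

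The paper's terser argument only notes that a clique has no internal v-structures. It implicitly needs this same parent-closure to rule out orientations propagated into $K$ by Meek's rules.
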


One may use~\cref{thm:sound_complete_cpdag_one_domain} to interpret the example output of~\cref{algo:one_domain} shown in~\cref{fig:augmented_DAG_example}. \cref{thm:sound_complete_cpdag_one_domain} highlights the additional complexities introduced by evolution: when selection is static, roughly speaking, the ``spurious'' dependencies only occur among variables that are directly selected, while evolution propagates such dependencies even to those indirect ones. Moreover, unlike in the static setting, now the presence of selection can no longer be theoretically confirmed from data alone, though in practice, when a clique is detected in which pairwise direct causal relations are implausible (e.g., based on prior knowledge), their joint selection involvement can be a compelling alternative explanation.

To conclude this section, we reflect on three representative scenarios to illustrate how improper handling of data generated under evolutionary selection, in contrast to our approach, can lead to unsound or incomplete discovery results.\looseness=-1

\textbf{Scenario 1 (Ignoring selection).} If selection is ignored entirely, one may apply PC/GES and obtain a CPDAG identical to the output of~\cref{algo:one_domain}. However, one would incorrectly interpret all adjacencies as direct causal relations.\looseness=-1

\textbf{Scenario 2 (Modeling selection as static).} If selection is acknowledged but treated as static, one may apply FCI and obtain a partial ancestral graph (PAG). However, false interpretations of adjacencies can be made by missing the possibility of joint indirect selection involvement. Moreover, one may expect certain edge types in PAG to signal selection, which would never appear. See~\cref{app:elaborations} for details.\looseness=-1

\textbf{Scenario 3: (Correctly modeling evolution but defaulting to MAG–FCI pipeline).} When selection is acknowledged and even evolution is correctly modeled as in~\cref{def:evolutionary_selection_model}, directly applying the standard MAG–FCI pipeline is still incomplete. Certain structures that are in fact identifiable would still be treated as ambiguous in the output PAG. This is because the specific structure of evolutionary graphs imposes additional constraints. See~\cref{app:elaborations} for details. %

These scenarios altogether emphasize a key lesson: under evolution, algorithm outputs require careful interpretation; they may differ substantially from their standard meaning.

\vspace{-0.5em}
\section{Generalization to Heterogeneous Data}
\label{sec:approach}

In~\cref{sec:model}, we showed how the underlying data-generating process can be identified from i.i.d. observations of one generation in one environment. In this section, we generalize this framework to heterogeneous data, such as observations of multiple generations in the same environment (e.g., Manchester moths during and after the Industrial Revolution) and of different environments (e.g., moths in Manchester versus Sydney). Such data are also commonly available in real world and often reflect changes in causal and selection mechanisms underlying evolution. We show how these changes can be used to further improve model identifiability.

We first define what we mean by \textit{changes}. We term different generations or environments uniformly as different \textit{domains}.

\begin{definition}[\textbf{Change of mechanisms across domains}]\label{def:change_domains}
Let $X, S, \mathcal{G}$ be as in~\cref{def:evolutionary_selection_model}. Consider multiple domains parameterized by $p^{(1)}, \dots p^{(K)}$ from their corresponding evolutionary graphs $\mathcal{G}^{(T_1)}, \dots, \mathcal{G}^{(T_K)}$. We say the selection mechanism is \textit{changed}, or simply $S$ is changed, when there exists $k_1 \neq k_2$, s.t. $p^{(k_1)}(S^{(t)} | X^{(t)}) \neq p^{(k_2)}(S^{(t)} | X^{(t)})$ for some $t<\min(T_{k_1}, T_{k_2})$. We say the causal mechanism of an $X_i \in X$ is \textit{changed}, or simply $X_i$ is changed, when there exists $k_1 \neq k_2$, s.t. $p^{(k_1)}(X_i^{(t_1)} | \pa_{\Gcal^{(T_{k_1})}}(X_i^{(t_1)})) \neq p^{(k_2)}(X_i^{(t_2)} | \pa_{\Gcal^{(T_{k_2})}}(X_i^{(t_2)}))$ for $t_1,t_2=T_{k_1},T_{k_2}$ and for at least another pair of $t_1=t_2<\min(T_{k_1},T_{k_2})$. Finally, we denote the set of changed variables by $I \subseteq X \cup \{S\}$. Note that we assume the causal and selection structures remain fixed, while only their parameterizations can change.
\end{definition}

We then show how such changes induce testable implications in multi-domain observed data. As CIs in each domain already characterized, we focus on another information: the (in)variance of conditional distributions across domains. Let $\zeta$ be the domain index; these invariances can be expressed as CIs between $\zeta$ and other variables in $X$. Previous works show that, in standard settings, these CIs correspond exactly to d-separations in the causal graph with $\zeta$ as an additional root and pointing to changed variables~\citep{zhang2015discovery,huang2020causal,mooij2020joint}. We show that a similar characterization also holds in our setting, though constructing this graph requires additional care:

\begin{figure}[t]
  \centering
  \begin{tikzpicture}[->,>=stealth,shorten >=1pt,auto,semithick,inner sep=0.2mm,scale=0.8]
    \tikzstyle{Lstyle} = [draw, rectangle, minimum size=0.5cm, fill=gray!20]
    \tikzstyle{Xstyle} = [draw, circle, minimum size=0.6cm]
    \tikzstyle{Sstyle} = [draw, circle, minimum size=0.56cm, double, double distance=0.04cm, preaction={pattern=dots, pattern color=gray!50}]
    
    \node[Xstyle] (M3) at (1.4,0) {$X_2$};
    \node[Xstyle] (M1) at (2.8,0) {$X_3$};
    \node[Xstyle] (M0) at (0,0) {$X_1$};
    \node[Xstyle] (M4) at (0.7,-1.2) {$X_4$};
    \node[Xstyle] (M2) at (2.1,-1.2) {$X_5$};
    \node[Xstyle] (zeta) at (1.4,1.4) {$\zeta$};
    
    \draw[->] (M0) -- (M4);
    \draw[->] (M0) -- (M2);
    \draw[->] (M4) -- (M2);
    \draw[->] (M3) -- (M1);
    \draw[->] (M1) -- (M2);
    
    \draw[->] (M0) -- (M3);
    \draw[bend left=40] (M0) to (M1);

    \draw[bend right=15, line width=1.7pt] (zeta) to (M0);
    \draw[bend left=15, line width=1.7pt] (zeta) to (M1);
    \draw[->, line width=1.7pt] (zeta) -- (M3);

    \def\xgap{5}

    \node[Xstyle] (M3p) at (\xgap+1.4,0) {$X_2$};
    \node[Xstyle] (M1p) at (\xgap+2.8,0) {$X_3$};
    \node[Xstyle] (M0p) at (\xgap+0,0) {$X_1$};
    \node[Xstyle] (M4p) at (\xgap+0.7,-1.2) {$X_4$};
    \node[Xstyle] (M2p) at (\xgap+2.1,-1.2) {$X_5$};
    \node[Xstyle] (zetap) at (\xgap+1.4,1.4) {$\zeta$};

    \draw[bend right=15] (zetap) to (M0p);
    \draw[bend left=15] (zetap) to (M1p);
    \draw[->] (zetap) -- (M3p);

    \draw[->, line width=1.7pt] (M0p) -- (M4p);

    \draw[->] (M0p) -- (M2p);
    \draw[->] (M1p) -- (M2p);  %
    \draw[->] (M4p) -- (M2p);

    \draw[-] (M3p) -- (M1p);
    \draw[-] (M1p) -- (M3p);
    
    \draw[-] (M0p) -- (M3p);
    \draw[-] (M3p) -- (M0p);
    
    \draw[-] (M0p) to[bend left=40] (M1p);
    \draw[-] (M1p) to[bend right=40] (M0p);

  \end{tikzpicture}
  \caption{An illustration of model identification with multi-domain data (\cref{sec:approach}). Let $\Gcal, \Gcal^+, \mathcal{C}$ be as in~\cref{fig:running_example_identifiability,fig:augmented_DAG_example}. Let $I=\{S\}$, i.e., only the selection mechanism changes across domains. \textbf{Left:} The \textit{multi-domain clique-augmented DAG} $\Gcal^{+I}$ from \cref{thm:clique_multi_domain}. Comparing $\Gcal^{+I}$ against $\Gcal^{+}$, highlighted are the additional edges from $\zeta$, but note that the causal mechanisms of $X_1,X_2,X_3$ themselves in fact do not change. \textbf{Right:} PDAG $\mathcal{P}$ output by~\cref{algo:multi_domain}. Comparing $\mathcal{P}$ against $\mathcal{C}$, the newly oriented edge $X_1 \rightarrow X_4$ (highlighted) illustrates how heterogeneity helps improve identifiability. Verify \cref{thm:sound_complete_cpdag_multi_domain}: indeed, $X_1 \in \pa_\Gcal(X_4)$, and $X_4 \not\in \anc_\Gcal(S)$.\vspace{-1em}}
  \label{fig:multi_domain_example}
\end{figure}

\begin{restatable}[\textbf{Multi-domain clique-augmented DAG representation}]{theorem}{THMCLIQUEDAGMULTIDOMAIN}\label{thm:clique_multi_domain}
Let all notations be as in~\cref{def:evolutionary_selection_model,def:clique_augmented_DAG,def:change_domains}. We construct a new DAG $\mathcal{G}^{+I}$ by adding an auxiliary vertex $\zeta$ to $\mathcal{G}^+$ and draw an edge $\zeta \rightarrow X_i$ for each $X_i \in I$. Further, if $\anc_{\mathcal{G}}(S) \cap I \neq \varnothing$, that is, when selection itself or any variable involved in selection is changed, we add edges from $\zeta$ to all of $\anc_{\mathcal{G}}(S) \backslash \{S\}$ to $\Gcal^{+I}$. Then, for any sets $A, C \subseteq X$, a d-separation $\zeta \dsep X_A \mid X_C$ in $\mathcal{G}^{+I}$ implies invariance of the observed conditional distribution $p^{(k)}(X_A^{(T_k)} | X_C^{(T_k)}, S^{(<T_k)} = 1)$ across all the $K$ domains.
\end{restatable}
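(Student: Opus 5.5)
The idea is to build one \emph{joint} evolutionary DAG in which the domain index is an explicit root, and then reduce the claim to a d-separation statement in that DAG. The reduction follows the same pattern as \cref{thm:ci_implications,thm:clique_augmented_dag_representation}.

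\textbf{Step 1: construct the joint graph.} To handle different $T_k$, pad each domain to a common horizon $T_{\max}=\max_k T_k$ by letting the "observed generation" be indexed by $\zeta$. Concretely, I define $\Gcal^{(T)}_\zeta$ as the evolutionary graph $\Gcal^{(T)}$ of \cref{def:evolutionary_selection_model}, augmented with a root $\zeta$. For each generation $t$, $\zeta$ gets an edge into every mechanism that \cref{def:change_domains} allows to vary:
\begin{itemize}[noitemsep,topsep=0pt,left=5pt]
\item $\zeta\to S^{(t)}$ for all $t$ whenever $S\in I$;
\item $\zeta\to X_i^{(t)}$ for all $t$ whenever $X_i\in I$.
\end{itemize}
Inheritance mechanisms carry no $\zeta$ edges, since they are not counted as changes in \cref{def:change_domains}. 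Any $p^{(1)},\dots,p^{(K)}$ consistent with \cref{def:change_domains} then form the conditionals of a single distribution Markov to $\Gcal^{(T)}_\zeta$, with $\zeta$ uniform on $\{1,\dots,K\}$. Under this distribution, invariance of $p^{(k)}(X_A^{(T_k)}\mid X_C^{(T_k)},S^{(<T_k)}=1)$ across $k$ is exactly $\zeta\indep X_A^{(T)}\mid X_C^{(T)},S^{(<T)}=1$. By the global Markov property (soundness direction only), it therefore suffices to show the following: $\zeta\dsep X_A\mid X_C$ in $\Gcal^{+I}$ implies $\zeta\dsep X_A^{(T)}\mid X_C^{(T)},S^{(<T)}$ in $\Gcal^{(T)}_\zeta$.

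\textbf{Step 2: reduce to the clique-augmented representation.} The natural move is to treat $\zeta$ as one more trait variable. Form a static DAG $\Gcal_\zeta$ over $X\cup\{\zeta,S\}$ by adding $\zeta\to X_i$ for $X_i\in I\cap X$, and $\zeta\to S$ if $S\in I$. Then $\Gcal^{(T)}_\zeta$ is almost the evolutionary graph of $\Gcal_\zeta$. The mismatch is that $\zeta$ is a single shared vertex rather than per-generation copies $\zeta^{(t)}$ linked by an inheritance chain $\epsilon_\zeta^{(t)}\to\epsilon_\zeta^{(t+1)}$. I would handle this by simulation: a deterministic inheritance chain with $\zeta^{(t)}=\epsilon_\zeta^{(t)}$ reproduces a shared $\zeta$. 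Graphically, I would argue directly that any d-connecting path in $\Gcal^{(T)}_\zeta$ lifts to one in the per-generation version; the shared vertex only shortcuts paths.

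With that in place, the proof of \cref{thm:clique_augmented_dag_representation}, applied to $\Gcal_\zeta$, gives the clique-augmented DAG $\Gcal_\zeta^+$ as a faithful representation, with $\zeta$ placed first in $\pi$. If $\zeta\in\anc_{\Gcal_\zeta}(S)$, then $\zeta$ is joined to all of $\anc_\Gcal(S)\setminus\{S\}$. This holds exactly when $I\cap\anc_\Gcal(S)\neq\varnothing$, since then $\zeta$ points into an ancestor of $S$ (or into $S$ itself), which is precisely the extra rule defining $\Gcal^{+I}$. Otherwise $\zeta$ is not involved in selection and only the edges $\zeta\to X_i$, $X_i\in I$, survive. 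So $\Gcal_\zeta^+=\Gcal^{+I}$, and d-separation in $\Gcal^{+I}$ transfers to $\Gcal^{(T)}_\zeta$.

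\textbf{Main obstacle.} I expect the hard part to be this transfer step, which is needed because the hypotheses of \cref{thm:clique_augmented_dag_representation} are not literally met. First, $\zeta$ is never conditioned on or observed as a generation-$T$ node in the same way as the traits. Second, its "inheritance" is a shared root rather than the edges $\epsilon^{(t)}\to\epsilon^{(t+1)}$. My plan is to check only the direction actually needed: every d-connecting path from $\zeta$ to $X_A^{(T)}$ given $X_C^{(T)},S^{(<T)}$ projects to a d-connecting path in $\Gcal^{+I}$. I would do this by case analysis on the first edge of the path, $\zeta\to X_i^{(t)}$ or $\zeta\to S^{(t)}$.

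If $t=T$, the path continues within generation $T$ or up the $\epsilon$-chains, and the argument of \cref{thm:clique_augmented_dag_representation} applies verbatim. If $t<T$, the path must reach generation $T$ through an open collider at some $S^{(t')}$. That forces the traversed trait to be in $\anc_\Gcal(S)$, and this is exactly where the added edges from $\zeta$ to all of $\anc_\Gcal(S)\setminus\{S\}$ supply the corresponding connection in $\Gcal^{+I}$.
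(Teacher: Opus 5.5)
The step that fails is the padding to a common horizon in Step~1. If the $T_k$ differ, no placement of $\zeta$-edges of the kind you allow makes generation $T_{\max}$ of a shorter domain reproduce $p^{(k)}(X^{(T_k)}\mid S^{(<T_k)}=1)$. The padded generations would need selection switched off, which forces $\zeta\to S^{(t)}$ even when $S\notin I$. They would also need mutation switched off, which forces $\zeta$ into the $\epsilon$-chains; you exclude those edges, and they would make $\zeta$ an ancestor of every $X_i^{(T)}$. Indexing the observed generation by $\zeta$ (observing $X^{(T_\zeta)}$ given $S^{(<T_\zeta)}=1$) does not help either, since that is not a d-separation query in one fixed DAG.

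This cannot be repaired, because the statement itself is false for unequal horizons. Take $\Gcal$ with the single edge $X_1\to S$ plus an isolated trait $X_2$. Give two domains identical mechanisms: $X_i^{(t)}=\epsilon_i^{(t)}$, $\epsilon_i^{(0)}\sim\mathcal{N}(0,1)$, $\epsilon_i^{(t+1)}$ equal to $\epsilon_i^{(t)}$ plus an independent $\mathcal{N}(0,1)$ increment, and $p(S^{(t)}=1\mid X_1^{(t)})$ strictly increasing. Let $T_1=1$ and $T_2=2$. Then $I=\varnothing$ and $\zeta$ is isolated in $\Gcal^{+I}$. Yet $\mathbb{E}[X_1^{(2)}\mid S^{(0)}=S^{(1)}=1]$ is the mean of the law of $\epsilon_1^{(1)}\mid S^{(0)}=1$ reweighted by an increasing function, so it strictly exceeds $\mathbb{E}[X_1^{(1)}\mid S^{(0)}=1]$. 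Also $\mathrm{Var}(X_2^{(T_k)})=1+T_k$ differs across domains even though $X_2$ is untouched by selection, so declaring $S\in I$ would not rescue the claim either.

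For a common horizon $T_k\equiv T$, your argument is correct and follows the paper's own route. The paper adds $\zeta$ to the evolutionary graph, rewrites invariance as $\zeta\indep X_A\mid X_C$, reads it off by d-separation, and summarizes by a DAG. Its two-line sketch omits the details you supply, and it also never addresses differing $T_k$. The pieces you supply check out:
\begin{itemize}
\item $\zeta\in\anc_{\Gcal_\zeta}(S)$ exactly when $I\cap\anc_\Gcal(S)\neq\varnothing$, so $\Gcal_\zeta^+=\Gcal^{+I}$ once $\zeta$ is put first in $\pi$.
\item \cref{thm:clique_augmented_dag_representation} applies verbatim to $\Gcal_\zeta$ with $\zeta$ as an endpoint.
\item For $t<T$, replace the first edge $\zeta\to V^{(t)}$ by $\zeta^{(T)}\leftarrow\epsilon_\zeta^{(T)}\leftarrow\cdots\leftarrow\epsilon_\zeta^{(t)}\to\zeta^{(t)}\to V^{(t)}$; for $t=T$, simply rename $\zeta$ as $\zeta^{(T)}$. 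This adds only unconditioned non-colliders and leaves the descendant sets of all original vertices unchanged.
\end{itemize}
That lift is already the entire transfer step, so the case analysis in your last paragraph is unnecessary. You should, however, state explicitly two hypotheses that Step~1 uses but \cref{def:change_domains} does not literally provide. First, $p(\epsilon^{(0)})$ and the inheritance kernels are shared across domains. Second, $X_i\notin I$ means its mechanism is invariant at every generation.
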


We name the DAG $\Gcal^{+I}$ over vertices $X\cup \{\zeta\}$ as the \textit{multi-domain clique-augmented DAG}. Note that an edge $\zeta \rightarrow X_i \in \Gcal^{+I}$ does not imply that $X_i$ itself is changed; it may be due to selection change, as shown in~\cref{fig:multi_domain_example}. Now, just as how the DAG representation $\Gcal^{+}$ suggests the use of PC in~\cref{sec:model}, this DAG $\Gcal^{+I}$ suggests that we can, again, directly apply the existing standard multi-domain methods, even though they were not intended for evolutionary selection:

\begin{plainalgorithm}[\textbf{Model identification from multi-domain data}]\label{algo:multi_domain}
\textbf{Input:} Observed data from multiple domains over a common variable set $X$. \textbf{Procedure:} Apply CDNOD \citep{huang2020causal} or comparable methods. \textbf{Output:} A partially directed acyclic graph (PDAG) over vertices $X \cup \{\zeta\}$.\looseness=-1
\end{plainalgorithm}

Comparing to \cref{algo:one_domain}, \cref{algo:multi_domain} orient more edges (and all ones identifiable) given the observed changes:

\begin{restatable}[\textbf{Soundness and completeness of~\cref{algo:multi_domain}}]{theorem}{THMSOUNDCOMPLETEMULTIDOMAIN}\label{thm:sound_complete_cpdag_multi_domain}
Let all notations be as in~\cref{def:evolutionary_selection_model,def:clique_augmented_DAG,def:change_domains}. Assume faithfulness w.r.t. \cref{thm:ci_implications,thm:clique_multi_domain}. Let $\mathcal{P}_X$ be the subgraph of \cref{algo:multi_domain}'s output PDAG on $X$. Then, all statements of~\cref{thm:sound_complete_cpdag_one_domain} continue to hold when `$\mathcal{C}$' is replaced by `$\mathcal{P}_X$'; $X_i \rightarrow X_j \in \mathcal{C}$ implies $X_i \rightarrow X_j \in \mathcal{P}_X$, but not vice versa.
\end{restatable}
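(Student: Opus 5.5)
The plan is to reduce everything to known results about CD-NOD on an ordinary causally sufficient DAG with a domain index, namely $\Gcal^{+I}$. By \cref{thm:ci_implications} and \cref{thm:clique_augmented_dag_representation}, the within-domain CIs among $X$ are exactly the d-separations in $\Gcal^+$. By \cref{thm:clique_multi_domain}, together with the assumed faithfulness, the CIs involving $\zeta$ are exactly the d-separations in $\Gcal^{+I}$. The pooled data over $X\cup\{\zeta\}$ therefore behave as if generated by the DAG $\Gcal^{+I}$, with $\zeta$ exogenous, and no selection or evolution. Consequently the output $\mathcal{P}$ of \cref{algo:multi_domain} is, by the known soundness and completeness of CDNOD, the maximally oriented PDAG representing the equivalence class of $\Gcal^{+I}$ under the background constraint that $\zeta$ has no parents. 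In particular, the skeleton of $\mathcal{P}_X$ equals the skeleton of $\Gcal^+$, since $\zeta$ is never a separating variable that removes $X$–$X$ adjacencies.

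Adjacency soundness and completeness then carries over verbatim from \cref{thm:sound_complete_cpdag_one_domain}, because the skeleton is unchanged. For the inclusion $\mathcal{C}\subseteq\mathcal{P}_X$ in orientations, note that every v-structure of $\Gcal^+$ remains one in $\Gcal^{+I}$. The Meek rules applied on the larger graph, with extra compelled edges out of $\zeta$, can only orient a superset. The strictness claim ("but not vice versa") is witnessed by the running example of \cref{fig:multi_domain_example}, where $X_1\rightarrow X_4$ is newly oriented.

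For orientation soundness, I take an oriented $X_i\rightarrow X_j\in\mathcal{P}_X$; this edge is then directed $X_i\rightarrow X_j$ in $\Gcal^{+}$. I need to rule out $X_j\in\anc_\Gcal(S)$. Suppose both endpoints are in $\anc_\Gcal(S)$, so they lie in the clique $K=\anc_\Gcal(S)\setminus\{S\}$ of $\Gcal^+$. Within $K$, the topological order $\pi$ was arbitrary, so any permutation of $K$ consistent with a topological order of the rest gives a Markov-equivalent $\Gcal^+$, by the same argument used in the single-domain theorem. The new ingredient is $\zeta$: by the construction in \cref{thm:clique_multi_domain}, $\zeta$ points either to all of $K$ or only to those $X_i\in I\cap K$. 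In the second case $I\cap\anc_\Gcal(S)=\varnothing$, so $\zeta$ points to none of $K$. In both cases $\zeta$ is homogeneous on $K$, so reordering $K$ creates or destroys no v-structure at $\zeta$. This step is the main obstacle. I would argue it by checking v-structures $\zeta\rightarrow X_k\leftarrow X_l$ with $X_k\in K$: they are impossible, because $\pa_{\Gcal^+}(X_k)\setminus\{\zeta\}$ consists of members of $K$ and their $\Gcal$-parents, all of which are ancestors of $S$ and hence in $K$. These parents are therefore adjacent to $\zeta$ whenever $\zeta$ is adjacent to $X_k$. Hence edges inside $K$, and by the same reasoning edges entering $K$ whose reversal is a Markov-equivalent move, stay unoriented. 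An oriented edge therefore has $X_j\notin K$. Then $X_i\rightarrow X_j\in\Gcal^+$ with $X_j$ not in the clique must be an original edge of $\Gcal$, giving $X_i\in\pa_\Gcal(X_j)$.

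For orientation completeness, I take an unoriented edge in $\mathcal{P}_X$ and use the completeness of CDNOD. This yields some $\tilde G\in[\Gcal^{+I}]$ reversing it. I then lift $\tilde G$ to a static DAG $\Gcal'$ and change set $I'$ with $\Gcal'^{+I'}=\tilde G$, following the construction used for \cref{thm:sound_complete_cpdag_one_domain}: take $\Gcal'=\tilde G$ restricted to $X$, with $S$ having the same parents as in $\Gcal$ or none, and set $I'$ to the children of $\zeta$. The delicate point is ensuring that the lifted $\Gcal'^{+I'}$ does not add $\zeta$-edges beyond those in $\tilde G$. Choosing $\pa_{\Gcal'}(S)=\varnothing$ whenever $\tilde G$ is not homogeneous on $K$ handles this, since the clique augmentation then becomes vacuous.
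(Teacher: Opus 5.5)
Your reduction is the same as the paper's. The pooled CIs over $X\cup\{\zeta\}$ are those of $\Gcal^{+I}$ with $\zeta$ a root, so the output is the maximally oriented PDAG under the background knowledge that all $\zeta$-edges point out, and Meek's rules are sound and complete for that. Your adjacency, inclusion and strictness arguments are fine. Your soundness argument is also correct: $\zeta$ is adjacent to all or none of $K=\anc_\Gcal(S)\setminus\{S\}$, and all $\Gcal^+$-parents of a vertex in $K$ lie in $K$, so every total order of $K$ gives a member of the restricted class. This is exactly the detail the paper's one-line appeal to Meek's rules leaves implicit.

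The step that fails as written is the lift in the completeness part. The danger is not only extra $\zeta$-edges. If you keep $\pa_{\Gcal'}(S)=\pa_\Gcal(S)$, ancestry of $S$ is now computed in $\tilde G_X$. Reversing an edge leaving $K$ then drags a new vertex into $\anc_{\Gcal'}(S)$, and the clique augmentation adds $X$--$X$ edges that are not in $\tilde G$. Homogeneity of $\tilde G$ on $K$ does not prevent this.

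For instance, take $X_1\to S\leftarrow X_2$, $X_1\to X_3$, and $I=\varnothing$. Then $\mathcal{P}_X$ is $X_2\dashdash X_1\dashdash X_3$. The only $\tilde G$ reversing $X_1\dashdash X_3$ is $X_3\to X_1\to X_2$, which is trivially homogeneous on $K=\{X_1,X_2\}$, so your rule keeps $S$'s parents. But then $X_3\in\anc_{\Gcal'}(S)$, so $\Gcal'^{+}$ is complete on $\{X_1,X_2,X_3\}$. This destroys $X_2\indep X_3\mid X_1$ and changes the output.

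The repair is to \emph{always} take $\pa_{\Gcal'}(S)=\varnothing$ and $I'=\ch_{\tilde G}(\zeta)\subseteq X$. Then $\anc_{\Gcal'}(S)=\{S\}$ and $\anc_{\Gcal'}(S)\cap I'=\varnothing$, so $\Gcal'^{+I'}=\tilde G$ exactly. The relation of $X_i,X_j$ in $\Gcal'$ then always differs from that in $\Gcal$:
\begin{itemize}
\item when $X_i\in\pa_\Gcal(X_j)$, choose $\tilde G$ by Meek completeness so that $\Gcal'$ has the reversed direct edge;
\item when $\Gcal$ had only joint selection involvement, any $\tilde G$ works, since $\Gcal'$ then has a direct edge.
\end{itemize}
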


An example of the PDAG output of~\cref{algo:multi_domain} is shown in~\cref{fig:multi_domain_example}. We conclude this section with two remarks:

First, the presence of selection still cannot be confirmed, even with data heterogeneity that enables identification of more causal relations and non-involvements in selection. This is reflected in the absence of $S$ from the DAG $\Gcal^{+I}$.

Second, suppose selection exists and changes (e.g., $I=\{S\}$). Then in the DAG $\Gcal^{+I}$, all variables involved in selection are directly ``caused'' by $\zeta$, though their own causal mechanisms are unchanged. This aligns with our common understanding: when we say ``environment and evolution influence traits,'' this influence may often act through changes in fitness preferences across environments and during evolution, rather than by directly altering traits themselves.

\vspace{-0.5em}

\section{Experiments and Results}
\label{sec:experiments}

\vspace{-0.2em}

\begin{figure*}[t]
    \centering
\begin{subfigure}{.25\linewidth}
  \centering
    \includegraphics[height=8em]{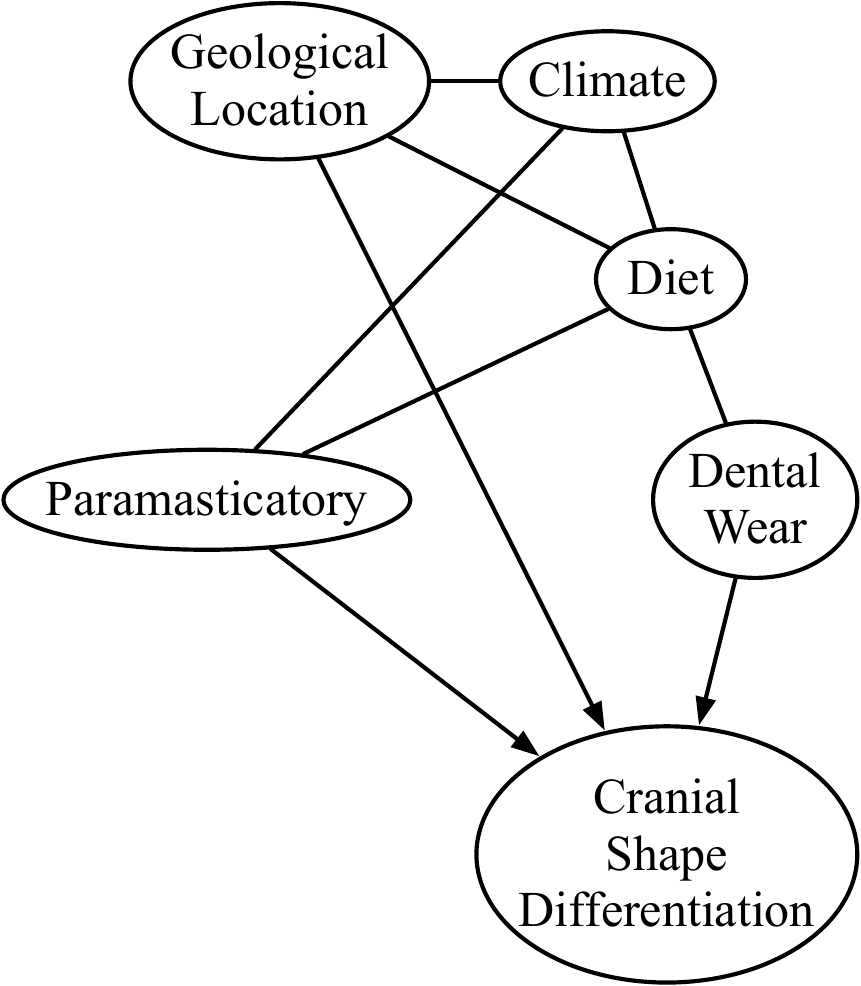}
  \caption{\textbf{Cranial}}
\end{subfigure}%
\begin{subfigure}{.25\linewidth}
  \centering
    \includegraphics[height=8em]{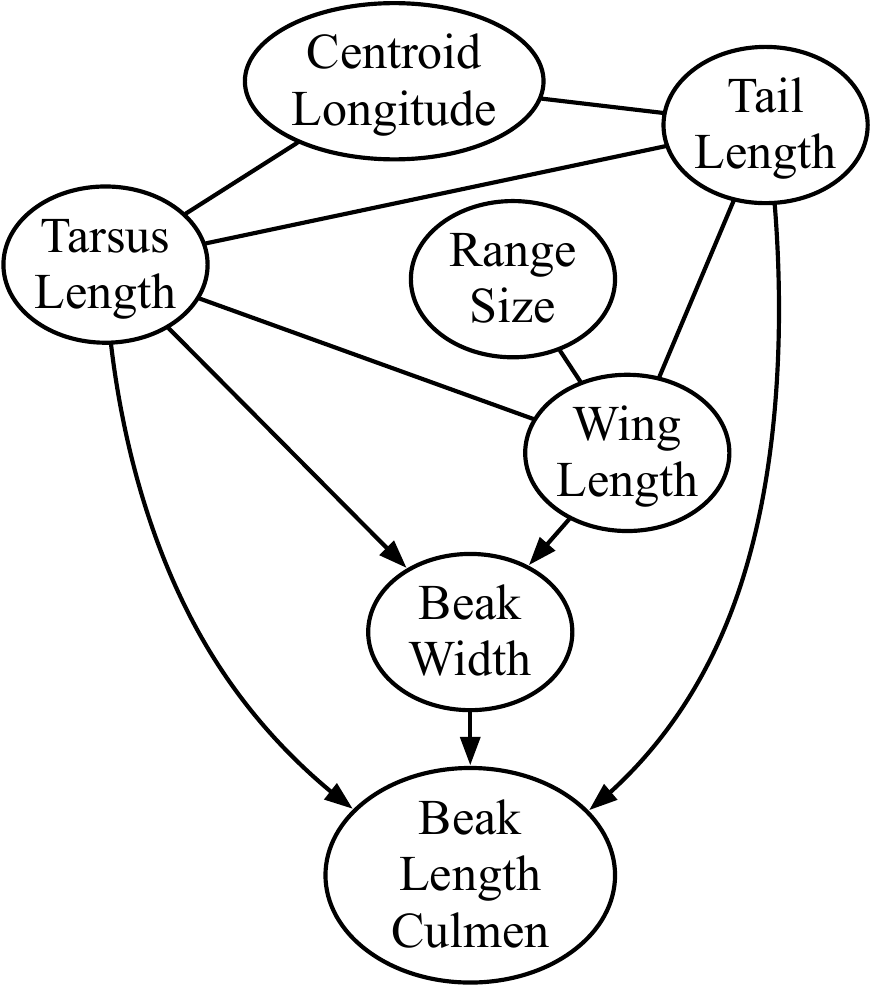}
  \caption{\textbf{AVONET}}
\end{subfigure}%
\begin{subfigure}{.25\linewidth}
  \centering
    \includegraphics[height=8em]{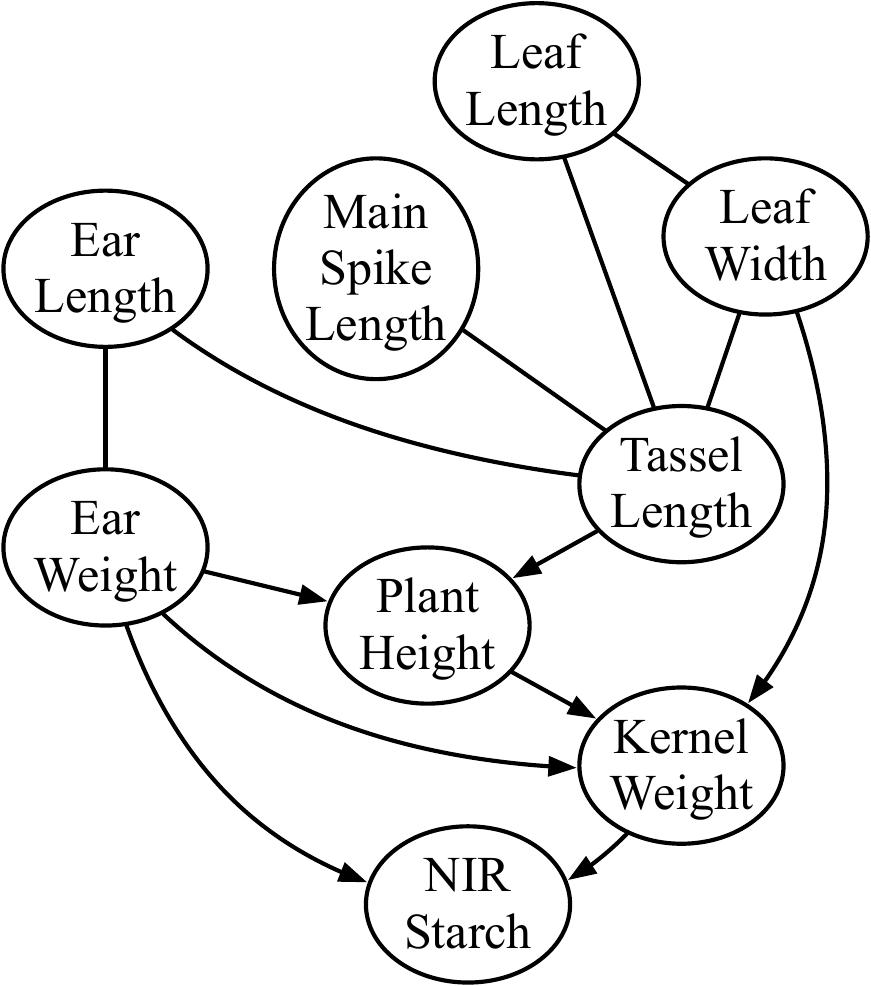}
  \caption{\textbf{Panzea}}
\end{subfigure}%
\begin{subfigure}{.25\linewidth}
  \centering
    \includegraphics[height=8em]{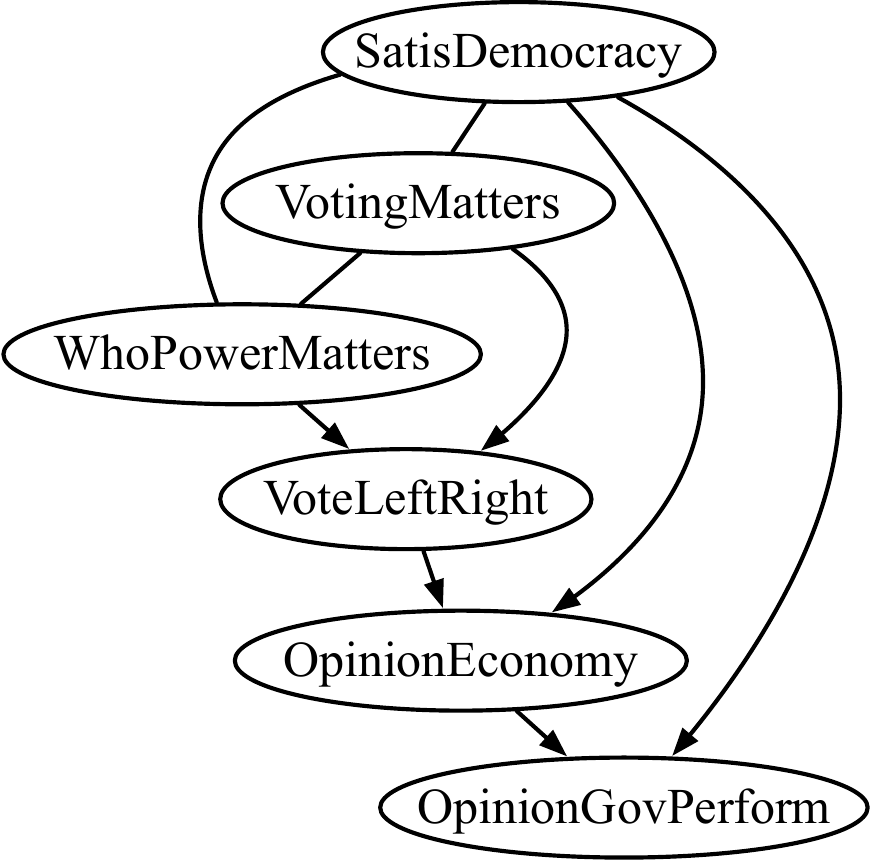}
  \caption{\textbf{CSES}}
\end{subfigure}    
\vspace{-1.5em}
    \caption{
        Subgraphs of learned CPDAGs/PDAGs on four datasets. Oriented edges ($\rightarrow$) indicate causal relations whose effect variables are not involved in selection. Selection, if present, must be among cliques linked by unoriented edges ($\dashdash$). Full results in~\cref{subsec:realdata_results_appendix}.\looseness=-1
    }
    \label{fig:real_results_main}
    \vspace{-1.2em}
\end{figure*}

\begin{figure}
    \centering
    \includegraphics[height=21ex]{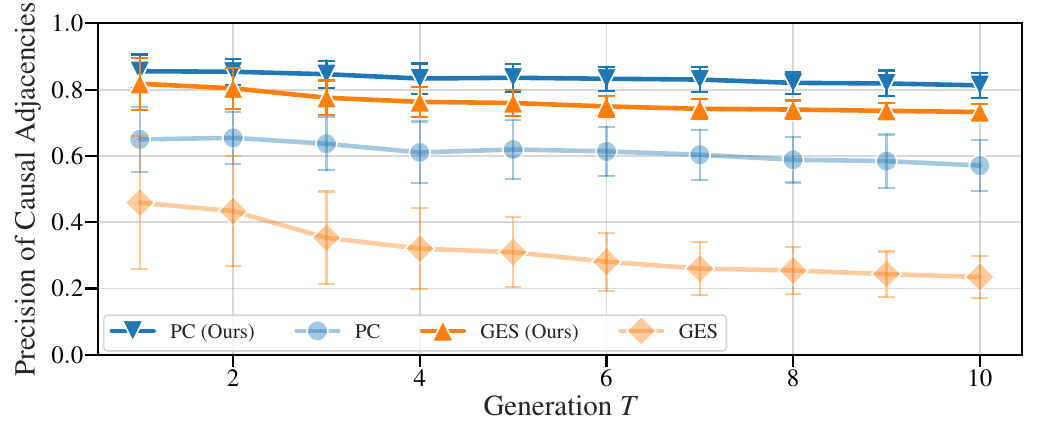}
    \vspace{-0.7em}
    \caption{Precision (y-axis) of causal adjacencies for $d=20$ observed variables, using the standard interpretation and ours. Mean and standard deviation over $50$ random runs are shown.}
    \vspace{-2em}
\label{fig:simulation_results_d20}
\end{figure}

In this section, we present empirical studies on synthetic and real-world data to demonstrate that our method effectively recovers causal mechanisms under evolutionary selection.

\newpage

\subsection{Experiments on Synthetic Data}\label{subsec:synthetic}

We evaluate our method on synthetic data generated according to the evolutionary selection model in~\cref{def:evolutionary_selection_model}. Specifically, to simulate causal mechanisms within generations, we first generate a random Erd\"{o}s--R\'{e}nyi graph~\citep{Erdos1959random} among $d \in \{10,15,20\}$ observable variables $X$ with an average degree of $2$. Next, to simulate preferential fitness, we add a selection variable $S$ with $d/5$ parents randomly chosen from $X$. A linear structural equation model (SEM) is then instantiated on this graph, with edge coefficients drawn uniformly from $[-2,-0.5]\cup[0.5,2]$ and variances of Gaussian noise terms $\epsilon$ from $[1,4]$. Unlike static selection where samples are filtered based on $S$, here each sample in generation $t$ produces between $0$ and $5$ offspring, determined by the percentile rank of $S^{(t)}$ within generation $t$, reflecting the relative fitness on reproduction. Each offspring inherits its noise term $\epsilon^{(t+1)}$ from $\epsilon^{(t)}$ plus a unit-variance Gaussian perturbation, and generates its $X^{(t+1)}$ according to the same SEM. Repeating this process for $T$ generations yields a population evolved under selection.\looseness=-1

Our goal is to validate \cref{thm:sound_complete_cpdag_one_domain}, that is, only the oriented edges in the output CPDAGs by standard algorithms are guaranteed to be true causal relations. We apply PC~\citep{spirtes2000causation} and GES~\citep{chickering2002optimal} to the data and compare our interpretation, a more conservative one, to the standard interpretation that treats all adjacencies as causal relations. We report the precision of each interpretation in recovering true causal adjacencies, regardless of directions.

The results for $d=20$ are presented in \cref{fig:simulation_results_d20}. Notably, our interpretation consistently achieves higher precision than the standard one for both PC and GES, and remains stable across generations $T$. The lower precision for standard GES is partly from its tendency to output denser graphs with many unoriented edges, though its oriented edges remain reliable. Results for $d=10,15$ are provided in \cref{subsec:simulation_results_appendix}.

We also compare with another method for latent-variable causal discovery, and conduct experiments under model misspecifications with i) latent confounding, ii) dependent inheritance, and iii) structural changes across domains (edge reversals). See also \cref{subsec:simulation_results_appendix} for these further results.

\subsection{Experiments on Real-World Data}

We evaluate our methods on seven real-world datasets, including \textit{Drosophila} gene expressions (\textbf{DGRP}; \citet{everett2020gene}), archaeological human cranial measurements (\textbf{Cranial}; \citet{noback2011climate}), agronomic morphology data for maize (\textbf{Panzea}; \citet{zhao2006panzea}), phenotypic trait data for mammals (\textbf{PanTHERIA}; \citet{jones2009pantheria}) and birds (\textbf{AVONET}; \citet{tobias2022avonet}), cross-national political survey data from the Comparative Study of Electoral Systems (\textbf{CSES}; \href{https://cses.org}{link}), and demographic microdata from the U.S. Census American Community Survey (\textbf{PUMS}; \href{https://www.census.gov/programs-surveys/acs/microdata.html}{link}).

\cref{fig:real_results_main} shows four example subgraphs of learned CPDAGs/PDAGs (depending on whether multi-domain data are used) over key variable sets. Among them, many inferred relations are readily interpretable and align with domain knowledge. In the \textbf{Cranial} data, geography, climate, and dental wear causally affect cranial shape, which itself does not appear to be involved in fitness selection; selection, if any, likely operates upstream, such as among climate and diet (see the unoriented cliques). Similarly, in \textbf{AVONET}, beak morphology is also downstream, with selection, if any, likely acting on locomotion traits (e.g., wing and tarsus) that influence ecological access and shape dietary (latent), and thus beak-variation. In \textbf{Panzea}, the causal ordering from vegetative and reproductive traits to yield-related outcomes seems reasonable. Finally, in \textbf{CSES}, the clique among attitudinal variables (e.g., beliefs about voting efficacy) may suggest differential propagation of views.

Note that the analyses above are qualitative. For quantitative evaluation, we tried to find evolutionary-selection-related datasets with ground-truth causal graphs, but were unable to find any. That said, to make the evaluation more convincing, we have added the following analyses: For the DGRP dataset, we use the reported eQTL-probed (which may be incomplete) regulator–target pairs as partial ground truth. For the rest six datasets, we use the LLM-generated pseudo ground truth. Full results are provided in~\cref{subsec:realdata_results_appendix}.

\vspace{-0.5em}

\section{Conclusion and Limitations}
\label{sec:conclusions}

\vspace{-0.1em}

In this work, we introduce the evolutionary selection model to represent data generated with differential fitness in evolution, a setting often ignored or conflated with static selection. We then show how underlying mechanisms can still be reliably recovered from such data. A key limitation is that heritable factors are assumed to operate independently, motivating future work on parameterized and equilibrium-aware models for more complex inheritance scenarios. Other limitations include the model's specificity to the evolution use case, and the assumptions of causal sufficiency, faithfulness, and cross-domain structural invariance.

\section*{Acknowledgment}
We would like to acknowledge the support from NSF Award No.~2229881, AI Institute for Societal Decision Making (AI-SDM), the National Institutes of Health (NIH) under Contract R01HL159805, and grants from Quris AI, Florin Court Capital, MBZUAI-WIS Joint Program, and the Al Deira Causal Education project. We also thank the anonymous reviewers for their helpful suggestions.

\section*{Impact Statement}
This paper presents work whose goal is to advance the field of causal discovery. Our method has implications for analyzing e.g., demographical traits, political attitudes, and cultural differences in social science related datasets. While our work enhances such tasks, responsible application is crucial to avoid misinterpretation in the results.

\bibliography{references}

@article{everett2020gene,
  title={Gene expression networks in the Drosophila genetic reference panel},
  author={Everett, Logan J and Huang, Wen and Zhou, Shanshan and Carbone, Mary Anna and Lyman, Richard F and Arya, Gunjan H and Geisz, Matthew S and Ma, Junwu and Morgante, Fabio and St. Armour, Genevieve and others},
  journal={Genome research},
  volume={30},
  number={3},
  pages={485--496},
  year={2020},
  publisher={Cold Spring Harbor Laboratory Press}
}

@article{akbari2021,
	title        = {Recursive causal structure learning in the presence of latent variables and selection bias},
	author       = {Akbari, Sina and Mokhtarian, Ehsan and Ghassami, AmirEmad and Kiyavash, Negar},
	year         = {2021},
	journal      = {Advances in Neural Information Processing Systems},
	volume       = {34},
	pages        = {10119--10130}
}

@article{duneau2018signatures,
  title={Signatures of insecticide selection in the genome of Drosophila melanogaster},
  author={Duneau, David and Sun, Haina and Revah, Jonathan and San Miguel, Keri and Kunerth, Henry D and Caldas, Ian V and Messer, Philipp W and Scott, Jeffrey G and Buchon, Nicolas},
  journal={G3: Genes, Genomes, Genetics},
  volume={8},
  number={11},
  pages={3469--3480},
  year={2018},
  publisher={Oxford University Press}
}

@article{battlay2018structural,
  title={Structural variants and selective sweep foci contribute to insecticide resistance in the Drosophila genetic reference panel},
  author={Battlay, Paul and Leblanc, Pontus B and Green, Llewellyn and Garud, Nandita R and Schmidt, Joshua M and Fournier-Level, Alexandre and Robin, Charles},
  journal={G3: Genes, Genomes, Genetics},
  volume={8},
  number={11},
  pages={3489--3497},
  year={2018},
  publisher={Oxford University Press}
}

@inproceedings{andrews2023fast,
	title        = {Fast Scalable and Accurate Discovery of {DAG}s Using the Best Order Score Search and Grow Shrink Trees},
	author       = {Bryan Andrews and Joseph Ramsey and Ruben Sanchez Romero and Jazmin Camchong and Erich Kummerfeld},
	year         = {2023},
	booktitle    = {Thirty-seventh Conference on Neural Information Processing Systems}
}

@inproceedings{bareinboim2012controlling,
	title        = {Controlling selection bias in causal inference},
	author       = {Bareinboim, Elias and Pearl, Judea},
	year         = {2012},
	booktitle    = {Artificial Intelligence and Statistics},
	pages        = {100--108},
	organization = {PMLR}
}

@incollection{bareinboim2014recovering,
	title        = {Recovering from selection bias in causal and statistical inference},
	author       = {Bareinboim, Elias and Tian, Jin and Pearl, Judea},
	year         = {2014},
	booktitle    = {Probabilistic and causal inference: The works of Judea Pearl},
	publisher    = {Association for Computing Machinery},
	pages        = {433--450}
}

@inproceedings{bareinboim2015recovering,
	title        = {Recovering causal effects from selection bias},
	author       = {Bareinboim, Elias and Tian, Jin},
	year         = {2015},
	booktitle    = {Proceedings of the AAAI Conference on Artificial Intelligence},
	volume       = {29}
}

@article{bareinboim2016causal,
	title        = {Causal inference and the data-fusion problem},
	author       = {Bareinboim, Elias and Pearl, Judea},
	year         = {2016},
	journal      = {Proceedings of the National Academy of Sciences},
	publisher    = {National Acad Sciences},
	volume       = {113},
	number       = {27},
	pages        = {7345--7352}
}

@inproceedings{bareinboim2022recovering,
	title        = {Recovering from selection bias in causal and statistical inference},
	author       = {Bareinboim, Elias and Tian, Jin and Pearl, Judea},
	year         = {2022},
	booktitle    = {Probabilistic and causal inference: The works of Judea Pearl},
	pages        = {433--450}
}

@article{chickering2002optimal,
	title        = {Optimal structure identification with greedy search},
	author       = {Chickering, David Maxwell},
	year         = {2002},
	journal      = {Journal of machine learning research},
	volume       = {3},
	number       = {Nov},
	pages        = {507--554}
}

@inproceedings{correa2019identification,
	title        = {Identification of causal effects in the presence of selection bias},
	author       = {Correa, Juan D and Tian, Jin and Bareinboim, Elias},
	year         = {2019},
	booktitle    = {Proceedings of the AAAI Conference on Artificial Intelligence},
	volume       = {33},
	pages        = {2744--2751}
}

@article{dai2024gene,
	title        = {Gene regulatory network inference in the presence of dropouts: a causal view},
	author       = {Dai, Haoyue and Ng, Ignavier and Luo, Gongxu and Spirtes, Peter and Stojanov, Petar and Zhang, Kun},
	year         = {2024},
	journal      = {ICLR}
}

@inproceedings{dai2025when,
	title        = {When Selection Meets Intervention: Additional Complexities in Causal Discovery},
	author       = {Haoyue Dai and Ignavier Ng and Jianle Sun and Zeyu Tang and Gongxu Luo and Xinshuai Dong and Peter Spirtes and Kun Zhang},
	year         = {2025},
	booktitle    = {The Thirteenth International Conference on Learning Representations}
}

@inproceedings{dong2023versatile,
	title={A Versatile Causal Discovery Framework to Allow Causally-Related Hidden Variables},
    author={Xinshuai Dong and Biwei Huang and Ignavier Ng and Xiangchen Song and Yujia Zheng and Songyao Jin and Roberto Legaspi and Peter Spirtes and Kun Zhang},
    booktitle={The Twelfth International Conference on Learning Representations},
    year={2024}
}

@article{dubin1989selection,
	title        = {Selection bias in linear regression, logit and probit models},
	author       = {Dubin, Jeffrey A and Rivers, Douglas},
	year         = {1989},
	journal      = {Sociological Methods \& Research},
	publisher    = {Sage Publications},
	volume       = {18},
	number       = {2-3},
	pages        = {360--390}
}

@book{enders2022applied,
	title        = {Applied missing data analysis},
	author       = {Enders, Craig K},
	year         = {2022},
	publisher    = {Guilford Publications}
}

@article{erdos1959random,
	title        = {On Random Graphs {I}},
	author       = {Erd\"os, P and R\'enyi, A},
	year         = {1959},
	journal      = {Publicationes Mathematicae},
	volume       = {6},
	pages        = {290--297}
}

@inproceedings{evans2015recovering,
	title        = {Recovering from Selection Bias using Marginal Structure in Discrete Models.},
	author       = {Evans, Robin J and Didelez, Vanessa},
	year         = {2015},
	booktitle    = {ACI@ UAI},
	pages        = {46--55}
}

@article{gao2022missdag,
	title        = {{MissDAG}: Causal Discovery in the Presence of Missing Data with Continuous Additive Noise Models},
	author       = {Gao, Erdun and Ng, Ignavier and Gong, Mingming and Shen, Li and Huang, Wei and Liu, Tongliang and Zhang, Kun and Bondell, Howard},
	year         = {2022},
	journal      = {Advances in Neural Information Processing Systems},
	volume       = {35},
	pages        = {5024--5038}
}

@book{heckman1977sample,
	title        = {Sample selection bias as a specification error (with an application to the estimation of labor supply functions)},
	author       = {Heckman, James J},
	year         = {1977},
	publisher    = {National Bureau of Economic Research Cambridge, MA},
	volume       = {172}
}

@article{heckman1990varieties,
	title        = {Varieties of selection bias},
	author       = {Heckman, James},
	year         = {1990},
	journal      = {The American Economic Review},
	publisher    = {JSTOR},
	volume       = {80},
	number       = {2},
	pages        = {313--318}
}

@article{hernan2004structural,
	title        = {A structural approach to selection bias},
	author       = {Hern{\'a}n, Miguel A and Hern{\'a}ndez-D{\'\i}az, Sonia and Robins, James M},
	year         = {2004},
	journal      = {Epidemiology},
	publisher    = {LWW},
	volume       = {15},
	number       = {5},
	pages        = {615--625}
}

@article{huang2020causal,
	title        = {Causal discovery from heterogeneous/nonstationary data},
	author       = {Huang, Biwei and Zhang, Kun and Zhang, Jiji and Ramsey, Joseph and Sanchez-Romero, Ruben and Glymour, Clark and Sch{\"o}lkopf, Bernhard},
	year         = {2020},
	journal      = {Journal of Machine Learning Research},
	volume       = {21},
	number       = {89},
	pages        = {1--53}
}

@inproceedings{kaltenpoth2023identifying,
	title        = {Identifying selection bias from observational data},
	author       = {Kaltenpoth, David and Vreeken, Jilles},
	year         = {2023},
	booktitle    = {Proceedings of the AAAI Conference on Artificial Intelligence},
	volume       = {37},
	pages        = {8177--8185}
}

@article{miao2016identifiability,
	title        = {Identifiability of normal and normal mixture models with nonignorable missing data},
	author       = {Miao, Wang and Ding, Peng and Geng, Zhi},
	year         = {2016},
	journal      = {Journal of the American Statistical Association},
	publisher    = {Taylor \& Francis},
	volume       = {111},
	number       = {516},
	pages        = {1673--1683}
}

@inproceedings{mohan2013graphical,
    author = {Mohan, Karthika and Pearl, Judea and Tian, Jin},
    booktitle = {Advances in Neural Information Processing Systems},
    editor = {C.J. Burges and L. Bottou and M. Welling and Z. Ghahramani and K. Weinberger},
    pages = {},
    publisher = {Curran Associates, Inc.},
    title = {Graphical Models for Inference with Missing Data},
    volume = {26},
    year = {2013}
}

@inproceedings{mohan2018handling,
	title        = {On handling self-masking and other hard missing data problems},
	author       = {Mohan, Karthika},
	year         = {2018},
	booktitle    = {AAAI Symposium}
}

@article{mooij2020joint,
	title        = {Joint causal inference from multiple contexts},
	author       = {Mooij, Joris M and Magliacane, Sara and Claassen, Tom},
	year         = {2020},
	journal      = {Journal of Machine Learning Research},
	volume       = {21},
	number       = {99},
	pages        = {1--108}
}

@book{pearl2009models,
	title        = {Causality: models, reasoning and inference},
	author       = {Pearl, Judea},
	year         = {2009},
	publisher    = {Cambridge University Press}
}

@inproceedings{qiu2024identifying,
author = {Qiu, Yiwen and Zheng, Yujia and Zhang, Kun},
title = {Identifying selections for unsupervised subtask discovery},
year = {2024},
isbn = {9798331314385},
publisher = {Curran Associates Inc.},
address = {Red Hook, NY, USA},
booktitle = {Proceedings of the 38th International Conference on Neural Information Processing Systems},
articleno = {382},
numpages = {31},
location = {Vancouver, BC, Canada},
series = {NIPS '24}
}

@article{richardson2002ancestral,
	title        = {Ancestral graph {Markov} models},
	author       = {Richardson, Thomas and Spirtes, Peter},
	year         = {2002},
	journal      = {The Annals of Statistics},
	publisher    = {Institute of Mathematical Statistics},
	volume       = {30},
	number       = {4},
	pages        = {962--1030}
}

@misc{robins2000marginal,
	title        = {Marginal structural models and causal inference in epidemiology},
	author       = {Robins, James M and Hernan, Miguel Angel and Brumback, Babette},
	year         = {2000},
	journal      = {Epidemiology},
	volume       = {11},
	number       = {5},
	pages        = {550--560}
}

@article{spirtes1995causal,
	title        = {Causal inference in the presence of latent variables and selection bias},
	author       = {Spirtes, Peter L and Meek, Christopher and Richardson, Thomas S},
	year         = {1995},
	journal      = {arXiv preprint arXiv:1302.4983}
}

@book{spirtes2000causation,
	title        = {Causation, prediction, and search},
	author       = {Spirtes, Peter and Glymour, Clark N and Scheines, Richard and Heckerman, David},
	year         = {2000},
	publisher    = {MIT press}
}

@inproceedings{dai2025latent,
	title        = {Latent Variable Causal Discovery under Selection Bias},
	author       = {Haoyue Dai and Yiwen Qiu and Ignavier Ng and Xinshuai Dong and Peter Spirtes and Kun Zhang},
	year         = {2025},
	booktitle    = {Forty-second International Conference on Machine Learning},
	url          = {https://openreview.net/forum?id=W9YdVrSJIh}
}

@inproceedings{tillman2011learning,
	title        = {Learning equivalence classes of acyclic models with latent and selection variables from multiple datasets with overlapping variables},
	author       = {Tillman, Robert and Spirtes, Peter},
	year         = {2011},
	booktitle    = {Proceedings of the Fourteenth International Conference on Artificial Intelligence and Statistics},
	pages        = {3--15},
	organization = {JMLR Workshop and Conference Proceedings}
}

@inproceedings{tu2019causal,
	title        = {Causal discovery in the presence of missing data},
	author       = {Tu, Ruibo and Zhang, Cheng and Ackermann, Paul and Mohan, Karthika and Kjellstr{\"o}m, Hedvig and Zhang, Kun},
	year         = {2019},
	booktitle    = {The 22nd International Conference on Artificial Intelligence and Statistics},
	pages        = {1762--1770},
	organization = {PMLR}
}

@inproceedings{versteeg2022local,
	title        = {Local constraint-based causal discovery under selection bias},
	author       = {Versteeg, Philip and Mooij, Joris and Zhang, Cheng},
	year         = {2022},
	booktitle    = {Conference on Causal Learning and Reasoning},
	pages        = {840--860},
	organization = {Pmlr}
}

@article{winship1992models,
	title        = {Models for sample selection bias},
	author       = {Winship, Christopher and Mare, Robert D},
	year         = {1992},
	journal      = {Annual review of sociology},
	publisher    = {Annual Reviews 4139 El Camino Way, PO Box 10139, Palo Alto, CA 94303-0139, USA},
	volume       = {18},
	number       = {1},
	pages        = {327--350}
}

@article{zhang2008completeness,
	title        = {On the completeness of orientation rules for causal discovery in the presence of latent confounders and selection bias},
	author       = {Zhang, Jiji},
	year         = {2008},
	journal      = {Artificial Intelligence},
	publisher    = {Elsevier},
	volume       = {172},
	number       = {16-17},
	pages        = {1873--1896}
}

@article{zhang2012kernel,
	title        = {Kernel-based conditional independence test and application in causal discovery},
	author       = {Zhang, Kun and Peters, Jonas and Janzing, Dominik and Sch{\"o}lkopf, Bernhard},
	year         = {2012},
	journal      = {arXiv preprint arXiv:1202.3775}
}

@article{zhang2015discovery,
	title        = {Discovery and visualization of nonstationary causal models},
	author       = {Zhang, Kun and Huang, Biwei and Zhang, Jiji and Sch{\"o}lkopf, Bernhard and Glymour, Clark},
	year         = {2015},
	journal      = {arXiv preprint arXiv:1509.08056}
}

@inproceedings{zhang2016identifiability,
	title        = {On the Identifiability and Estimation of Functional Causal Models in the Presence of Outcome-Dependent Selection.},
	author       = {Zhang, Kun and Zhang, Jiji and Huang, Biwei and Sch{\"o}lkopf, Bernhard and Glymour, Clark},
	year         = {2016},
	booktitle    = {UAI}
}

@article{zheng2023causal,
	title        = {Causal-learn: Causal discovery in python},
	author       = {Zheng, Yujia and Huang, Biwei and Chen, Wei and Ramsey, Joseph and Gong, Mingming and Cai, Ruichu and Shimizu, Shohei and Spirtes, Peter and Zhang, Kun},
	year         = {2023},
	journal      = {arXiv preprint arXiv:2307.16405}
}

@inproceedings{zheng2024detecting,
author = {Zheng, Yujia and Tang, Zeyu and Qiu, Yiwen and Sch\"{o}lkopf, Bernhard and Zhang, Kun},
title = {Detecting and identifying selection structure in sequential data},
year = {2024},
publisher = {JMLR.org},
booktitle = {Proceedings of the 41st International Conference on Machine Learning},
articleno = {2544},
numpages = {28},
location = {Vienna, Austria},
series = {ICML'24}
}

@article{dempster2021chronos,
  title={Chronos: a cell population dynamics model of CRISPR experiments that improves inference of gene fitness effects},
  author={Dempster, Joshua M and Boyle, Isabella and Vazquez, Francisca and Root, David E and Boehm, Jesse S and Hahn, William C and Tsherniak, Aviad and McFarland, James M},
  journal={Genome biology},
  volume={22},
  number={1},
  pages={343},
  year={2021},
  publisher={Springer}
}

@article{kettlewell1955selection,
  title={Selection experiments on industrial melanism in the Lepidoptera},
  author={Kettlewell, Henry Bernard Davies},
  journal={Heredity},
  volume={9},
  number={3},
  pages={323--342},
  year={1955},
  publisher={Nature Publishing Group}
}

@book{fisher1930genetical,
  title={The Genetical Theory of Natural Selection},
  author={Fisher, Ronald A.},
  year={1930},
  publisher={Clarendon Press}
}

@article{otsuka2019ontology,
  title={Ontology, Causality, and Methodology of Evolutionary},
  author={Otsuka, Jun},
  journal={Uller \& Laland (eds.), Evolutionary Causation: Biological and Philosophical Reflections},
  pages={247--264},
  year={2019}
}

@article{baedke2021s,
  title={What’s wrong with evolutionary causation?},
  author={Baedke, Jan},
  journal={Acta biotheoretica},
  volume={69},
  number={1},
  pages={79--89},
  year={2021},
  publisher={Springer}
}

@article{ramsey2016causal,
  title={The causal structure of evolutionary theory},
  author={Ramsey, Grant},
  journal={Australasian Journal of Philosophy},
  volume={94},
  number={3},
  pages={421--434},
  year={2016},
  publisher={Taylor \& Francis}
}

@article{edelaar2023generalised,
  title={A generalised approach to the study and understanding of adaptive evolution},
  author={Edelaar, Pim and Otsuka, Jun and Luque, Victor J},
  journal={Biological Reviews},
  volume={98},
  number={1},
  pages={352--375},
  year={2023},
  publisher={Wiley Online Library}
}

@article{bank2022epistasis,
  title={Epistasis and adaptation on fitness landscapes},
  author={Bank, Claudia},
  journal={Annual Review of Ecology, Evolution, and Systematics},
  volume={53},
  number={1},
  pages={457--479},
  year={2022},
  publisher={Annual Reviews}
}

@article{hojsgaard2008graphical,
  title={Graphical Gaussian models with edge and vertex symmetries},
  author={H{\o}jsgaard, S{\o}ren and Lauritzen, Steffen L},
  journal={Journal of the Royal Statistical Society Series B: Statistical Methodology},
  volume={70},
  number={5},
  pages={1005--1027},
  year={2008},
  publisher={Oxford University Press}
}

@article{tobias2022avonet,
  title={AVONET: morphological, ecological and geographical data for all birds},
  author={Tobias, Joseph A and Sheard, Catherine and Pigot, Alex L and Devenish, Adam JM and Yang, Jingyi and Sayol, Ferran and Neate-Clegg, Montague HC and Alioravainen, Nico and Weeks, Thomas L and Barber, Robert A and others},
  journal={Ecology letters},
  volume={25},
  number={3},
  pages={581--597},
  year={2022},
  publisher={Wiley Online Library}
}

@inproceedings{huang2018generalized,
  title={Generalized score functions for causal discovery},
  author={Huang, Biwei and Zhang, Kun and Lin, Yizhu and Sch{\"o}lkopf, Bernhard and Glymour, Clark},
  booktitle={Proceedings of the 24th ACM SIGKDD international conference on knowledge discovery \& data mining},
  pages={1551--1560},
  year={2018}
}

@article{lehtonen2023evolutionary,
  title={Evolutionary game theory of continuous traits from a causal perspective},
  author={Lehtonen, Jussi and Otsuka, Jun},
  journal={Philosophical Transactions of the Royal Society B},
  volume={378},
  number={1876},
  pages={20210507},
  year={2023},
  publisher={The Royal Society}
}

@article{noback2011climate,
  title={Climate-related variation of the human nasal cavity},
  author={Noback, Marlijn L and Harvati, Katerina and Spoor, Fred},
  journal={American journal of physical anthropology},
  volume={145},
  number={4},
  pages={599--614},
  year={2011},
  publisher={Wiley Online Library}
}

@article{zhao2006panzea,
  title={Panzea: a database and resource for molecular and functional diversity in the maize genome},
  author={Zhao, Wei and Canaran, Payan and Jurkuta, Rebecca and Fulton, Theresa and Glaubitz, Jeffrey and Buckler, Edward and Doebley, John and Gaut, Brandon and Goodman, Major and Holland, Jim and others},
  journal={Nucleic acids research},
  volume={34},
  number={suppl\_1},
  pages={D752--D757},
  year={2006},
  publisher={Oxford University Press}
}

@article{jones2009pantheria,
  title={PanTHERIA: a species-level database of life history, ecology, and geography of extant and recently extinct mammals},
  author={Jones, Kate E and Bielby, Jon and Cardillo, Marcel and Fritz, Susanne A and O'Dell, Justin and Orme, C David L and Safi, Kamran and Sechrest, Wes and Boakes, Elizabeth H and Carbone, Chris and others},
  journal={Ecology},
  volume={90},
  number={9},
  pages={2648--2648},
  year={2009},
  publisher={Wiley Online Library}
}

@inproceedings{boutilier2013context,
author = {Boutilier, Craig and Friedman, Nir and Goldszmidt, Moises and Koller, Daphne},
title = {Context-specific independence in Bayesian networks},
year = {1996},
isbn = {155860412X},
publisher = {Morgan Kaufmann Publishers Inc.},
address = {San Francisco, CA, USA},
booktitle = {Proceedings of the Twelfth International Conference on Uncertainty in Artificial Intelligence},
pages = {115–123},
numpages = {9},
location = {Portland, OR},
series = {UAI'96}
}

@book{uller2019evolutionary,
    author = {},
    editor = {Uller, Tobias and Lala, Kevin N.},
    title = {Evolutionary Causation: Biological and Philosophical Reflections},
    publisher = {The MIT Press},
    year = {2019},
    month = {09},
    isbn = {9780262353199},
    doi = {10.7551/mitpress/11693.001.0001},
    url = {https://doi.org/10.7551/mitpress/11693.001.0001},
}

@book{oyama2003cycles,
  title={Cycles of contingency: Developmental systems and evolution},
  author={Oyama, Susan and Gray, Russell D and Griffiths, Paul E},
  year={2003},
  publisher={mit Press}
}

@inproceedings{
tang2025reflectionwindow,
title={Reflection-Window Decoding: Text Generation with Selective Refinement},
author={Zeyu Tang and Zhenhao Chen and Xiangchen Song and Loka Li and Yunlong Deng and Yifan Shen and Guangyi Chen and Peter Spirtes and Kun Zhang},
booktitle={Forty-second International Conference on Machine Learning},
year={2025},
url={https://openreview.net/forum?id=p71VhJHAtq}
}

@inproceedings{
deng2026selection,
title={Selection, Reflection and Self-Refinement: Revisit Reasoning Tasks via a Causal Lens},
author={Yunlong Deng and Boyang Sun and Yan Li and Zeyu Tang and Lingjing Kong and Kun Zhang and Guangyi Chen},
booktitle={The Fourteenth International Conference on Learning Representations},
year={2026},
url={https://openreview.net/forum?id=0X5moS8KSm}
}

@inproceedings{
luo2026characterization,
title={Characterization and Learning of Causal Graphs with Latent Confounders and Post-treatment Selection from Interventional Data},
author={Gongxu Luo and Loka Li and Guangyi Chen and Haoyue Dai and Kun Zhang},
booktitle={The Fourteenth International Conference on Learning Representations},
year={2026},
url={https://openreview.net/forum?id=qclNnbjxNJ}
}

@article{wright1932roles,
  title={The roles of mutation, inbreeding, crossbreeding, and selection in evolution},
  author={Wright, Sewall},
  journal={Proceedings of the Sixth International Congress on Genetics}, 
  volume={8},
  pages = {209-222},
  year = {1932}
}

@book{mayr1998evolutionary,
  title={The evolutionary synthesis: perspectives on the unification of biology},
  author={Mayr, Ernst and Provine, William B},
  year={1998},
  publisher={Harvard University Press}
}

@inproceedings{
luo2025gene,
    title={Gene Regulatory Network Inference in the Presence of Selection Bias and Latent Confounders},
    author={Gongxu Luo and Haoyue Dai and Loka Li and Chengqian Gao and Boyang Sun and Kun Zhang},
    booktitle={The Thirty-ninth Annual Conference on Neural Information Processing Systems},
    year={2025},
    url={https://openreview.net/forum?id=14irEkV01l}
}

@book{boyd1988culture,
  title={Culture and the evolutionary process},
  author={Boyd, Robert and Richerson, Peter J},
  year={1988},
  publisher={University of Chicago press}
}

@article{groves2006nonresponse,
  title={Nonresponse rates and nonresponse bias in household surveys},
  author={Groves, Robert M},
  journal={International Journal of Public Opinion Quarterly},
  volume={70},
  number={5},
  pages={646--675},
  year={2006},
  publisher={Oxford University Press England}
}
\bibliographystyle{references}

\newpage
\appendix
\onecolumn
\numberwithin{equation}{section}
\normalsize

\counterwithin{theorem}{section}
\renewcommand{\thetheorem}{\thesection.\arabic{theorem}}

\section{More Elaborations on Motivations}
\label{app:elaborations}

We first elaborate on more about our model choice: why we directly uses the augmented DAGs to represent the d-separations in the model, instead of using the common MAG-FCI pipeline. At the same time, we also establish the alternative MAG representation, which, though not directly used for model identification, plays an essential role in proving the results to be given in the next section.

We now introduce the MAG basics. A MAG is a mixed graph with three possible kinds of edges: directed ($\rightarrow$), bi-directed ($\leftrightarrow$), and undirected ($\dashdash$). Given a DAG over vertices partitioned by $O$ (observed), $L$ (latent), and $S$ (selected), a MAG over vertices $O$ can be uniquely constructed. Here we omit the specific construction rules and focus on its purpose: to capture the d-separations among $O$ marginalized over $L$ and conditioned on $S$. In particular, $O_i$ and $O_j$ are adjacent in the MAG if and only if they cannot be d-separated by any remaining variables in $O$ together with $S$ in the original DAG. Edge orientations capture finer-grained properties, which can be referred to~\citep{richardson2002ancestral}. %

With the MAGs, the complex structure of our evolutionary models can also be abstracted in the following way:\looseness=-1

\begin{restatable}[\textbf{MAG representation for evolutionary selection models}]{theorem}{THMMAG}\label{thm:mag_representation}
Let $X$, $S$, $\Gcal$, $\Gcal^{(T)}$ be as in~\cref{def:evolutionary_selection_model}. The MAG induced by $\Gcal^{(T)}$ over $X^{(T)}$ is structurally invariant across $T \geq 1$, up to the $T$-indices. For notation ease, we thus omit the $T$-index and denote this MAG by $\Ecal(\mathcal{G})$ over vertices $X$, which we refer to as the ``\underline{E}volutionary MAG''. $\mathcal{E}(\mathcal{G})$ has the following adjacencies and orientations:

\begin{itemize}[noitemsep,topsep=0pt,left=5pt]
\item Two vertices $X_i$ and $X_j$ are adjacent in $\mathcal{E}(\mathcal{G})$ if and only if they are adjacent in $\mathcal{G}$, or $\{X_i, X_j\} \subseteq \anc_{\mathcal{G}}(S)$.
\item An edge between $X_i$ and $X_j$ is oriented as $X_i \rightarrow X_j$ if $X_i \in \anc_{\mathcal{G}}(X_j)$, as $X_j \rightarrow X_i$ if $X_j \in \anc_{\mathcal{G}}(X_i)$, and as $X_i \leftrightarrow X_j$ otherwise. There are no undirected edges.
\end{itemize}
\end{restatable}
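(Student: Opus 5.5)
We prove the statement directly from the MAG construction of \citet{richardson2002ancestral}, applied to $\Gcal^{(T)}$ with observed set $O=X^{(T)}$, selection set $S^{(<T)}$, and latent set consisting of everything else. Two facts determine the MAG. First, $X_i^{(T)}$ and $X_j^{(T)}$ are adjacent iff there is an inducing path relative to $(L,S^{(<T)})$. Second, in the MAG, $X_i^{(T)}$ has a tail at $X_i^{(T)}$ on the edge to $X_j^{(T)}$ iff $X_i^{(T)}\in\anc_{\Gcal^{(T)}}(\{X_j^{(T)}\}\cup S^{(<T)})$, and otherwise an arrowhead.

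The plan has three steps, taken in this order.

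\textbf{Step 1: ancestral relations in $\Gcal^{(T)}$.} Edges out of generation $T$ only go to other $X^{(T)}$ variables. Generation $T$ contains no $S^{(T)}$, and edges between generations run only $\epsilon^{(t)}\to\epsilon^{(t+1)}$. Hence
\[
\anc_{\Gcal^{(T)}}(X_j^{(T)})\cap X^{(T)}=\anc_\Gcal(X_j)^{(T)},
\]
and no $X^{(T)}$ variable is an ancestor of any $S^{(t)}$. It follows that an endpoint is a tail exactly when the corresponding static ancestral relation holds in $\Gcal$, and is an arrowhead otherwise. So no undirected edges appear. The orientation claims also follow once adjacency is settled: if $X_i\in\anc_\Gcal(X_j)$ and the two are adjacent, we get $X_i\to X_j$; if neither is an ancestor of the other, we get $\leftrightarrow$.

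\textbf{Step 2: adjacency, "if" direction.} Adjacency in $\Gcal$ gives adjacency in $\Gcal^{(T)}$ directly. Now suppose $\{X_i,X_j\}\subseteq\anc_\Gcal(S)$. We build the path
\[
X_i^{(T)}\leftarrow\epsilon_i^{(T)}\leftarrow\epsilon_i^{(T-1)}\to X_i^{(T-1)}\to\cdots\to S^{(T-1)}\leftarrow\cdots\leftarrow X_j^{(T-1)}\leftarrow\epsilon_j^{(T-1)}\to\epsilon_j^{(T)}\to X_j^{(T)}.
\]
We simplify it to a path whose only colliders are $S^{(T-1)}$, or latent vertices that are ancestors of $S^{(T-1)}$, and whose non-colliders are all latent. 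If the two directed segments into $S^{(T-1)}$ share a vertex, we splice there. Every collider is then an ancestor of a selection variable, and every non-endpoint is in $L\cup S$. This is an inducing path, so $X_i^{(T)}$ and $X_j^{(T)}$ are adjacent. The same path also gives $T$-invariance, since it uses only generations $T-1$ and $T$.

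\textbf{Step 3: adjacency, "only if" direction (main obstacle).} Suppose $X_i$ and $X_j$ are neither adjacent in $\Gcal$ nor both in $\anc_\Gcal(S)$. We must exhibit a separating set, i.e.\ show that no inducing path exists. Without loss of generality, let $X_j\notin\anc_\Gcal(S)$. Our candidate separating set is the standard one for DAGs, $C=(\pa_\Gcal(X_j)\cup\pa_\Gcal(X_i))$ restricted appropriately, or more simply the parents of whichever vertex is not an ancestor of the other. Together with $S^{(<T)}$, we then show that every path between $X_i^{(T)}$ and $X_j^{(T)}$ is blocked.

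The key structural fact is that conditioning on $S^{(<T)}$ only opens colliders at $\epsilon$ or $X$ vertices in the ancestral set of $S$. Any path that leaves generation $T$ from $X_j$ must go through $\epsilon_j^{(T)}\leftarrow\epsilon_j^{(T-1)}$. Since $X_j\notin\anc_\Gcal(S)$, the variables $\epsilon_j^{(t)}$ and $X_j^{(t)}$ have no descendants in $S^{(<T)}$ except through later $\epsilon_j$'s. So the chain $\epsilon_j^{(\cdot)}$ can only reach back into generation $T$ via $X_j^{(T)}$ or its descendants. Moreover, every collider along such a path that lies on $\epsilon_j$'s descendant side is not in $\anc(C\cup S^{(<T)})$, so it blocks.

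This reduces the remaining paths to ones confined to generation $T$. There, d-separation coincides with $\Gcal$ minus $S$, and the chosen parent set separates. I expect the case analysis needed to make this rigorous to be the hardest part: we must carefully handle colliders at $X_j^{(t)}$-descendants in earlier generations that might be ancestors of $S^{(t)}$ via other traits. Equivalently, one can invoke \cref{thm:clique_augmented_dag_representation} (taken as established) to obtain the separation from $\Gcal^+$ directly, which avoids this case analysis.
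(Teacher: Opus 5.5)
Your orientation argument (Step 1) and the ``if'' direction of adjacency (Step 2) match the paper's proof. The gap is Step 3, the ``only if'' direction. This is the substantive part of the theorem, and you do not actually establish it.

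\textbf{The fallback is circular.} Invoking \cref{thm:clique_augmented_dag_representation} does not work here, because the paper derives that theorem \emph{from} this MAG characterization.

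\textbf{The direct argument also breaks.} Your separating set is never pinned down, and the rule ``parents of whichever vertex is not an ancestor of the other'' can fail. Take $X_k\to X_j$, $X_k\to S$, $X_i\to S$ in $\Gcal$. Neither of $X_i,X_j$ is an ancestor of the other, yet $\pa_\Gcal(X_i)=\varnothing$ does not separate them. The path $X_i^{(T)}\leftarrow\epsilon_i^{(T)}\leftarrow\epsilon_i^{(T-1)}\to X_i^{(T-1)}\to S^{(T-1)}\leftarrow X_k^{(T-1)}\leftarrow\epsilon_k^{(T-1)}\to\epsilon_k^{(T)}\to X_k^{(T)}\to X_j^{(T)}$ is open given $S^{(<T)}$. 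The same path refutes your claim that the remaining paths are ``confined to generation $T$''. Such paths are blocked only when the parent $X_k^{(T)}$ through which they re-enter generation $T$ is conditioned on. Conversely, the case you flag as hardest cannot occur. If $X_j\notin\anc_\Gcal(S)$, no descendant of any $X_j^{(t)}$ or $\epsilon_j^{(t)}$ is an ancestor of any $S^{(t)}$.

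\textbf{How to close the gap.} Choose the endpoint, say $X_j$, with $X_j\notin\anc_\Gcal(S)$ and $X_j\notin\anc_\Gcal(X_i)$. This is automatic if $X_i\in\anc_\Gcal(S)$; otherwise take the later of the two in a topological order. Condition on $\pa_\Gcal(X_j)^{(T)}\cup S^{(<T)}$, and classify paths by their first edge at $X_j^{(T)}$.
\begin{itemize}
\item If the first edge comes from a parent $X_k^{(T)}$, the path is blocked at that conditioned non-collider; $X_k\neq X_i$ by non-adjacency.
\item If it goes to a child, there must be a first collider, since $X_i\notin\des_\Gcal(X_j)$. That collider is a descendant of $X_j^{(T)}$, so it lies in generation $T$. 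It is therefore not an ancestor of $S^{(<T)}$, and by acyclicity not an ancestor of $\pa_\Gcal(X_j)^{(T)}$, so the path is blocked.
\item If it goes to $\epsilon_j^{(T)}$, the path must climb the $\epsilon_j$-chain and leave it via some $\epsilon_j^{(t)}\to X_j^{(t)}$ with $t<T$. From $X_j^{(t)}$ on, the first collider (possibly $X_j^{(t)}$ itself) is a descendant of $X_j^{(t)}$. It is confined to generation $t$ and is not an ancestor of $S^{(t)}$, so the path is blocked.
\end{itemize}

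This separation holds for every $T\ge 1$. It is this two-sided characterization, not the single path of Step 2, that gives the claimed $T$-invariance.
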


The proof of~\cref{thm:mag_representation} is provided in~\cref{app:proofs}, and it is essential in proving our DAG representation (\cref{thm:clique_augmented_dag_representation}).

We need to emphasize that $\mathcal{E}(\mathcal{G})$ denotes the ``\underline{E}volutionary MAG'' of $\Gcal^{(T)}$. It is \textit{not} the MAG of $\mathcal{G}$, which we distinguish by the notation $\mathcal{S}(\mathcal{G})$, termed ``\underline{S}tatic MAG''. These two are fundamentally different, as illustrated in \cref{fig:MAG_example}:

\begin{figure}[h]
  \centering
  \begin{tikzpicture}[->,>=stealth,shorten >=1pt,auto,semithick,inner sep=0.2mm,scale=0.8]
    \tikzstyle{Lstyle} = [draw, rectangle, minimum size=0.5cm, fill=gray!20]
    \tikzstyle{Xstyle} = [draw, circle, minimum size=0.6cm]
    \tikzstyle{Sstyle} = [draw, circle, minimum size=0.56cm, double, double distance=0.04cm, preaction={pattern=dots, pattern color=gray!50}]
    
    \node[Xstyle] (M3) at (1.4,0) {$X_2$};
    \node[Xstyle] (M1) at (2.8,0) {$X_3$};
    \node[Xstyle] (M0) at (0,0) {$X_1$};
    \node[Xstyle] (M4) at (0.7,-1.2) {$X_4$};
    \node[Xstyle] (M2) at (2.1,-1.2) {$X_5$};
    
    \draw[->] (M0) -- (M4);
    \draw[->] (M0) -- (M2);
    \draw[->] (M4) -- (M2);
    \draw[->] (M3) -- (M1);
    \draw[->] (M1) -- (M2);
    \draw[-] (M0) to[bend left=40] (M1);
    \draw[-] (M1) to[bend right=40] (M0);

    \def\xgap{5}

    \node[Xstyle] (M3p) at (\xgap+1.4,0) {$X_2$};
    \node[Xstyle] (M1p) at (\xgap+2.8,0) {$X_3$};
    \node[Xstyle] (M0p) at (\xgap+0,0) {$X_1$};
    \node[Xstyle] (M4p) at (\xgap+0.7,-1.2) {$X_4$};
    \node[Xstyle] (M2p) at (\xgap+2.1,-1.2) {$X_5$};

    \draw[->] (M0p) -- (M4p);
    \draw[->] (M0p) -- (M2p);
    \draw[->] (M4p) -- (M2p);
    \draw[->] (M3p) -- (M1p);
    \draw[->] (M1p) -- (M2p);
    \draw[<->] (M0p) to[bend left=40] (M1p);
    \draw[<->] (M0p) to[bend left=0] (M3p);

  \end{tikzpicture}
  \caption{An illustration of the static MAG and evolutionary MAG. Let $\Gcal$ be the DAG as shown in~\cref{fig:running_example_identifiability}. Left is the MAG $\mathcal{E}(\Gcal)$ MAG on $\Gcal$ by directly conditioning on the $S$ variable, while right is the $\mathcal{S}(\Gcal)$ MAG over DAG $\Gcal^{(T)}$ by conditioning on the $S^{(<T)}$ and marginalizing out the remaining latent variables. In terms of adjacencies, the right MAG has one more adjacency between $X_1$ and $X_2$: in the original DAG $\Gcal$, even with $S$ conditioned on, $X_1$ and $X_2$ can still be d-separated by $X_3$. This is however not the case for the evolutionary MAGs.}
  \label{fig:MAG_example}
\end{figure}

Just like CPDAG that is used to represent an equivalence class of DAGs, the PAG (partial ancestral graph) is used to represent an
equivalence class of MAGs. A PAG thus has $6$ types of edges: $\rightarrow$, $\leftrightarrow$, $\dashdash$, $\circdash$, $\circarrow$, and $\circcirc$, where the edge mark of `$\circ$' indicates multiple possibilities (variant) edge marks across all MAGs in the equivalence class, and other edge marks indicate the fixed marks. If we directly use the FCI on the running example, the output PAG is presented as in~\cref{fig:PAG_example}:

\begin{figure}[h]
  \centering
  \begin{tikzpicture}[->,>=stealth,shorten >=1pt,auto,semithick,inner sep=0.2mm,scale=0.8]
    \tikzstyle{Lstyle} = [draw, rectangle, minimum size=0.5cm, fill=gray!20]
    \tikzstyle{Xstyle} = [draw, circle, minimum size=0.6cm]
    \tikzstyle{Sstyle} = [draw, circle, minimum size=0.56cm, double, double distance=0.04cm, preaction={pattern=dots, pattern color=gray!50}]

    \def\xgap{5}

    \node[Xstyle] (M3p) at (\xgap+1.4,0) {$X_2$};
    \node[Xstyle] (M1p) at (\xgap+2.8,0) {$X_3$};
    \node[Xstyle] (M0p) at (\xgap+0,0) {$X_1$};
    \node[Xstyle] (M4p) at (\xgap+0.7,-1.2) {$X_4$};
    \node[Xstyle] (M2p) at (\xgap+2.1,-1.2) {$X_5$};

\draw[{Circle[open]}->] (M0p) -- (M2p);
\draw[{Circle[open]}->] (M4p) -- (M2p);
    \draw[{Circle[open]}-{Circle[open]}] (M3p) -- (M1p);
\draw[{Circle[open]}->] (M1p) -- (M2p);
    \draw[{Circle[open]}-{Circle[open]}] (M0p) to[bend left=40] (M1p);
    \draw[{Circle[open]}-{Circle[open]}] (M0p) to[bend left=0] (M3p);

\draw[{Circle[open]}-{Circle[open]}] (M0p) -- (M4p);

  \end{tikzpicture}
  \caption{An illustration of the output PAG on the running example of~\cref{fig:running_example_identifiability}. One may also use it to compare against the MAG $\mathcal{E}(\Gcal)$ shown in~\cref{fig:MAG_example}.}
  \label{fig:PAG_example}
\end{figure}

Now, if one were to directly use the MAG-FCI pipeline and try to interpret the edge marks according to the MAG construction rule as in~\cref{thm:mag_representation}, they will get uninformative results. Specifically, for the $X_i \circarrow X_j$ edges, they may falsely
consider into the possibility of $X_i \leftrightarrow X_j$, i.e., the two may be both involved in selection and not directly causally related.
Even if they try to interpret this edge as $X_i \rightarrow X_j$, the MAG construction rule in~\cref{thm:mag_representation} only suggests
an ancestral relation between the two; they may not directly reach the conclusion that this edge must be a direct causal relation. Further,
during the running process of FCI, much unnecessary effort would be put on finding e.g., discriminating paths and other orientation rules, which
would not happen in this case.

Therefore, we get motivation from the MAG construction, but in the paper we directly use the augmented DAG representation. This is because the clique structure among $\anc_\Gcal(S)$ induces that no v-structures (in the MAG) exists, and no edges among them can be oriented. Thus just a DAG with same adjacencies in this clique would also suffice the represent all the corresponding d-separations.

\section{Proofs of Main Results}
\label{app:proofs}

\subsection{Proof of~\cref{lem:additional_dependencies_by_evolution}}
\LEMADDITIONALDEPENDENCIES*
\begin{proof}[Proof of~\cref{lem:additional_dependencies_by_evolution}]
Let $\mathcal{G}$ be the static DAG and $\mathcal{G}^{(T)}$ be the evolutionary graph for generation $T$. We prove by contrapositive. For any d-connection $A \not\dsep B | C,S$ in $\Gcal$, consider an open path $p$ between $A$ and $B$ in $\Gcal$, and we can construct another open path between $A^{(T)}$ and $B^{(T)}$ in $\mathcal{G}^{(T)}$. Specifically, for each collider $W$ on $p$, 1) if $W$ is not an ancestor of $S$ in $\Gcal$, then $W$ must be an ancestor of some $c \in C$ (so as $W^{(T)}$ and $c^{(T)}$), and we can keep the path going among $X^{(T)}$ in $\mathcal{G}^{(T)}$; 2) Otherwise, $W$ is an ancestor of $S$, and consider the nearest ``trek top'' nodes on the sides of $W$, as $ \cdots \leftarrow X_i \to \cdots \to W \leftarrow \cdots \leftarrow X_j \to \cdots$ on $p$ (note that $X_i, X_j$ can be $A,B$ themselves). We can reroute the path through previous time slices, by $ \cdots \leftarrow X_i^{(T)} \leftarrow \epsilon_i^{(T)} \leftarrow \epsilon_i^{(T-1)} \to X_i^{(T-1)} \to \cdots \to W^{(T-1)} \leftarrow \cdots \leftarrow X_j^{(T-1)} \leftarrow \epsilon_j^{(T-1)} \to \epsilon_j^{(T)} \to X_j^{(T)} \to \cdots$. Since the collider $W^{(T-1)}$ is an ancestor of $S^{(T-1)}$ and $S^{(T-1)}$ is conditioned on, this path remains open. By this construction, we get the open path in $\mathcal{G}^{(T)}$, and thus $A^{(T)} \not\dsep B^{(T)} | C^{(T)},S^{(<T)}$ still holds in $\mathcal{G}^{(T)}$.

Note that the converse does not generally hold (i.e., d-separation in static graph does not imply d-separation in evolutionary graph), by just considering the counterexample in~\cref{fig:example_of_evolutionary_selection_model}.
\end{proof}

\subsection{Proof of~\cref{thm:ci_implications}}

\THMCIIMPLICATIONS*

Proof of~\cref{thm:ci_implications} follows directly from global Markov properties, and we omit it here. While the proof is simple, we include~\cref{thm:ci_implications} mainly as a reference point in the overall logic. As a bridge from CI relations in the data to d-separations in the graph, it allows us to next i) justify faithfulness (\cref{coro:same_ci_under_invariance_constraints}), and ii) further move to a simpler DAG representation (\cref{thm:clique_augmented_dag_representation}).

\subsection{Proof of~\cref{coro:same_ci_under_invariance_constraints}}

\CORONOADDITIONALCI*
\begin{proof}[Proof of~\cref{coro:same_ci_under_invariance_constraints}]
The assumption of invariant mechanisms implies that the parameters of the structural equations (or conditional probability tables) are tied across time steps $t$. While parameter tying can essentially create cancellations in specific models (e.g., linear Gaussian symmetric models), it generally does not induce new conditional independencies in general nonlinear/nonparametric models, as long as there are no deterministic relations. This is indeed the case in our setting, where the inheritance edges $\epsilon^{(t)} \to \epsilon^{(t+1)}$ are with randomness. Suppose, however that $\epsilon^{(t)}$ are passed without change, then conditioning certain values of $X^{(T)}$ would happen to be conditioning the previous generations $X^{(<T)}$, breaking the CI implications. As long as the inheritance edges are with additional noise, no other deterministic relations can arise. And this is exactly the case in our model, where no other cases share a same set of variables (or its copy when value is the same) as parent input.
\end{proof}

\subsection{Proof of~\cref{thm:clique_augmented_dag_representation}}\label{subsec:proof_of_clique_augmented_dag_representation}

\THMCLIQUEDAG*

\begin{proof}[Proof of \cref{thm:clique_augmented_dag_representation}]
To prove the DAG representation (\cref{thm:clique_augmented_dag_representation}), we first prove the MAG representation (\cref{thm:mag_representation}):

\THMMAG*

\begin{proof}[Proof of \cref{thm:mag_representation}]
Fix $T\ge 1$. View $\Gcal^{(T)}$ as a DAG whose vertex set can be partitioned as
$
V(\Gcal^{(T)}) \;=\; O \,\cup\, L \,\cup\, \tilde S,
\qquad
O \coloneqq X^{(T)},\;
\tilde S \coloneqq S^{(<T)},\;
L \coloneqq V(\Gcal^{(T)})\setminus (O\cup \tilde S),
$
so that $O$ are the observed vertices, $\tilde S$ are the conditioned (selection) vertices, and $L$ are latent/unobserved.
Let $\mathrm{MAG}(\Gcal^{(T)};O,\tilde S)$ denote the maximal ancestral graph obtained from $\Gcal^{(T)}$
by marginalization over $O$ while conditioning on $\tilde S$ (as in \citet{richardson2002ancestral};
we only use the standard facts that (i) adjacencies correspond to the existence of an inducing path,
and (ii) orientations follow the projection rules and reflect ancestral relations).

\paragraph{Step 1: Structural invariance in $T$.}
The only edges in $\Gcal^{(T)}$ that connect different generations are the inheritance edges
$\epsilon_i^{(t)} \to \epsilon_i^{(t+1)}$ and $\epsilon_i^{(t)} \to X_i^{(t)}$ (and the within-generation causal edges
among the $X^{(t)}$ mirroring $\Gcal$).
Crucially, $\Gcal^{(T)}$ is time-homogeneous in structure: every generation contains the same copy of $\Gcal$ over
$X^{(t)}\cup\{S^{(t)}\}$ and the same inheritance wiring via the $\epsilon^{(t)}$'s.
Therefore, any path-based criterion among $X^{(T)}$ after conditioning on $S^{(<T)}$ depends only on the static ancestral relations in $\Gcal$
(more specifically, only need the graph copy at $T-1$; and not on the numerical value of $T$), except for the superscripts.
Hence $\mathrm{MAG}(\Gcal^{(T)};X^{(T)},S^{(<T)})$ is identical across $T$ up to relabeling $X_i^{(T)}\mapsto X_i$.
We denote this common graph by $\Ecal(\Gcal)$.

\paragraph{Step 2: Adjacency characterization via inducing paths.}
Recall (e.g., \citealp{richardson2002ancestral}) that two vertices $u,v\in O$ are adjacent in the induced MAG
if and only if there exists an \emph{inducing path} between $u$ and $v$ in the original DAG relative to $(O,\tilde S)$:
a path whose non-endpoints are in $L\cup\tilde S$, every collider on the path is an ancestor of $\{u,v\}\cup \tilde S$,
and every non-collider lies in $L$ (so it is marginalized out).

Fix distinct $i,j$.

\smallskip
\noindent\emph{($\Rightarrow$) If $X_i$ and $X_j$ are adjacent in $\Gcal$, or $\{X_i,X_j\}\subseteq \anc_\Gcal(S)$, then $X_i$ and $X_j$ are adjacent in $\Ecal(\Gcal)$.}

\emph{Case 1: $X_i$ and $X_j$ are adjacent in $\Gcal$.}
Trivial, since $X_i^{(T)}$ and $X_j^{(T)}$ are adjacent in the MAG.

\emph{Case 2: $\{X_i,X_j\}\subseteq \anc_\Gcal(S)$.}
By definition of ancestor, there must exists a path $X_i \rightarrow \cdots \rightarrow K \leftarrow \cdots \leftarrow X_j$ in the $\Gcal$, where $K$ is the only collider on the path and $K$ is also an ancestor of $S$ (note that $K$ can be $S$ itself).
Then, simply lift this path to some generation $t<T$ ($T-1$ suffices) in $\Gcal^{(T)}$ to connect $X_i^{(T)}$ back to $X_i^{(t)}$ and $X_j^{(t)}$ forward to $X_j^{(T)}$: 
$
X_i^{(T)} \leftarrow \epsilon_i^{(T)} \leftarrow \cdots \leftarrow \epsilon_i^{(t)} \to X_i^{(t)} \to  \cdots \to X_k^{(t)} \leftarrow \cdots \leftarrow X_j^{(t)} \leftarrow \epsilon_j^{(t)} \rightarrow \cdots \to \epsilon_j^{(T)} \to X_j^{(T)}
$
lies entirely in $L$ except at its endpoints in $X^{(T)}$ and $X^{(t)}$.
Concatenating them yields a path $\pi$ from $X_i^{(T)}$ to $X_j^{(T)}$ whose internal vertices are all in $L\cup S^{(<T)}$,
and whose only collider in $\tilde S$ is (a copy of) $K^{(t)}$, which is an ancestor of the $S^{(t)}$ that is condition on.
All other internal vertices are $\epsilon$'s and earlier-generation $X$'s, hence latent.
Therefore $\pi$ is an inducing path relative to $(X^{(T)},S^{(<T)})$, implying adjacency of $X_i^{(T)}$ and $X_j^{(T)}$ in the MAG.

\smallskip
\noindent\emph{($\Leftarrow$) If neither $X_i$ and $X_j$ are adjacent in $\Gcal$ nor $\{X_i,X_j\}\subseteq \anc_\Gcal(S)$, then $X_i$ and $X_j$ are non-adjacent in $\Ecal(\Gcal)$.}

Assume $X_i$ and $X_j$ are not adjacent in $\Gcal$ and at least one of them, say $X_j$, is not in $\anc_\Gcal(S)$.
We claim there is no inducing path between $X_i^{(T)}$ and $X_j^{(T)}$ relative to $(X^{(T)},S^{(<T)})$.
Indeed, any path from $X_i^{(T)}$ to $X_j^{(T)}$ that leaves generation $T$ must pass through an $\epsilon$-chain to earlier generations.
To create an inducing path that cannot be blocked by variables in $X^{(T)}$, the path must be activated via conditioned colliders in $S^{(<T)}$.
But every conditioned selection node $S^{(t)}$ only has ancestors in $X^{(t)}$ that correspond (in the static graph) to $\anc_\Gcal(S)$.
Since $X_j\notin \anc_\Gcal(S)$, no segment that uses a conditioned $S^{(t)}$ can reach $X_j^{(T)}$ through a directed ancestry pattern required by an inducing path.
Consequently any candidate inducing path would have to induce dependence through a direct causal adjacency at time $T$,
which is excluded by the assumption that $X_i$ and $X_j$ are non-adjacent in $\Gcal$.

\paragraph{Step 3: Orientations and absence of undirected edges.}
Given an adjacency $X_i, X_j$ in the MAG, its endpoint marks are directly determined by the latent/selection projection rules
(\citealp{richardson2002ancestral}).
In our setting, two key simplifications occur.

First, \emph{no undirected edges can appear}.
In the MAG formalism, undirected edges arise only when an endpoint is an ancestor of a conditioned selection variable.
Here, however, the conditioned selection nodes are $S^{(<T)}$, which occur strictly before generation $T$.
Because $\Gcal^{(T)}$ is acyclic, no vertex in $X^{(T)}$ can be an ancestor of any $S^{(t)}$ with $t<T$.
Hence the ``selection ancestor'' condition never holds for endpoints in $X^{(T)}$, and therefore $\Ecal(\Gcal)$ has no undirected edges.

Second, the remaining endpoint marks coincide with ancestral relations in the static graph $\Gcal$.
If $X_i\in \anc_\Gcal(X_j)$, then the same directed ancestry holds within generation $T$ in $\Gcal^{(T)}$:
$X_i^{(T)}\in \anc_{\Gcal^{(T)}}(X_j^{(T)})$.
By the MAG projection rules, this forbids an arrowhead at $X_i$ on the $i$--$j$ edge and yields $X_i\to X_j$.
Symmetrically, if $X_j\in \anc_\Gcal(X_i)$ we obtain $X_j\to X_i$.
If neither is an ancestor of the other in $\Gcal$, the $X_i\leftrightarrow X_j$ edge is also directly given.
\end{proof}

Then, it is straightforward to go from~\cref{thm:mag_representation} to~\cref{thm:clique_augmented_dag_representation}. Specifically, the clique structure among $\anc_\Gcal(S)$ induces that no v-structures (in the MAG) exists, and no edges among them can be oriented. Thus just a DAG with same adjacencies in this clique would also suffice the represent all the corresponding d-separations.
\end{proof}

Regarding this DAG representation, recall that in this work we restrict the number of selection variables as one, for expository clarity: While in static selection settings, it is natural to have multiple selection variables to represent multiple independent selection events, we cannot think of a justification for doing so in the evolutionary setting, where reproduction is the single event to model. This however does not limit generality: the results still extend when $\Gcal$ has multiple selection variables. Specifically, with multi-node $S$, the induced CI relations can still be represented by a DAG with edges augmented among $S$'s ancestors; however, these edges now do not necessarily form a clique, as follows:

\begin{corollary}[\textbf{Augmented DAG representation for evolutionary models with multiple selection variables}]
Let $\Gcal$ be a DAG over vertices $X\cup S$, and $\pi$ an ordering topological to $\Gcal$. Note that different from the main text, here $S$ may contain multiple variables. The \textit{augmented DAG} of $\Gcal$, denoted by $\Gcal^+$, is a DAG over $X$, where $X_i \rightarrow X_j \in \Gcal^+$ if and only if either $X_i \rightarrow X_j \in \Gcal$, or the following three conditions hold: $\{X_i, X_j\} \subseteq \anc_\Gcal(S)$, $\pi(X_i) < \pi(X_j)$, and $X_i \not \dsep X_j | S$ in $\Gcal$. Note that the third condition is automatically satisfied when $|S|=1$. When $|S|>1$, the augmented edges may not form a clique.

Then, $\Gcal^+$ can correctly represent the d-separations implied by $\Gcal^{(T)}$, in a same way as stated in~\cref{thm:clique_augmented_dag_representation}.
    
\end{corollary}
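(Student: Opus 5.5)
The plan follows the single-selection-variable proof: first establish the analogue of \cref{thm:mag_representation} for multi-node $S$, and then pass from the evolutionary MAG $\Ecal(\Gcal)$ to the augmented DAG $\Gcal^+$. Structural invariance in $T$ carries over verbatim from Step~1 there, since every generation is still a copy of $\Gcal$ over $X^{(t)}\cup S^{(t)}$ with the same $\epsilon$-wiring. The orientation argument in Step~3 also carries over unchanged. Endpoints in $X^{(T)}$ are never ancestors of $S^{(<T)}$, so $\Ecal(\Gcal)$ has no undirected edges. Ancestral relations among $X^{(T)}$ are exactly those of $\Gcal$ restricted to $X$, so an edge is $X_i\to X_j$ when $X_i\in\anc_\Gcal(X_j)$ and $X_i\leftrightarrow X_j$ otherwise. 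The new content is the adjacency claim: $X_i^{(T)}$ and $X_j^{(T)}$ are adjacent in the MAG iff they are adjacent in $\Gcal$, or $\{X_i,X_j\}\subseteq\anc_\Gcal(S)$ and $X_i\not\dsep X_j\mid S$ in $\Gcal$.

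For the ``if'' direction of the adjacency claim, take a path $p$ in $\Gcal$ between $X_i$ and $X_j$ that is open given $S$. Every collider on $p$ lies in $\anc_\Gcal(S)$, and no non-collider lies in $S$. Copy $p$ into generation $T-1$, and attach the chains $X_i^{(T)}\leftarrow\epsilon_i^{(T)}\leftarrow\epsilon_i^{(T-1)}\to X_i^{(T-1)}$ and symmetrically for $j$. The resulting path has only latent non-colliders, and all of its colliders are ancestors of $S^{(T-1)}$. It is therefore an inducing path relative to $(X^{(T)},S^{(<T)})$.

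For the ``only if'' direction, take an inducing path $\pi$ between $X_i^{(T)}$ and $X_j^{(T)}$ with $X_i,X_j$ non-adjacent in $\Gcal$. Its interior cannot contain any vertex of $X^{(T)}$, since such a vertex would be an observed non-collider or would force adjacency. Hence $\pi$ starts and ends with $\epsilon$-chains into earlier generations. Every collider on $\pi$ sits in some generation $t<T$. Such a collider cannot be an ancestor of the endpoints, because the only ancestors of $X^{(T)}$ outside generation $T$ are $\epsilon$'s, and those are never colliders. Each collider is therefore an ancestor of some $S^{(t)}$, so its $X$-label lies in $\anc_\Gcal(S)$.
\begin{itemize}
\item To show $X_i\in\anc_\Gcal(S)$, look at the vertex $X_i^{(t)}$ where $\pi$ exits the $\epsilon_i$-chain. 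The edge into it is $\epsilon_i^{(t)}\to X_i^{(t)}$. Either $X_i^{(t)}$ is itself a collider, and so lies in $\anc(S^{(t)})$, or $\pi$ continues on a directed segment out of it. That segment must end at a collider, because it cannot reach an endpoint in generation $T$, and so again $X_i\in\anc_\Gcal(S)$. The same argument applies to $X_j$.
\item To obtain a connecting path in $\Gcal$, project $\pi$ onto $\Gcal$ by erasing superscripts. Replace each $\epsilon_k$-chain that hops from $X_k^{(t)}$ to $X_k^{(t')}$ by the single vertex $X_k$, which then acts as a non-collider not in $S$. Colliders map into $\anc_\Gcal(S)$. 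This yields a walk in $\Gcal$ from $X_i$ to $X_j$ that is d-connecting given $S$, and the standard walk-to-path reduction gives $X_i\not\dsep X_j\mid S$.
\end{itemize}
The bookkeeping for excursions that visit several generations is the routine but fiddly part of this step.

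Finally, I would show that $\Gcal^+$ and $\Ecal(\Gcal)$ are Markov equivalent; the d-separation statement then follows from the MAG theorem, exactly as in the single-$S$ case. Both graphs have the same adjacencies by construction. I would check that they have the same unshielded colliders. A bidirected edge arises only between two vertices of $\anc_\Gcal(S)$ that are non-ancestral to each other. A collider at $X_k$ with an arrowhead from a bidirected edge therefore requires $X_k\in\anc_\Gcal(S)$. Unshielded triples inside $\anc_\Gcal(S)$ must then be matched against the topological orientation used in $\Gcal^+$. I would also need to check that colliders on discriminating paths agree.

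This last step is the main obstacle. When $|S|=1$ the augmented edges form a clique, so there are no unshielded triples inside $\anc_\Gcal(S)$ and the argument is immediate. For $|S|>1$ they need not form a clique. I would first try to prove that any unshielded triple $X_a\,\ast\!\!-\!\!\ast\,X_k\,\ast\!\!-\!\!\ast\,X_b$ with $X_k\in\anc_\Gcal(S)$ is a non-collider in $\Ecal(\Gcal)$ exactly when it is one in $\Gcal^+$. The tool would be the connecting-path characterization above: concatenating the $S$-open paths for $(a,k)$ and $(k,b)$ through a common selection ancestor. Discriminating paths would be handled by the same path-concatenation argument, or alternatively by a direct d-separation comparison that avoids MAG equivalence altogether.
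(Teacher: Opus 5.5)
\textbf{Main gap: the Markov-equivalence step is left open.} Your template is the one the paper uses for $|S|=1$: the evolutionary MAG with the new adjacency rule, followed by Markov equivalence with $\Gcal^+$. The paper states this corollary without proof. The genuine gap is your final step, which you leave open and partly misdiagnose. You expect unshielded triples inside $\anc_\Gcal(S)$ once $|S|>1$, but there are none.

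The missing idea is that $S$-connectivity is transitive on $A\coloneqq\anc_\Gcal(S)\cap X$, and your concatenation remark almost contains it. Suppose $X_a\not\dsep X_k\mid S$ and $X_k\not\dsep X_b\mid S$ with $X_k\in A$. In the concatenated walk, $X_k$ is either a non-collider not in $S$ or a collider in $\anc_\Gcal(S)$. So the walk is $S$-connecting and reduces to a path. This has the following consequences.
\begin{itemize}
\item The augmented adjacency partitions $A$ into pairwise non-adjacent cliques $A_1,\dots,A_m$. So ``may not form a clique'' really means ``is a disjoint union of cliques''.
\item Each $A_r$ is closed under $\Gcal$-parents, because a parent of an $S$-ancestor is an $S$-ancestor adjacent to it.
\item In both $\Ecal(\Gcal)$ and $\Gcal^+$, every arrowhead at a vertex of $A_r$ comes from $A_r$, and every edge leaving $A_r$ is a $\Gcal$-edge pointing out of it.
\item Colliders centred in $A$ are therefore always shielded, and edges at vertices outside $A$ carry $\Gcal$'s orientation in both graphs.
\end{itemize}
Discriminating paths are then harmless. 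In the DAG $\Gcal^+$, any discriminating path has a single collider $V_1$ with $B\to V_1$, so $B$ is a non-collider. Suppose the path is also discriminating in $\Ecal(\Gcal)$ and $B$ were a collider there. Then you would need $V_1\leftrightarrow B$ and $Y\ast\!\!\to B$. That puts $X_0,V_1,B,Y$ in one class, contradicting that $X_0$ and $Y$ are non-adjacent. Without this structure, your argument stops exactly where the corollary's new content lies.

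\textbf{Second problem: a false claim in the ``only if'' direction.} Your adjacency argument asserts that the interior of the inducing path $\pi$ contains no vertex of $X^{(T)}$. This is false: an inducing path can have interior vertices in $X^{(T)}$ even when $X_i,X_j$ are non-adjacent in $\Gcal$. Take $X_j\to X_k\to X_i$, $X_k\to S_1$, $X_i\to S_1$. Then the following path is inducing:
\[
\begin{aligned}
&X_j^{(T)}\to X_k^{(T)}\leftarrow\epsilon_k^{(T)}\leftarrow\epsilon_k^{(T-1)}\to X_k^{(T-1)}\to S_1^{(T-1)}\\
&\quad\leftarrow X_i^{(T-1)}\leftarrow\epsilon_i^{(T-1)}\to\epsilon_i^{(T)}\to X_i^{(T)}
\end{aligned}
\]
Its observed collider $X_k^{(T)}$ lies in $\anc(X_i^{(T)})$. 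So neither ``every collider lies in a generation $t<T$'' nor ``$\pi$ starts and ends with $\epsilon$-chains'' holds.

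The repair is short.
\begin{itemize}
\item A generation-$T$ interior vertex must be a collider sitting next to an endpoint, of the form $X_i^{(T)}\to X_k^{(T)}\leftarrow\epsilon_k^{(T)}$.
\item Your exit argument, applied at the earlier copy of $X_k$, then gives $X_k\in\anc_\Gcal(S)$, and hence $X_i\in\anc_\Gcal(S)$.
\item In the projection, a collapsed hop $X_k$ is not always a non-collider. It is a collider exactly when both copies are entered by arrowheads. Since one copy lies in a generation $<T$, this forces $X_k\in\anc_\Gcal(S)$.
\end{itemize}
So the projected walk still has all its colliders in $\anc_\Gcal(S)$, and the walk-to-path reduction goes through.
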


\subsection{Proof of~\cref{thm:sound_complete_cpdag_one_domain}}

\THMSOUNDCOMPLETEONEDOMAIN*
\begin{proof}[Proof of~\cref{thm:sound_complete_cpdag_one_domain}]
\textbf{Soundness of Adjacencies:}
Algorithm 1 (PC/GES) is sound and complete for recovering the Markov Equivalence Class (CPDAG) of the distribution's underlying DAG under the faithfulness assumption. By~\cref{thm:clique_augmented_dag_representation}, the CI relations of the data are represented by $\mathcal{G}^+$.
Thus, Algorithm 1 outputs the CPDAG of $\mathcal{G}^+$, denoted $\mathcal{C}$.
Vertices $X_i, X_j$ are adjacent in $\mathcal{C}$ iff they are adjacent in $\mathcal{G}^+$.
By Definition 2, adjacency in $\mathcal{G}^+$ implies either $X_i \to X_j \in \mathcal{G}$ (direct cause) or $\{X_i, X_j\} \subseteq \anc_{\mathcal{G}}(S)$ (selection involvement).

\textbf{Soundness of Orientations:}
In the CPDAG $\mathcal{C}$, an edge $X_i \to X_j$ is oriented if it is part of a v-structure (or propagated from one).
Edges within the selection clique $\{X \mid X \in \anc_{\mathcal{G}}(S)\}$ form a fully connected subgraph. A fully connected subgraph contains no v-structures (induced subgraphs of form $A \to B \leftarrow C$ with $A, C$ non-adjacent).
Therefore, no edges within the selection clique can be oriented by the PC algorithm purely based on internal structure.
If $X_i \to X_j$ is oriented in $\mathcal{C}$, it implies $X_j$ cannot be part of the clique (otherwise the edge would be undirected or part of a larger undirected component, unless $X_i$ is outside and forms a v-structure).
Specifically, if $X_i \to X_j$ is output, and $X_j$ were in $\anc_{\mathcal{G}}(S)$, $X_i$ (as a parent) would also be in $\anc_{\mathcal{G}}(S)$, placing the edge inside the clique, rendering it unoriented.
Thus, oriented edges imply true causal relations where the effect is not involved in selection.

\textbf{Completeness of Orientations:}
As noted above, the selection-induced edges form a clique. In a Markov Equivalence Class, edges within a fully connected clique cannot be distinguished from one another without external v-structures. Since the added edges in $\mathcal{G}^+$ mimic causal edges but form a clique, they are structurally indistinguishable from a dense set of causal relations. Thus, they remain unoriented in $\mathcal{C}$. Specifically note that selection existence cannot be confirmed, since this augmented DAG itself contains no selection and suffices to represent all the d-separations.
\end{proof}

\subsection{Proof of~\cref{thm:clique_multi_domain} and \cref{thm:sound_complete_cpdag_multi_domain}}

\THMCLIQUEDAGMULTIDOMAIN*

\THMSOUNDCOMPLETEMULTIDOMAIN*

Proof of~\cref{thm:clique_multi_domain} and \cref{thm:sound_complete_cpdag_multi_domain} follows directly from those of \cref{thm:clique_augmented_dag_representation} and \cref{thm:sound_complete_cpdag_one_domain}, by introducing an exogenous domain index variable (the ``change driver'') $\zeta$ into the evolutionary graph. Then, the invariance of a conditional distribution $p(X_A | X_C) $ can be expressed as the CI relation $\zeta \indep X_A | X_C$, which can further be read off from the corresponding d-separation in the evolutionary graph. Finally, we characterize these d-separations via a DAG, leading to~\cref{thm:clique_multi_domain}. For \cref{thm:sound_complete_cpdag_multi_domain}, it is because that PC, or more specifically, Meek rules R4 (which is used in CD-NOD;~\citep{huang2020causal}) is proved to be complete under such prior knowledge (all edges adjacent to $\zeta$ must be outgoing), the soundness and completeness all follow.

\section{Related Work}
\label{app:related_work}

Research on selection bias broadly follows two complementary directions: 
(i) \emph{nonparametric} approaches that exploit conditional independence constraints implied by selection mechanisms, and 
(ii) \emph{parametric} approaches that explicitly model the selection process within a causal framework.

\paragraph{Causal discovery under selection.}
Building on the FCI algorithm, a line of nonparametric methods studies how causal structure can be recovered in the presence of selection bias using conditional independence information~\citep{hernan2004structural,tillman2011learning,evans2015recovering,akbari2021,versteeg2022local}. More recent work has examined selection bias in interventional settings~\citep{dai2025when,luo2026characterization}, in latent-variable settings~\citep{dai2025latent,luo2025gene}, as well as in temporal and sequential data~\citep{zheng2024detecting,qiu2024identifying}. Parametric approaches have also been proposed, particularly for bivariate causal orientation under selection~\citep{zhang2016identifiability,kaltenpoth2023identifying}. Selection mechanisms has also been used to guide text-generation tasks~\citep{tang2025reflectionwindow,deng2026selection}.

\paragraph{Causal inference and bias adjustment.}
From an inferential perspective, nonparametric work has centered on the selection diagram framework, which provides graphical conditions under which causal effects remain identifiable despite selection bias~\citep{bareinboim2012controlling,bareinboim2014recovering,bareinboim2015recovering,bareinboim2016causal,bareinboim2022recovering}. Subsequent studies derived testable implications of selection mechanisms and corresponding adjustment criteria~\citep{correa2019identification}. In parallel, parametric treatments of selection bias have a long history in economics and biostatistics, most notably through selection models and related corrections~\citep{heckman1977sample,dubin1989selection,heckman1990varieties,winship1992models,robins2000marginal}.

\paragraph{Connections to missing data.}
Selection bias is closely related to problems of data missingness. In particular, under certain missingness mechanisms such as self-masking, both the full data distribution and causal effects may be unidentifiable, making
causal inference infeasible~\citep{mohan2013graphical,mohan2018handling,tu2019causal,miao2016identifiability,gao2022missdag,enders2022applied}. Nonetheless, in such settings, it is sometimes still possible to recover aspects of the underlying causal structure~\citep{tu2019causal,dai2024gene}.

\paragraph{Evolution and causal modeling.}
A related literature is called evolutionary or reciprocal causation~\citep{oyama2003cycles,uller2019evolutionary}, but it focuses primarily on philosophical aspects rather than formal graphical treatments. Graphical models and structural equation models are used in~\citep{edelaar2023generalised,baedke2021s,ramsey2016causal}, but not for the purpose of causal discovery. Several works also study evolutionary game theory and its causal interpretation~\citep{lehtonen2023evolutionary}.

\bigskip\bigskip\bigskip

\section{Supplementary Experimental Details and Results}
\label{app:experiments}

\subsection{Additional Experimental Results on Synthetic Data}\label{subsec:simulation_results_appendix}

We now elaborate more on the simulation experiments, in particular about the reproduction step. For each generation $t$, we rank samples according to their $S$ value. Their ranks are then categorized into $6$ uniform segments, corresponding to having $0,1,\dots,5$ offspring. This would result the next generation to have a $\sim 2.5\times$ sample size than the current generation. We thus then do a random downsampling on this next generation so that the sample size is fixed across generations. In our experiments, we use the sample size $N=5,000$.

We use the PC and GES implementations from causal-learn~\citep{zheng2023causal}, with the alpha set to $0.05$ and L0 penalty set to $2$. The precision is calculated based on the true causal adjacencies, regardless of directions.

In light of the suggestion from anonymous reviewers, in addition to PC and GES, we also compare with RLCD~\citep{dong2023versatile}, a method for latent-variable causal discovery (since our evolutionary graph involves latent variables). We also present results under model misspecifications with i) latent confounding, ii) dependent inheritance, and iii) structural changes across domains (edge reversals). We find that: 1) Though designed to model latent variables, RLCD still fails to recover the true causal relations under evolutionary selection, and 2) While misspecifications degrade performance overall, our interpretation still outperforms the standard one, except in extreme cases where result graphs become overly dense.

Detailed results are shown below:

\newpage

\subsubsection{\texorpdfstring{\normalfont\bfseries Comparison with Latent-Variable Causal Discovery Methods}{Comparison with Latent-Variable Causal Discovery Methods}}
\begin{figure}[!h]
\centering
\begin{subfigure}{0.49\textwidth}
\centering
\includegraphics[width=\linewidth]{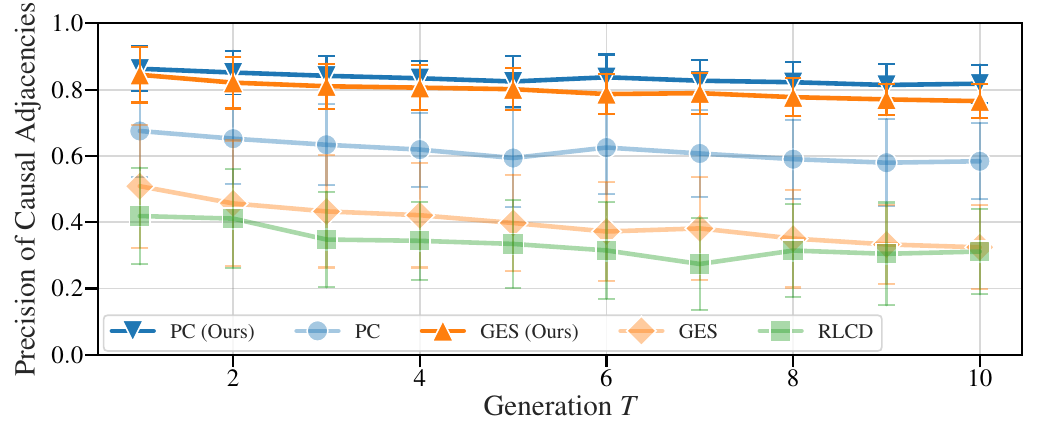}
\caption{$d=10$}
\end{subfigure}
    
\begin{subfigure}{0.45\textwidth}
    \centering
    \includegraphics[width=\linewidth]{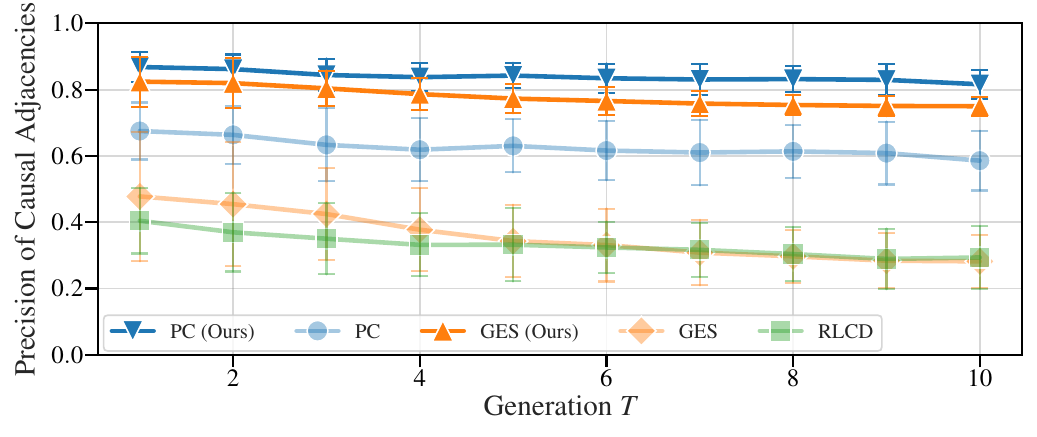}
    \caption{$d=15$}
\end{subfigure}
\hfill
\begin{subfigure}{0.45\textwidth}
    \centering
    \includegraphics[width=\linewidth]{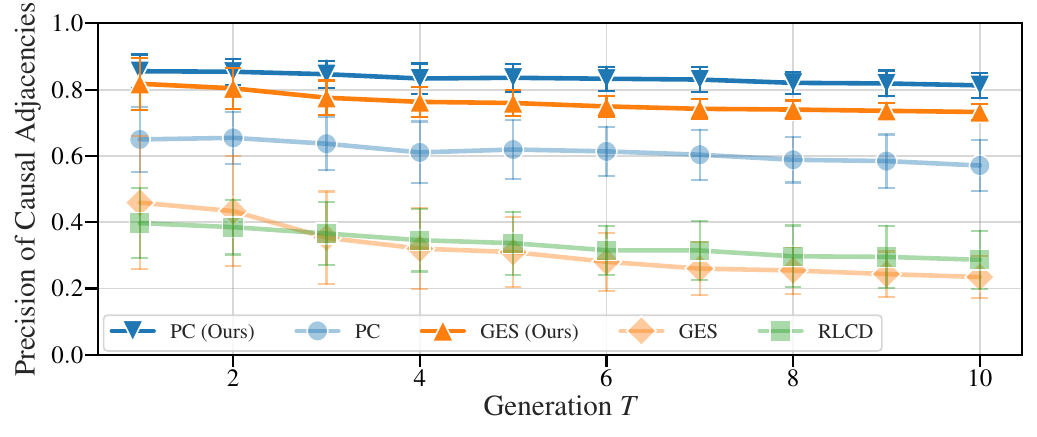}
    \caption{$d=20$}
\end{subfigure}

\caption{Precision (y-axis) of causal adjacencies for $d=10,15,20$ variables, using the same data simulation setup as in~\cref{fig:simulation_results_d20}. We compare the standard interpretation and ours for PC and GES, and additionally include RLCD~\citep{dong2023versatile}, a latent-variable causal discovery method. Mean and standard deviation over 50 random runs are shown. According to the results, though designed to model latent variables, RLCD still fails to recover the true causal relations among observed variables under evolutionary selection.}
\end{figure}

\subsubsection{\texorpdfstring{\normalfont\bfseries Results under Model Misspecification with Violation of Causal Sufficiency}{Results under Model Misspecification with Violation of Causal Sufficiency}}
\begin{figure}[!h]
\centering
\begin{subfigure}{0.49\textwidth}
\centering
\includegraphics[width=\linewidth]{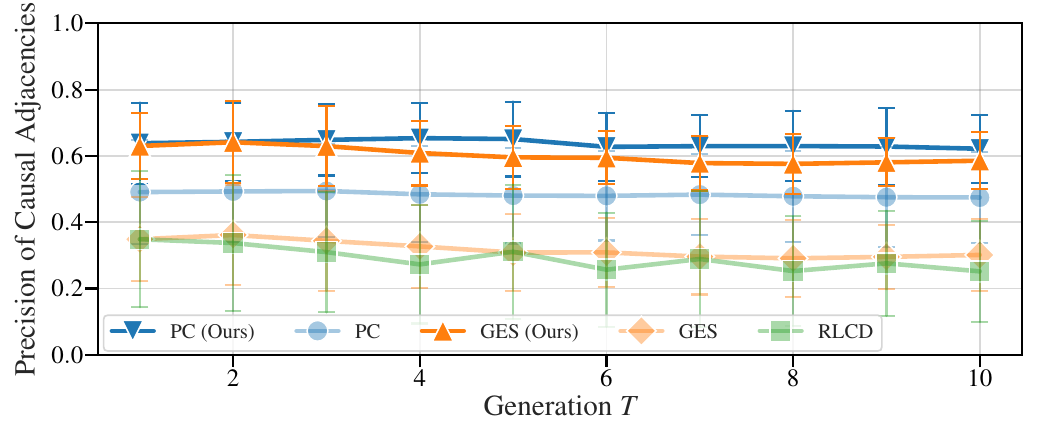}
\caption{$d=10$}
\end{subfigure}
    
\begin{subfigure}{0.45\textwidth}
    \centering
    \includegraphics[width=\linewidth]{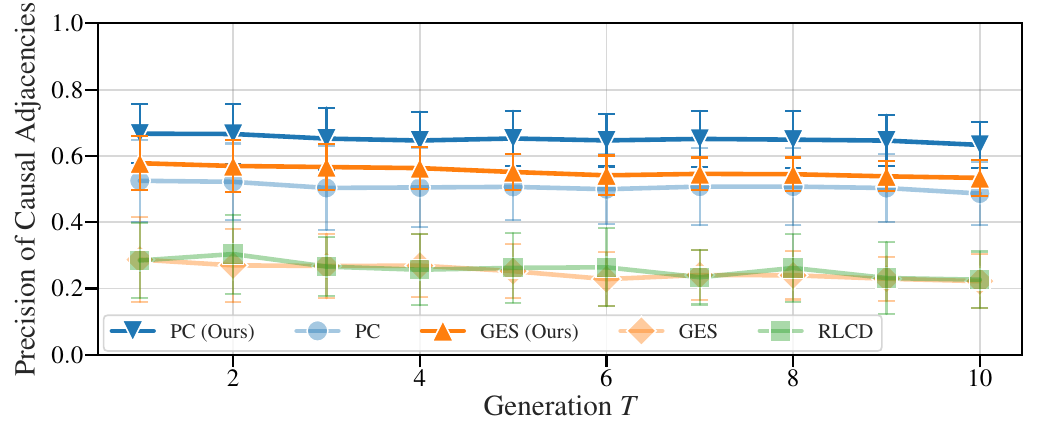}
    \caption{$d=15$}
\end{subfigure}
\hfill
\begin{subfigure}{0.45\textwidth}
    \centering
    \includegraphics[width=\linewidth]{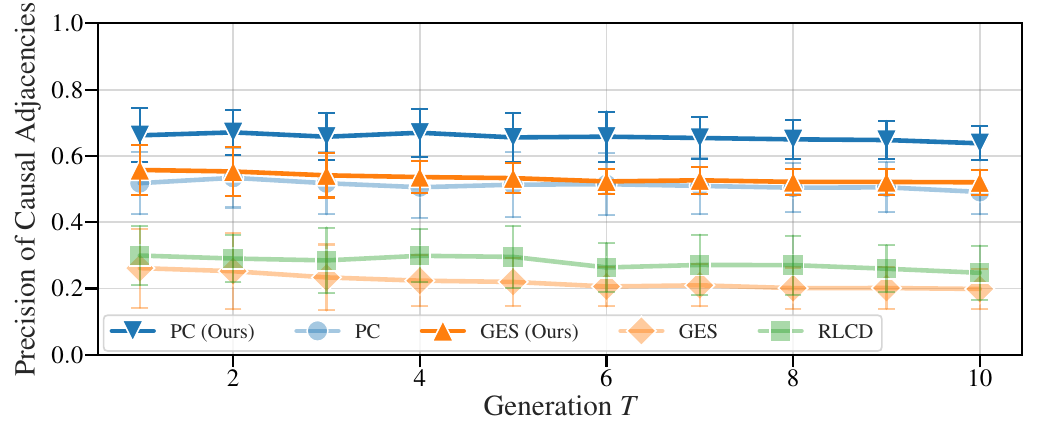}
    \caption{$d=20$}
\end{subfigure}

\caption{Precision (y-axis) of causal adjacencies under a similar setup as in~\cref{fig:simulation_results_d20}. The difference is that, to simulate hidden confounding, we hold out the top $d/5$ variables in the topological ordering of the truth DAG $\mathcal{G}$, and run PC/GES/RLCD on the data of the remaining variables. According to the results, although the overall adjacency precision decreases due to additional spurious dependencies from hidden confounding, our interpretation still improves over the standard one, i.e., oriented edges are ``more trustable'' than unoriented edges in the CPDAG output.}
\end{figure}

\newpage

\subsubsection{\texorpdfstring{\normalfont\bfseries Results under Model Misspecification with Dependent Heritable Factors and Non-Component-Wise Effects}{Results under Model Misspecification with Dependent Heritable Factors and Non-Component-Wise Effects}}

\begin{figure}[!h]
\centering
\begin{subfigure}{0.49\textwidth}
\centering
\includegraphics[width=\linewidth]{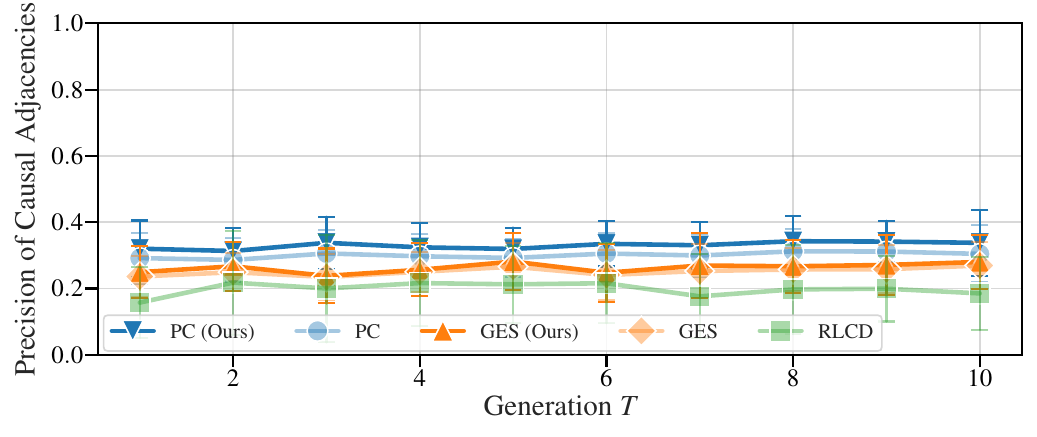}
\caption{$d=10$}
\end{subfigure}
    
\begin{subfigure}{0.45\textwidth}
    \centering
    \includegraphics[width=\linewidth]{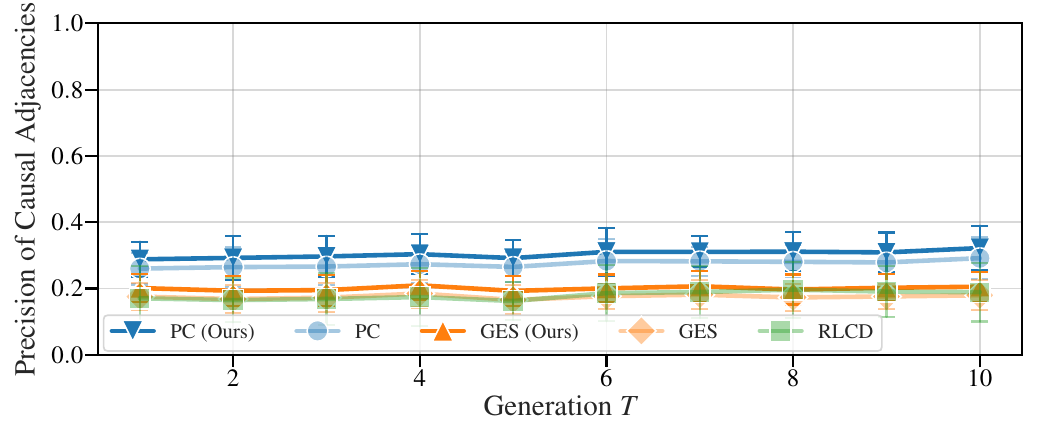}
    \caption{$d=15$}
\end{subfigure}
\hfill
\begin{subfigure}{0.45\textwidth}
    \centering
    \includegraphics[width=\linewidth]{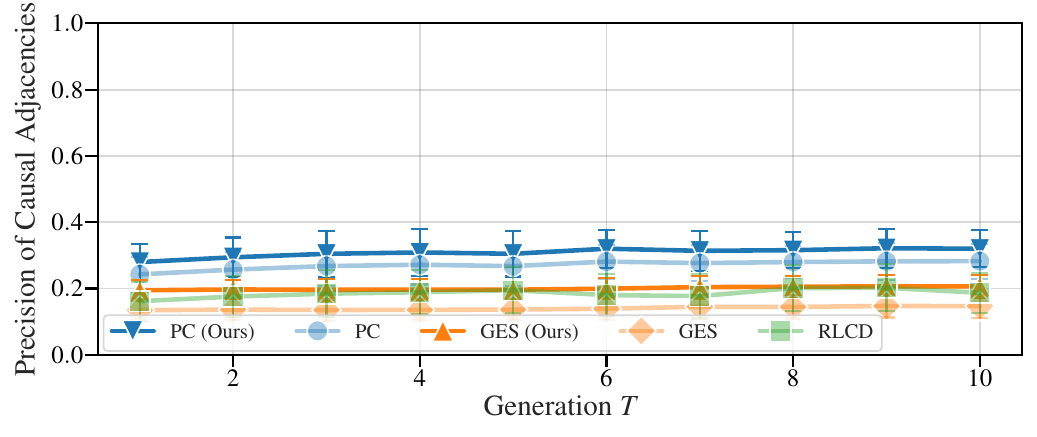}
    \caption{$d=20$}
\end{subfigure}

\caption{Precision (y-axis) of causal adjacencies under a similar setup as in~\cref{fig:simulation_results_d20}. The difference is that, we now allow additional causal relations among the heritable factors $\epsilon^{(t)}$ (e.g., gene regulatory relations), and non-component-wise causal relations from $\epsilon^{(t)}$ to $X^{(t)}$ (e.g., multiple genes jointly influencing a phenotype). The average in-degree of these edges is $2$. The result shows that none of these methods or interpretations can adequately handle this setting. This is expected: the CI relations in observed data can no longer be represented by a DAG, and the result graphs now become overly dense due to the lack of informative CI relations left in observed data.}
\end{figure}

\subsubsection{\texorpdfstring{\normalfont\bfseries Results under Model Misspecification with Structural Changes (Edge Reversals) Across Domains}{Results under Model Misspecification with Structural Changes (Edge Reversals) Across Domains}}
\begin{figure}[!h]
\centering
\begin{subfigure}{0.49\textwidth}
\centering
\includegraphics[width=\linewidth]{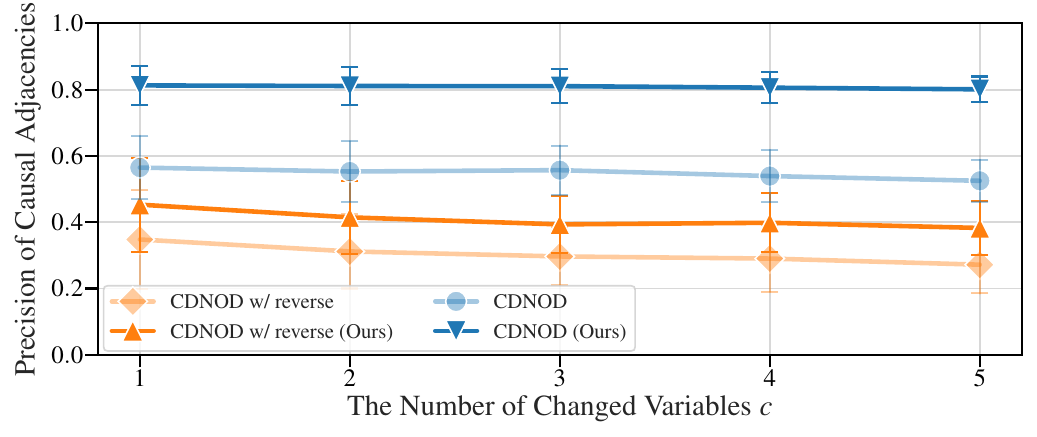}
\caption{$d=10$}
\end{subfigure}
    
\begin{subfigure}{0.45\textwidth}
    \centering
    \includegraphics[width=\linewidth]{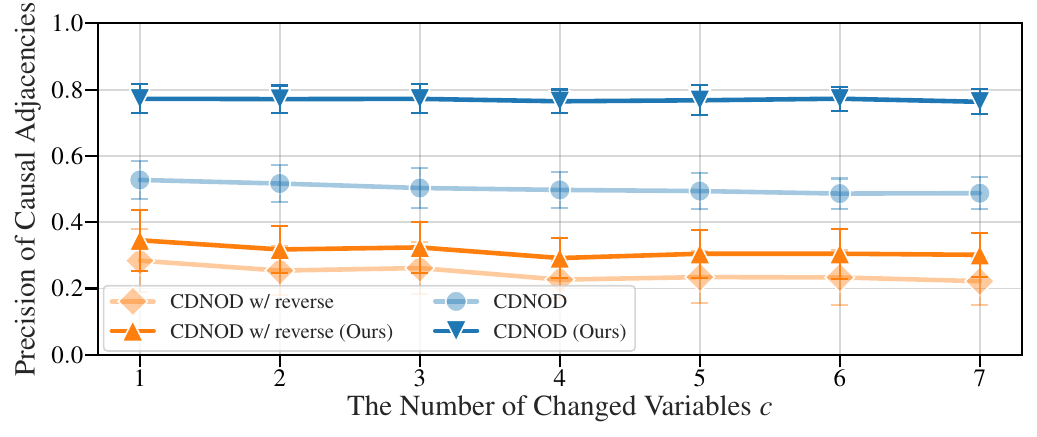}
    \caption{$d=15$}
\end{subfigure}
\hfill
\begin{subfigure}{0.45\textwidth}
    \centering
    \includegraphics[width=\linewidth]{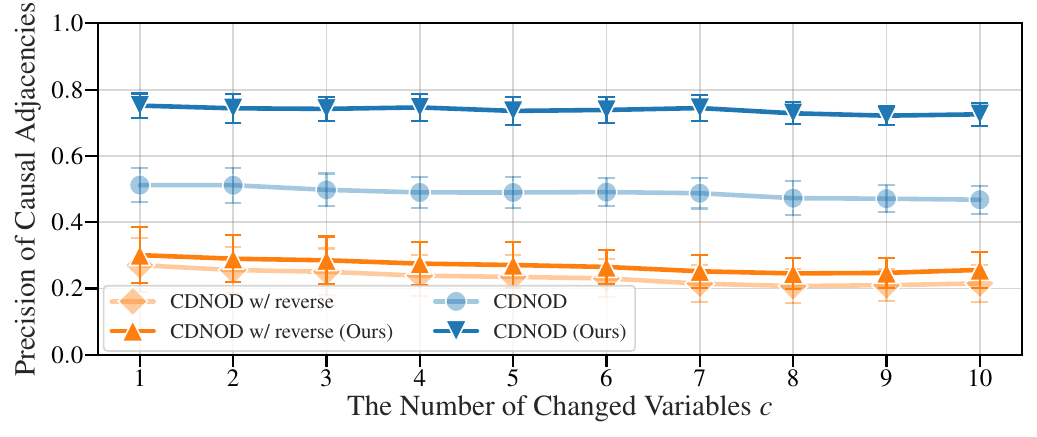}
    \caption{$d=20$}
\end{subfigure}

\caption{Precision (y-axis) of causal adjacencies under the standard interpretation (using all PDAG edges) and our interpretation (using only oriented PDAG edges) of CDNOD outpus on multi-domain data. For each value of $c$ (x-axis), we randomly sample $c$ variables from $X \cup \{S\}$ as changing variables. For each changing variable, we perturb its exogenous noise variance and the linear weights from all its parents. Further, under the model misspecification setting (``w/ reverse''), we additionally reverse the direction of edges by randomly turning half of the changing variable's parents into its children. According to the results, although the overall adjacency precision decreases due to structural changes across domains (in particular the edge reversals), our interpretation still improves over the standard one.}
\end{figure}

\subsection{Additional Experimental Results on Real-World Data}\label{subsec:realdata_results_appendix}

We then provide more details and the full results on the real-world datasets. 

\vspace{7em}

\subsubsection{\texorpdfstring{\normalfont\bfseries Results on DGRP, a Gene Expression Dataset with Partially Known Ground Truth}{Results on DGRP, a Gene Expression Dataset with Partially Known Ground Truth}}

\begin{figure}[!h]
\centering

\begin{minipage}{0.55\textwidth}
    \centering
    \includegraphics[width=\linewidth]{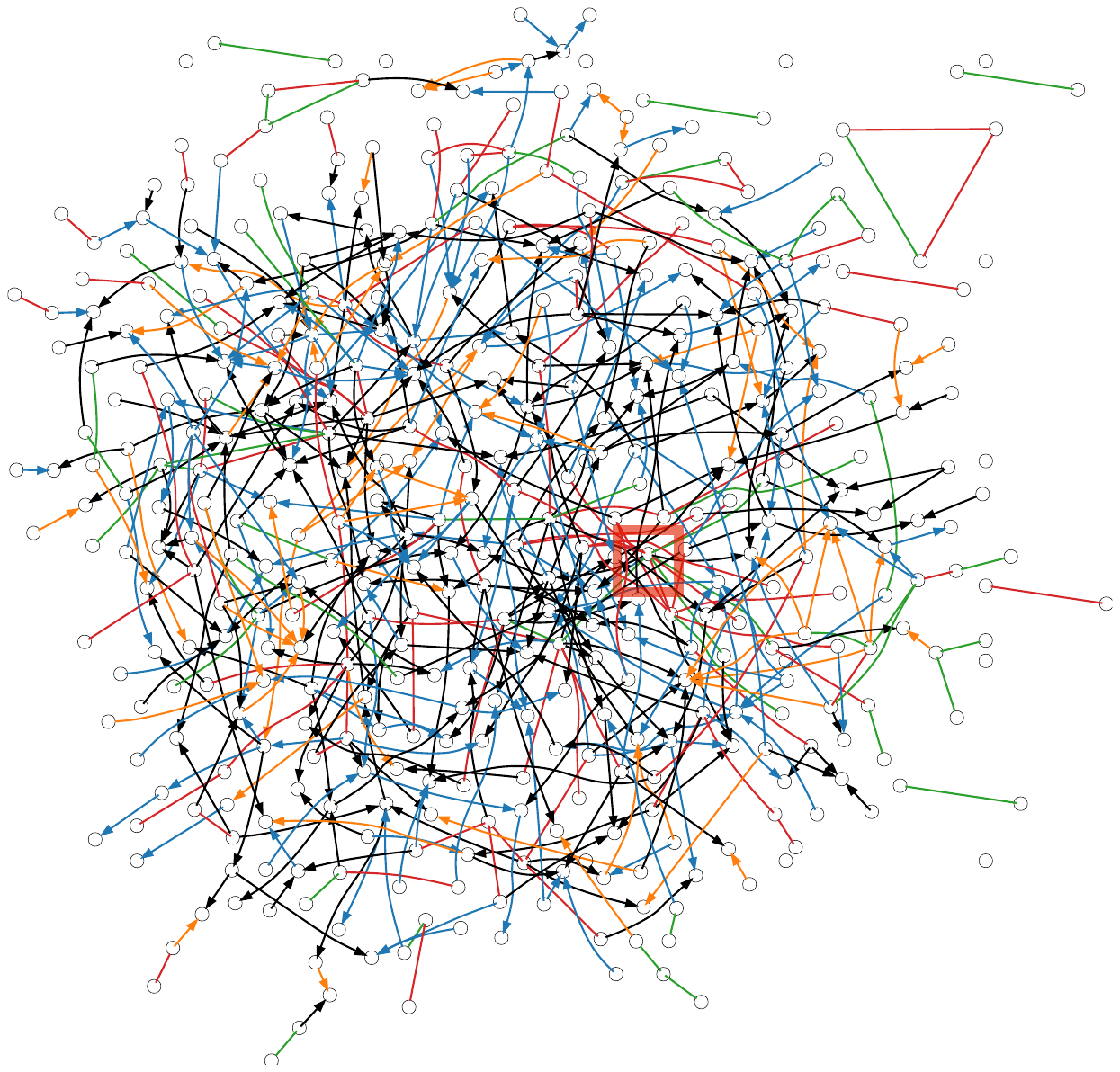}
\end{minipage}
\hfill
\begin{minipage}{0.4\textwidth}
    \centering
    \begin{tabular}{cc}
        \hline
        \textbf{Edge type} & \textbf{Precision on adjacencies} \\
        \hline
        \textbf{Oriented} & $49.0\%= (\textcolor{blue}{165}+\textcolor{orange}{65})/(\textcolor{blue}{165}+\textcolor{orange}{65}+239)$ \\
        \textbf{Unoriented} & $35.5\%=\textcolor{green}{49}/(\textcolor{green}{49}+\textcolor{red}{89})$ \\
        \hline
    \end{tabular}
\end{minipage}

\caption{Experiments on the female \textit{Drosophila melanogaster} Genetic Reference Panel (\textbf{DGRP}) dataset~\citep{everett2020gene}. The data consist of bulk RNA-seq measurements on $422$ genes for $400$ inbred lines. As partially known regulatory ground truth, we use the reported $408$ regulator-target pairs annotated by cis-trans expression quantitative trait loci (eQTL) analysis~\citep{everett2020gene}. \textbf{Left:} The CPDAG output by BOSS~\citep{andrews2023fast}. Oriented edges are colored as follows: blue (\textcolor{blue}{$\to$}) for known oriented edges; orange (\textcolor{orange}{$\to$}) for edges with known adjacency but reversed orientation; and black ($\to$) for edges without known adjacency. Unoriented edges are colored as follows: green (\textcolor{green}{---}) for known adjacencies, and red (\textcolor{red}{---}) for unknown adjacencies. The node highlighted by the red box corresponds to \textit{FBgn0033978}. It appears in multiple unoriented cliques, and this gene has been reported to be related to fitness by involving in detoxification and insecticide resistance~\citep{battlay2018structural,duneau2018signatures}. \textbf{Right:} The precision of recovered adjacencies w.r.t. the ground truth. According to the result, oriented edges have a higher hit rate on known regulatory adjacencies ($49.0\%$) than unoriented edges ($35.5\%$). Although eQTL-derived ground truth may be incomplete, this result can, to some extent, support our conclusion that oriented edges are ``more trustable'' than unoriented ones in a CPDAG output, when evolutionary selection presents.}
\label{subsec:fly_female}
\end{figure}

\newpage

\subsubsection{\texorpdfstring{\normalfont\bfseries Full Results on Remaining Six Datasets with LLM-Annotated Pseudo Ground Truth}{Full Results on Remaining Six Datasets with LLM-Annotated Pseudo Ground Truth}}

For the remaining six datasets that we were unable to find ground-truth graphs, we provide variable names to ChatGPT and ask it to search the literature to annotate i) direct causal relations, and ii) variables directly involved in fitness selection. Using these annotations as pseudo ground truth, we evaluate:

\begin{enumerate}
    \item Whether oriented edges in results are truly causal: We find that on 5 out of 6 datasets, oriented edges indeed have higher precision on causal adjacencies than unoriented edges.
    \item Whether variables involved in selection (i.e., ancestors of $S$) truly form an unoriented clique: However, this is not as expected; the recall of such unoriented edges is consistently low (<= 50\%), and we do not yet have a clear explanation.
\end{enumerate}

Of course, we are aware that the reliability of this procedure is not guaranteed, and we treat it only as a surrogate.

Dataset descriptions and full results, including algorithm output, our interpretation, and evaluation against LLM-generated ground truths, are provided below.

\vspace{5em}

\begin{figure}[!h]
\centering

\hspace{0.05\textwidth}
\begin{minipage}{0.4\textwidth}
    \centering
    \includegraphics[width=\linewidth]{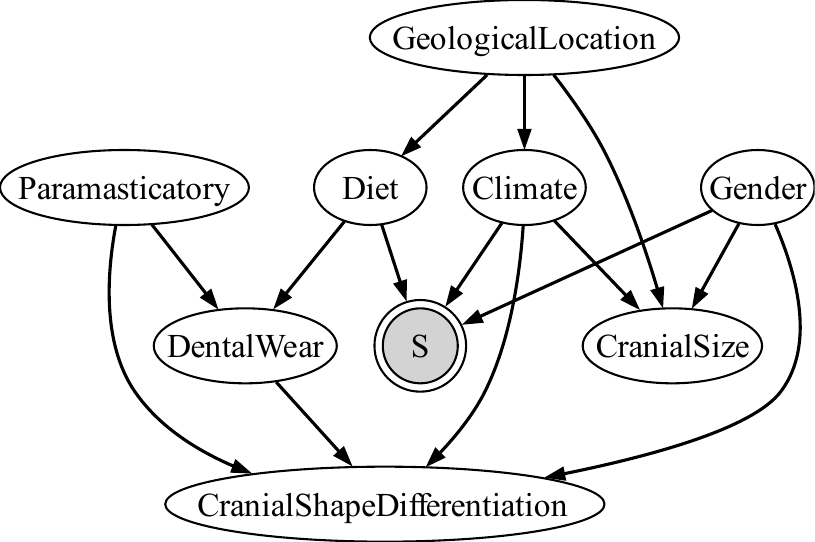}
    \subcaption{ChatGPT annotated pseudo ground truth.}
\end{minipage}
\hfill
\begin{minipage}{0.4\textwidth}
    \centering
    \includegraphics[width=\linewidth]{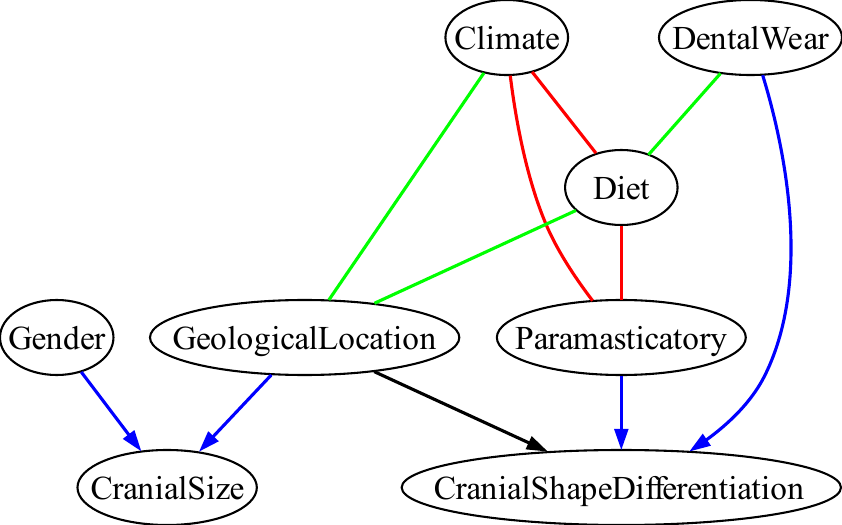}
    \subcaption{PC CPDAG output, with edge colors interpreted in the same way as in~\cref{subsec:fly_female}.}
\end{minipage}
\hspace{0.05\textwidth}

\vspace{2em}

\begin{minipage}[b]{0.5\textwidth}
    \centering
    \begin{tabular}{cc}
        \hline
        \textbf{Edge type} & \textbf{Precision on adjacencies} \\
        \hline
\textbf{Oriented} & $80.0\%= (\textcolor{blue}{ 4 }+\textcolor{orange}{ 0 })/(\textcolor{blue}{ 4 }+\textcolor{orange}{ 0 }+  1 )$ \\
\textbf{Unoriented} & $50.0\%=\textcolor{green}{  3  }/(\textcolor{green}{  3  }+\textcolor{red}{  3   })$ \\
        \hline
    \end{tabular}
    \subcaption{Precision on the recovered adjacencies.}
\end{minipage}
\begin{minipage}[b]{0.4\textwidth}
    \centering
    \begin{tabular}{c}
        \hline
        \textbf{Recall on S ancestors} \\
        \hline
$50.0\%= 3/(4 \times 3 / 2)$\\
        \hline
    \end{tabular}
    \subcaption{Recall on the S ancestors. Denominator is the number of unoriented edges that should appear within the clique of S ancestors. Nominator is the number of these unoriented edges that indeed appear in the output.}
\end{minipage}

\vspace{3em}

\caption{Result on the \textbf{Cranial} dataset. Detailed description of this dataset is provided in Appendix D of the original submission. We process this dataset as in~\citep{huang2018generalized}. This dataset contains $8$ variables: Gender (1 dimension, discrete), Cranial size (1 dimension, continuous), Diet (5 dimensions, discrete), Paramasticatory behavior (1 dimension, discrete), Dental wear (2 dimensions, mixed continuous and discrete), Geographic location per population (3 dimensions, discrete), Climate per population (6 dimensions, discrete), and Cranial shape differentiation (4 dimensions, continuous). It has a sample size of 255. We use PC with Kernel-based conditional independence test~\citep{zhang2012kernel} to obtain the result.}
\end{figure}

\newpage

\begin{figure}[!h]
\centering

\vspace{5em}

\begin{minipage}{0.9\textwidth}
    \centering
    \includegraphics[width=\linewidth]{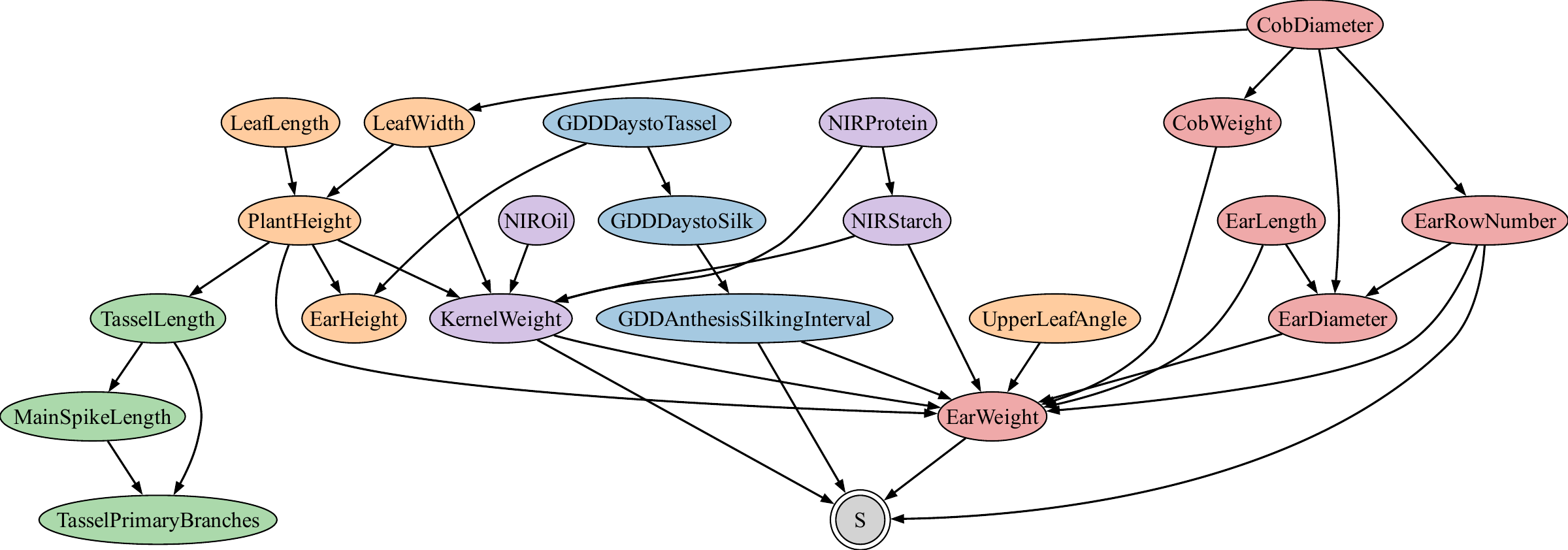}
    \subcaption{ChatGPT annotated pseudo ground truth.}
\end{minipage}

\vspace{2em}

\begin{minipage}{0.75\textwidth}
    \centering
    \includegraphics[width=\linewidth]{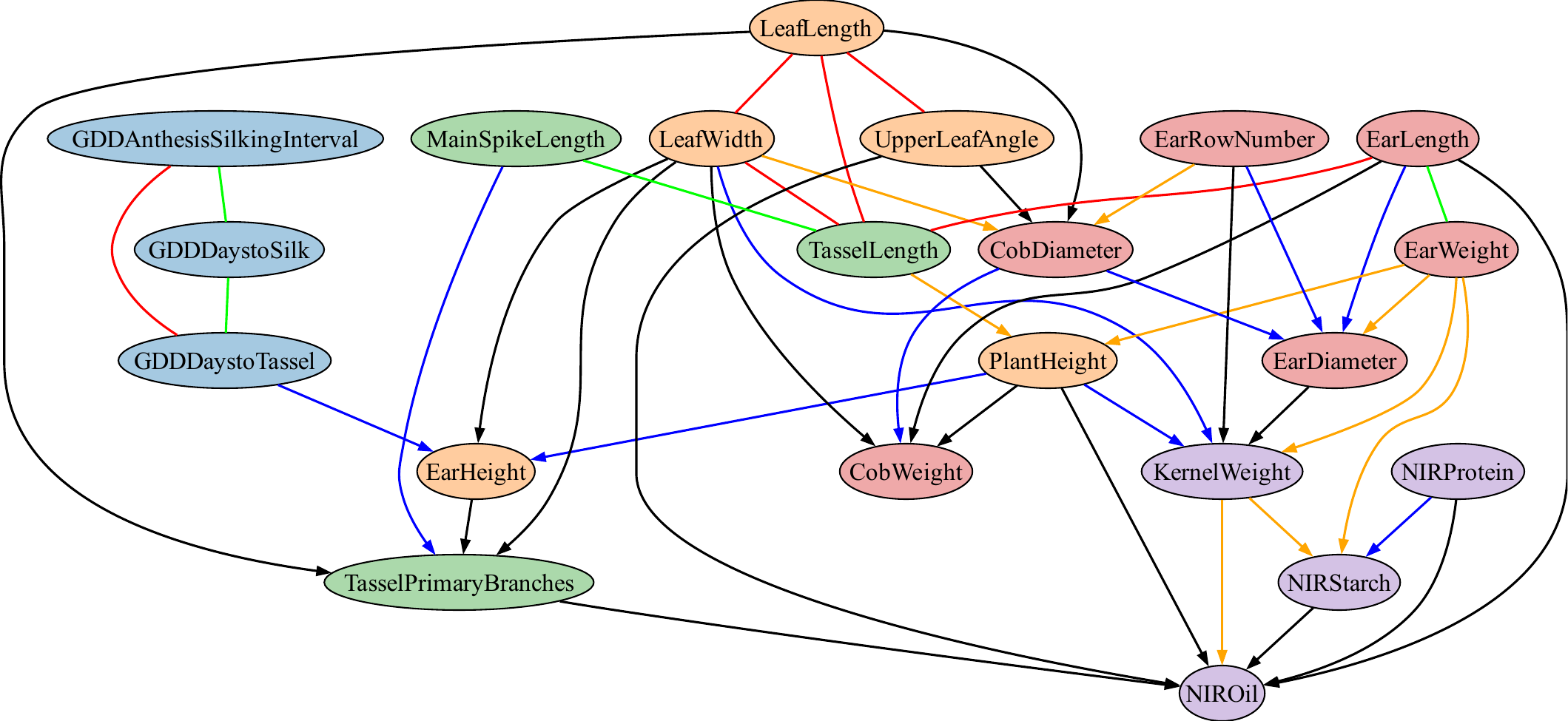}
    \subcaption{CDNOD PDAG output with domain index omitted. Edge colors interpreted as in~\cref{subsec:fly_female}.}
\end{minipage}

\vspace{2em}

\begin{minipage}[b]{0.5\textwidth}
    \centering
    \begin{tabular}{cc}
        \hline
        \textbf{Edge type} & \textbf{Precision on adjacencies} \\
        \hline
\textbf{Oriented} & $52.8\%= (\textcolor{blue}{ 10 }+\textcolor{orange}{ 9 })/(\textcolor{blue}{ 10 }+\textcolor{orange}{ 9 }+  17 )$ \\
\textbf{Unoriented} & $40.0\%=\textcolor{green}{  4  }/(\textcolor{green}{  4  }+\textcolor{red}{  6   })$\\
        \hline
    \end{tabular}
    \subcaption{Precision on the recovered adjacencies.}
\end{minipage}
\begin{minipage}[b]{0.4\textwidth}
    \centering
    \begin{tabular}{c}
        \hline
        \textbf{Recall on S ancestors} \\
        \hline
$4.4\%= 6/(17 \times 16 / 2)$\\
        \hline
    \end{tabular}
    \subcaption{Recall on the S ancestors. Denominator is the number of unoriented edges that should appear within the clique of S ancestors. Nominator is the number of these unoriented edges that indeed appear in the output.}
\end{minipage}

\vspace{3em}

\caption{Result on the \textbf{Panzea} dataset, where ``PlantEnvironment'' is treated as the domain index. Different colors of nodes correspond to their different categories: Phenology / Developmental Timing, Plant Architecture (Vegetative Structure), Tassel (Male Inflorescence) Traits, Ear \& Cob Morphology (Female Inflorescence Structure), and Kernel Composition (NIR-based Quality Traits). We use 4 different environments: maize planted in Aurora, NY in summers 2006 and 2007, in Clayton, NC in summer 2006, and in Ponce, PR in winter 2006. In total there are 22 variables (including the plant environment), and 9,326 samples. Since these are the numerical/ordinal variables, we use PC with Fisher's Z-test.}
\end{figure}

\newpage

\begin{figure}[!h]
\centering

\vspace{12em}

\hspace{0.05\textwidth}
\begin{minipage}{0.35\textwidth}
    \centering
    \includegraphics[width=\linewidth]{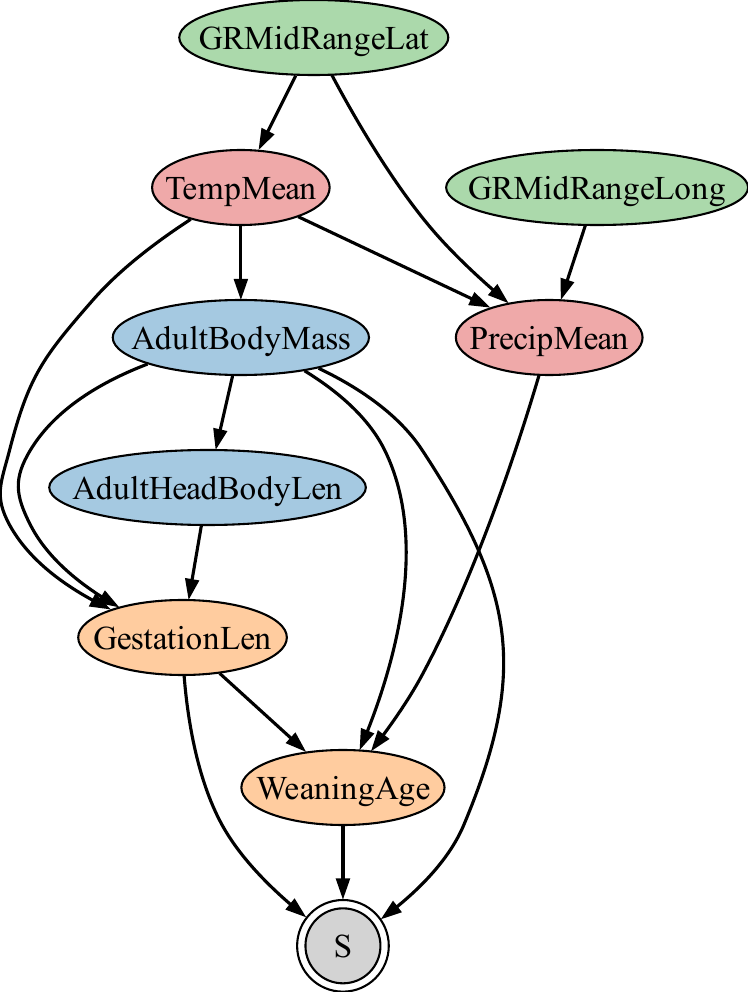}
    \subcaption{ChatGPT annotated pseudo ground truth.}
\end{minipage}
\hfill
\begin{minipage}{0.45\textwidth}
    \centering
    \includegraphics[width=\linewidth]{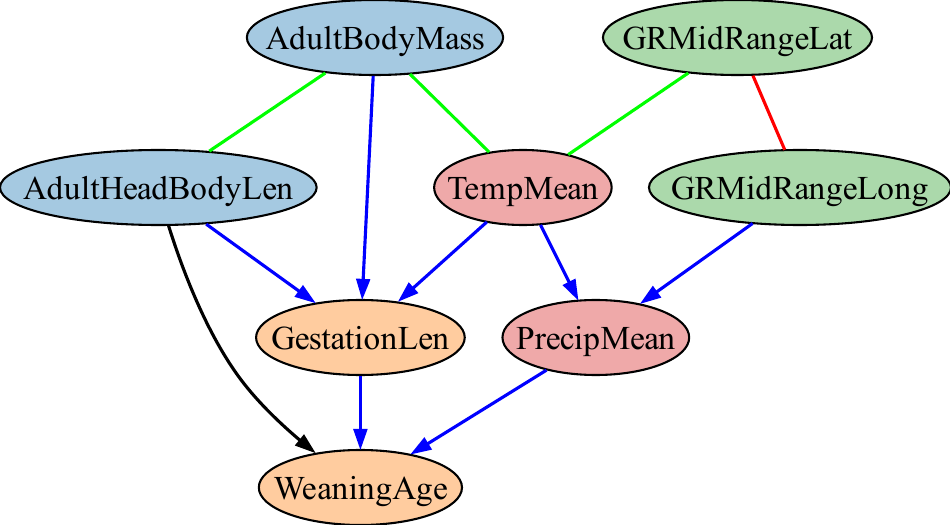}
    \subcaption{PC CPDAG output, with edge colors interpreted in the same way as in~\cref{subsec:fly_female}.}
\end{minipage}
\hspace{0.05\textwidth}

\vspace{2em}

\begin{minipage}[b]{0.5\textwidth}
    \centering
    \begin{tabular}{cc}
        \hline
        \textbf{Edge type} & \textbf{Precision on adjacencies} \\
        \hline
\textbf{Oriented} & $87.5\%= (\textcolor{blue}{ 7 }+\textcolor{orange}{ 0 })/(\textcolor{blue}{ 7 }+\textcolor{orange}{ 0 }+  1 )$ \\
\textbf{Unoriented} & $75.0\%=\textcolor{green}{  3  }/(\textcolor{green}{  3  }+\textcolor{red}{  1   })$\\
        \hline
    \end{tabular}
    \subcaption{Precision on the recovered adjacencies.}
\end{minipage}
\begin{minipage}[b]{0.4\textwidth}
    \centering
    \begin{tabular}{c}
        \hline
        \textbf{Recall on S ancestors} \\
        \hline
$14.3\%= 4/(8 \times 7 / 2)$\\
        \hline
    \end{tabular}
    \subcaption{Recall on the S ancestors. Denominator is the number of unoriented edges that should appear within the clique of S ancestors. Nominator is the number of these unoriented edges that indeed appear in the output.}
\end{minipage}

\vspace{3em}

\caption{Result on the \textbf{PanTHERIA} dataset. Different colors of nodes correspond to their different categories: Body Size Morphology, Life History Timing, Geographic Range, and Climate Environment. We use 8 variables that will not lead to too many missingness. The sample size used is 626. Since these are the numerical/ordinal variables, we use PC with Fisher's Z-test.}
\end{figure}

\newpage

\begin{figure}[!h]
\centering

\vspace{3em}

\begin{minipage}{0.8\textwidth}
    \centering
    \includegraphics[width=\linewidth]{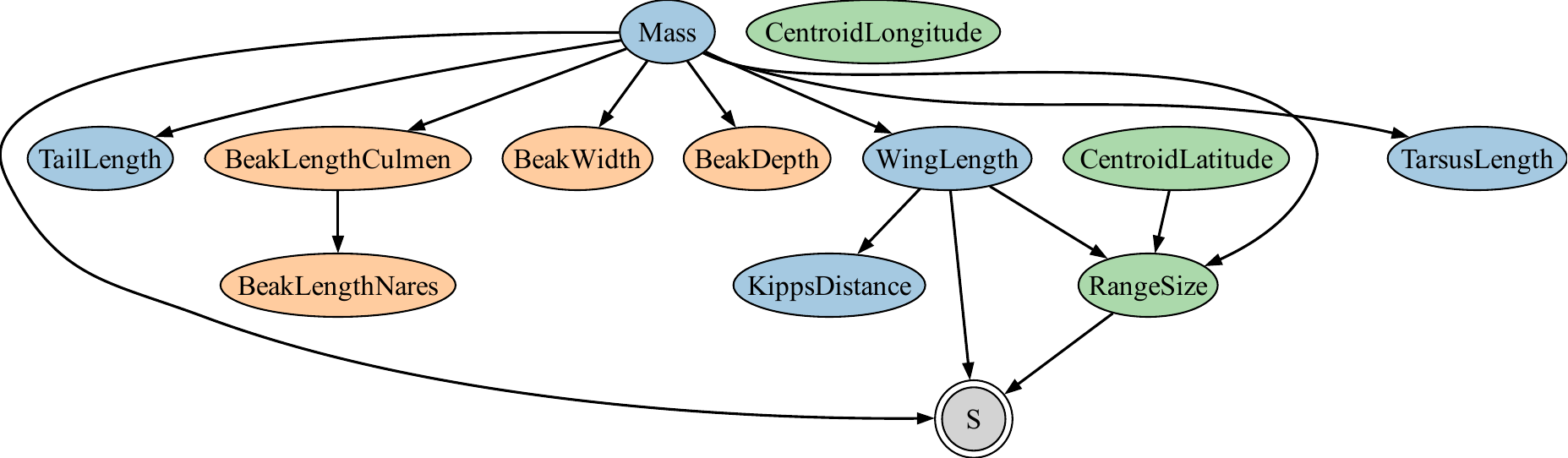}
    \subcaption{ChatGPT annotated pseudo ground truth.}
\end{minipage}

\vspace{2em}

\begin{minipage}{0.45\textwidth}
    \centering
    \includegraphics[width=\linewidth]{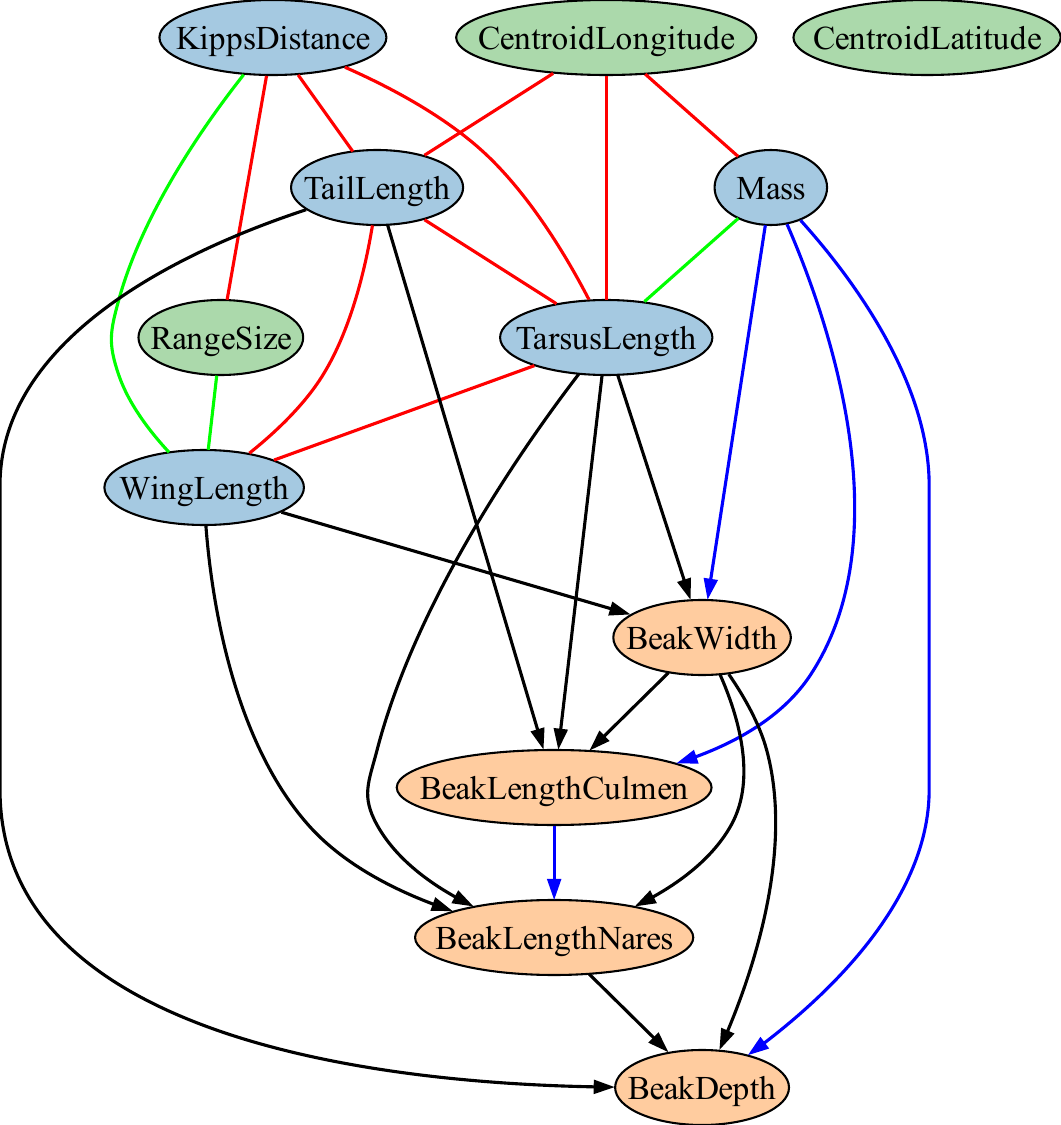}
    \subcaption{PC CPDAG output, with edge colors interpreted in the same way as in~\cref{subsec:fly_female}.}
\end{minipage}

\vspace{2em}

\begin{minipage}[b]{0.5\textwidth}
    \centering
    \begin{tabular}{cc}
        \hline
        \textbf{Edge type} & \textbf{Precision on adjacencies} \\
        \hline
\textbf{Oriented} & $26.7\%= (\textcolor{blue}{ 4 }+\textcolor{orange}{ 0 })/(\textcolor{blue}{ 4 }+\textcolor{orange}{ 0 }+  11 )$ \\
\textbf{Unoriented} & $25.0\%=\textcolor{green}{  3  }/(\textcolor{green}{  3  }+\textcolor{red}{  9   })$\\
        \hline
    \end{tabular}
    \subcaption{Precision on the recovered adjacencies.}
\end{minipage}
\begin{minipage}[b]{0.4\textwidth}
    \centering
    \begin{tabular}{c}
        \hline
        \textbf{Recall on S ancestors} \\
        \hline
$16.7\%= 1/(4 \times 3 / 2)$\\
        \hline
    \end{tabular}
    \subcaption{Recall on the S ancestors. Denominator is the number of unoriented edges that should appear within the clique of S ancestors. Nominator is the number of these unoriented edges that indeed appear in the output.}
\end{minipage}

\vspace{3em}

\caption{Result on the \textbf{AVONET} dataset. Different colors of nodes correspond to their different categories: Body Size Morphology, BeakSize Morphology, and Geographic Range. There are 12 variables and 10,950 samples. Since these are the numerical/ordinal variables, we use PC with Fisher's Z-test. Other variables, like diet, may be useful and informative when included. But we leave them because they are categorical coded.}
\end{figure}

\newpage

\begin{figure}[!h]
\centering

\vspace{1em}

\begin{minipage}{0.7\textwidth}
    \centering
    \includegraphics[width=\linewidth]{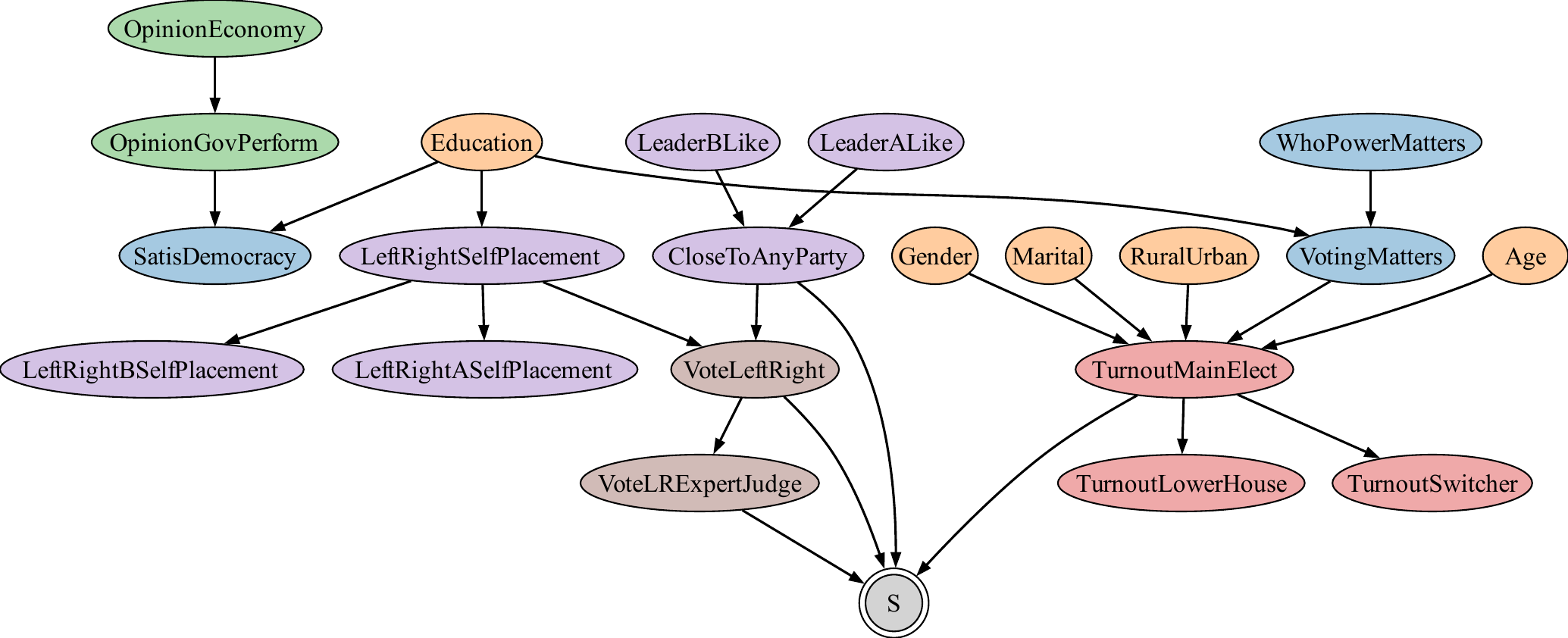}
    \subcaption{ChatGPT annotated pseudo ground truth.}
\end{minipage}

\vspace{2em}

\begin{minipage}{0.95\textwidth}
    \centering
    \includegraphics[width=\linewidth]{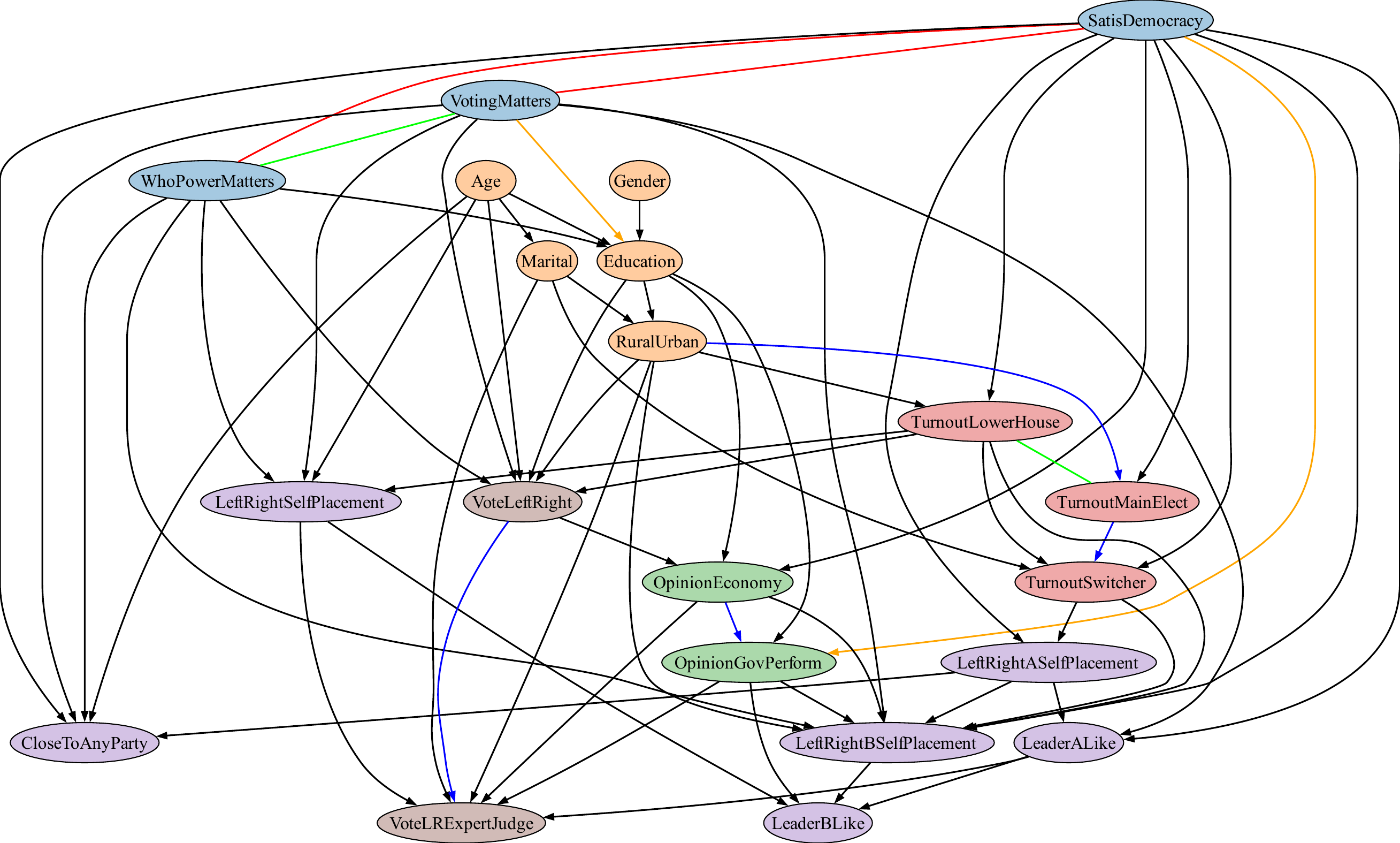}
    \subcaption{CDNOD PDAG output with domain index omitted. Edge colors interpreted as in~\cref{subsec:fly_female}.}
\end{minipage}

\vspace{2em}

\begin{minipage}[b]{0.5\textwidth}
    \centering
    \begin{tabular}{cc}
        \hline
        \textbf{Edge type} & \textbf{Precision on adjacencies} \\
        \hline
\textbf{Oriented} & $9.8\%= (\textcolor{blue}{ 4 }+\textcolor{orange}{ 2 })/(\textcolor{blue}{ 4 }+\textcolor{orange}{ 2 }+  55 )$ \\
\textbf{Unoriented} & $50.0\%=\textcolor{green}{  2  }/(\textcolor{green}{  2  }+\textcolor{red}{  2   })$\\
        \hline
    \end{tabular}
    \subcaption{Precision on the recovered adjacencies.}
\end{minipage}
\begin{minipage}[b]{0.4\textwidth}
    \centering
    \begin{tabular}{c}
        \hline
        \textbf{Recall on S ancestors} \\
        \hline
$1.1\%= 1/(14 \times 13 / 2)$\\
        \hline
    \end{tabular}
    \subcaption{Recall on the S ancestors. Denominator is the number of unoriented edges that should appear within the clique of S ancestors. Nominator is the number of these unoriented edges that indeed appear in the output.}
\end{minipage}

\vspace{2em}

\caption{Result on the \textbf{CSES} dataset, where ``Country'' is treated as the domain index. Different colors of nodes correspond to their different categories: Democracy, Demographics, Economy, Vote Engagement (Turnour Behavior), Ideology, Left/Right Self Placements, and Vote Ideology. We use data from 5 countries: Germany, India, Switzerland, Netherlands, and United States of America. In total there are 22 variables (including country), and 15,269 samples. We use PC with Fisher's Z-test.}
\end{figure}

\newpage

\begin{figure}[!h]
\centering

\vspace{9em}

\hspace{0.1\textwidth}
\begin{minipage}{0.3\textwidth}
    \centering
    \includegraphics[width=\linewidth]{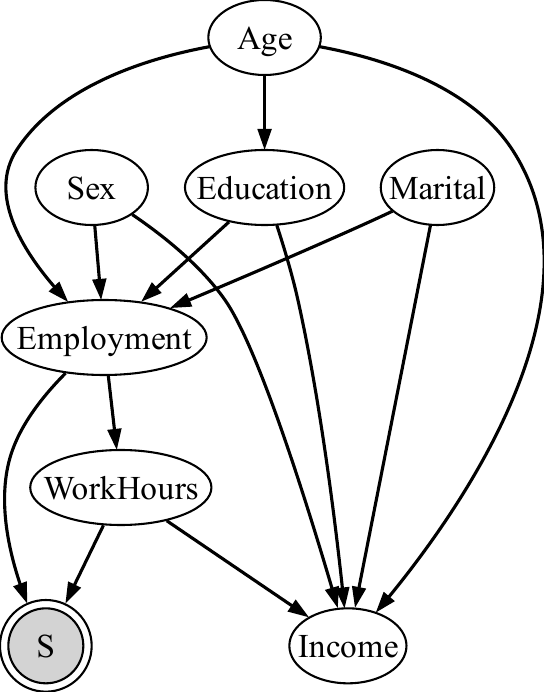}
    \subcaption{ChatGPT annotated pseudo ground truth.}
\end{minipage}
\hfill
\begin{minipage}{0.3\textwidth}
    \centering
    \includegraphics[width=\linewidth]{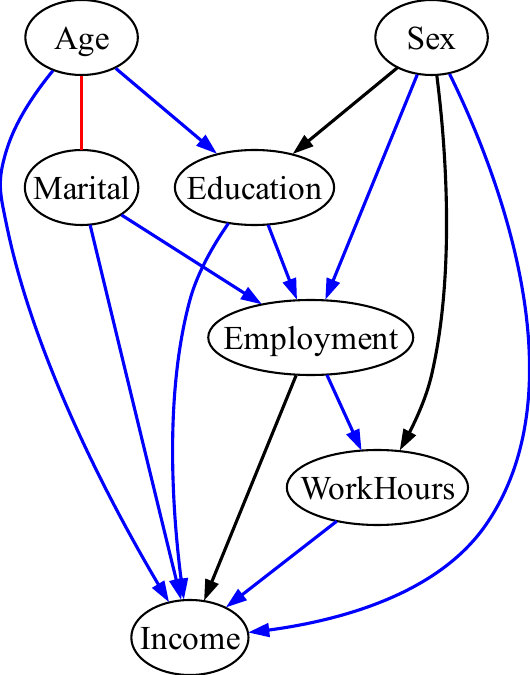}
    \subcaption{PC CPDAG output, with edge colors interpreted in the same way as in~\cref{subsec:fly_female}.}
\end{minipage}
\hspace{0.1\textwidth}

\vspace{2em}

\begin{minipage}[b]{0.5\textwidth}
    \centering
    \begin{tabular}{cc}
        \hline
        \textbf{Edge type} & \textbf{Precision on adjacencies} \\
        \hline
\textbf{Oriented} & $76.9\%= (\textcolor{blue}{ 10 }+\textcolor{orange}{ 0 })/(\textcolor{blue}{ 10 }+\textcolor{orange}{ 0 }+  3 )$ \\
\textbf{Unoriented} & $0.0\%=\textcolor{green}{  0  }/(\textcolor{green}{  0  }+\textcolor{red}{  1   })$\\
        \hline
    \end{tabular}
    \subcaption{Precision on the recovered adjacencies.}
\end{minipage}
\begin{minipage}[b]{0.4\textwidth}
    \centering
    \begin{tabular}{c}
        \hline
        \textbf{Recall on S ancestors} \\
        \hline
$6.7\%= 1/(6 \times 5 / 2)$\\
        \hline
    \end{tabular}
    \subcaption{Recall on the S ancestors. Denominator is the number of unoriented edges that should appear within the clique of S ancestors. Nominator is the number of these unoriented edges that indeed appear in the output.}
\end{minipage}

\vspace{3em}

\caption{Result on the \textbf{PUMS} dataset. Most of the available variables are categorical; we curate 7 of them that are continuous/ordinal. We use the data collected in 2024 in 50 states. The total sample size is 367,858. We use PC with Fisher's Z-test. In this result, all but one edges are oriented, suggesting the less likelihood of selection in evolution (under our model class).}
\end{figure}

\end{document}